%% file: main.tex
\documentclass{article}

\usepackage[nonatbib, preprint]{neurips_2026}

\usepackage[utf8]{inputenc} 
\usepackage[T1]{fontenc}    
\usepackage{url}            
\usepackage{booktabs}       
\usepackage{amsfonts}       
\usepackage{nicefrac}       
\usepackage{microtype}      
\usepackage[dvipsnames]{xcolor}         

\usepackage[
    backend=biber,
    style=phys,
    sorting=none,
    sortcites=true,
    natbib=true,
    url=false, 
    doi=true,
    eprint=false,
    isbn=false,
    giveninits=true,
    date=year,
    maxcitenames=2,
    maxbibnames=5,
    hyperref=true
]{biblatex}

\DeclareFieldFormat
  [article, inbook, incollection, inproceedings, patent, thesis, unpublished]
  {title}{#1\isdot}

\AtEveryBibitem{
\ifentrytype{article}{
    \clearfield{number}
    \clearfield{series}
}{}
}

\renewbibmacro{in:}{%
  \ifentrytype{inproceedings}{}{\printtext{\bibstring{in}\intitlepunct}}}

\DeclareNameAlias{sortname}{given-family}

\usepackage{booktabs}
\usepackage{amsmath}
\usepackage{amssymb}
\usepackage{amsthm}
\usepackage{mathtools}
\usepackage{scalerel,stackengine} 
\usepackage{graphicx}
\usepackage{float}
\usepackage{derivative}
\usepackage{enumitem}
\usepackage{bm}
\usepackage{upgreek}
\usepackage{derivative}
\usepackage{tikz}
\usetikzlibrary{tikzmark}
\usepackage{algpseudocodex}
\usepackage{algorithm}
\usepackage{siunitx}
\usepackage{afterpage}
\usepackage{subcaption}
\usepackage{wrapfig}
\usepackage[colorlinks=true,citecolor=BlueViolet,linkcolor=BlueViolet]{hyperref}

\DeclareSIUnit\hartree{\text{\ensuremath {E}}_{\textup{h}}}
\newcommand{\printmem}[2]{\SI[round-mode=places, round-precision=#1, scientific-notation = engineering, exponent-to-prefix]{#2}{\byte}}

\newcommand{\dhstrut}[2]{\rule[-#1]{0pt}{#1}\rule[0pt]{0pt}{#2}}  

\newtheoremstyle{bfnote}%
{}{}%
{\itshape}{}%
{\bfseries}{.}%
{ }%
{\thmname{#1}\thmnumber{ #2}\thmnote{ (#3)}}

\theoremstyle{bfnote}

\newtheorem{thm}{Theorem}[section]
\newtheorem{definition}[thm]{Definition}
\newtheorem{cor}[thm]{Corollary}
\newtheorem{exmp}[thm]{Example}

\newcommand{\trans}{\mspace{-0mu}\mathsf{T}}
\newcommand{\norm}[1]{\lVert#1\rVert}
\newcommand{\weights}{\bm{\upalpha}}
\newcommand{\gridmcr}{\mathcal{M}_{\mathrm{grid}}}
\newcommand{\kernelmcr}{\mathcal{M}_{\mathrm{kernel}}}
\newcommand{\hatcal}[1]{\widehat{\mathcal{#1}}}
\newcommand{\hatbold}[1]{\widehat{\mathbf{#1}}}
\newcommand{\hatboldsuper}[2]{\widehat{\mathbf{#1}^{#2}}}
\newcommand{\hatboldsupertilde}[2]{\widehat{\bar{\mathbf{#1}}^{#2}}}
\newcommand{\condition}[1]{\frac{\lambda_{\mathrm{max}}(#1)}{\lambda_{\mathrm{min}}(#1)}}
\DeclareMathOperator{\trace}{tr}

\stackMath
\newcommand\reallywidehat[1]{%
\savestack{\tmpbox}{\stretchto{%
  \scaleto{%
    \scalerel*[\widthof{\ensuremath{#1}}]{\kern.5pt\mathchar"0362\kern.5pt}%
    {\rule{0ex}{\textheight}}
  }{\textheight}%
}{2.4ex}}%
\stackon[-6.9pt]{#1}{\tmpbox}%
}
\makeatletter
\renewenvironment{dcases}[1][l]{\matrix@check\dcases\env@dcases{#1}}{\endarray\right.}
\def\env@dcases#1{%
  \let\@ifnextchar\new@ifnextchar
  \left\lbrace\def\arraystretch{1.2}%
  \array{@{}#1@{\quad}l@{}}}
\makeatother

\input{abbreviations.tex}

\title{Don’t Get Your Kroneckers in a Twist: Gaussian Processes on High-Dimensional Incomplete Grids}

\author{%
  Mads Greisen Højlund \\
  Department of Chemistry \\
  Aarhus University \\
  \texttt{madsgh@chem.au.dk} \\
  \And
  August Smart Lykke-Møller \\
  Department of Chemistry \\
  Aarhus University \\
  \texttt{alm@chem.au.dk} \\
  \AND
  Henry Moss \\
  School of Mathematical Sciences \\
  Lancaster University \\ 
  \texttt{henry.moss@lancaster.ac.uk} \\
  \And
  Ove Christiansen \\
  Department of Chemistry \\
  Aarhus University \\
  \texttt{ove@chem.au.dk} \\
}

\begin{document}

\maketitle

\begin{abstract}
  We introduce CUTS-GPR, a new method for performing numerically exact \ac{gpr}
  in high-dimensional settings. 
  The key component of CUTS-GPR is an extremely fast kernel matrix-vector product, 
  which exhibits near-linear or even linear scaling with the amount of training data, $N$,
  and low-order polynomial scaling with dimensionality, $D$.
  This is obtained by combining an \textit{additive kernel} with an 
  \textit{incomplete grid} and exploiting the resulting structure of the kernel matrix.
  We demonstrate the scalability of the matrix-vector product by running benchmarks with
  billions of data points and thousands of dimensions. 
  Full GPR calculations, including hyperparameter optimization, are completed in a matter of hours for $N = \num{447 265}$ and $D = 24$. We demonstrate that our CUTS-GPR enables Bayesian modeling of high-dimensional potential energy surfaces -- a longstanding challenge in computational chemistry.
\end{abstract}

\acresetall

\section{Introduction}

\Ac{gpr} is a powerful nonparametric, probabilistic machine learning method that enables 
predictions with quantified uncertainty and modeling of complex data. 
For many applications in science and engineering -- where accuracy, interpretability, 
and reliable uncertainty estimates are essential -- \ac{gpr} is highly attractive. 
It is particularly well suited to settings where data is expensive and error control is critical. 
However, the high computational cost of \ac{gpr} remains a major limitation. 
For $N$ training points, conventional \ac{gpr} implementations require $\mathcal{O}(N^3)$ operations and $\mathcal{O}(N^2)$ storage, which becomes prohibitive for large datasets. 
The computational cost of \ac{gpr} becomes particularly severe in high-dimensional spaces. As the dimensionality, $D$, increases, 
the data and computation requirements quickly compromise practicality -- this is the \textit{curse of dimensionality}.
\Ac{gpr} could potentially see much broader adoption if (near-)linear scaling with $N$ and low scaling with $D$
were achievable at the same time.

Near-linear scaling with $N$ can in fact be obtained if the training
data is located on a complete Cartesian product grid \citep{saatciScalableInferenceStructured2012, wilsonFastKernelLearning2014, gilboaScalingMultidimensionalInference2015, flaxmanFastKroneckerInference2015,ishidaHierarchicalAdditiveInteraction2025}, but
$N$ in turn scales exponentially with $D$ and the curse of dimensionality remains.
In many scientific applications (such as molecular 
simulations and materials modeling) $D$ can be large, 
so existing grid-based \ac{gpr} methods are not sufficient.
With CUTS-GPR, we show that grid structure can be exploited to drastically lower computational cost
even though the grid is not complete.
Incomplete grids are useful for sampling high-dimensional functions such as \acp{pes}, which in turn play a central role in understanding and predicting chemical processes.
CUTS-GPR thus paves the way for truly high-dimensional
applications in science and elsewhere.
In particular, we make the following contributions:
\begin{enumerate}[label=(\roman*), nosep]
  \item We introduce a class of structured, incomplete grids suitable for high-dimensional applications. 
  The grid includes a reference point and a set of low-dimensional
  subgrids or cuts (see Figure~\ref{fig:subgrids}). In the simplest case, we include all subgrids of order $0, 1, 2, \ldots, \alpha$, 
  in which case the amount of training data scales as $\mathcal{O}(D^\alpha)$.
  \item Combining this type of grid with an additive kernel, we derive and implement a 
  kernel \ac{mvp} with computational complexity only 
  $\mathcal{O}(n \alpha N)$ (assuming $n$ grid points per dimension). Focusing on $D$, the complexity is
  only $\mathcal{O}(D^{\alpha})$.
  The key to achieving this remarkable performance is the careful use of structure in the kernel matrix.
  No approximation is involved in the kernel \ac{mvp}.
  \item The availability of a fast, scalable \ac{mvp} in turn 
  allows predictions and hyperparameter optimization using 
  iterative numerical techniques. This enables high-dimensional \ac{gpr} with 
  (near-)linear scaling with $N$ and polynomial scaling with $D$, 
  which is a major breakthrough for many applications of \ac{gpr}.
 \end{enumerate}

\begin{figure}[t]
  \centering

  \includegraphics[width=1\linewidth, clip, trim=0.5cm 0cm 0.5cm 0.7cm]{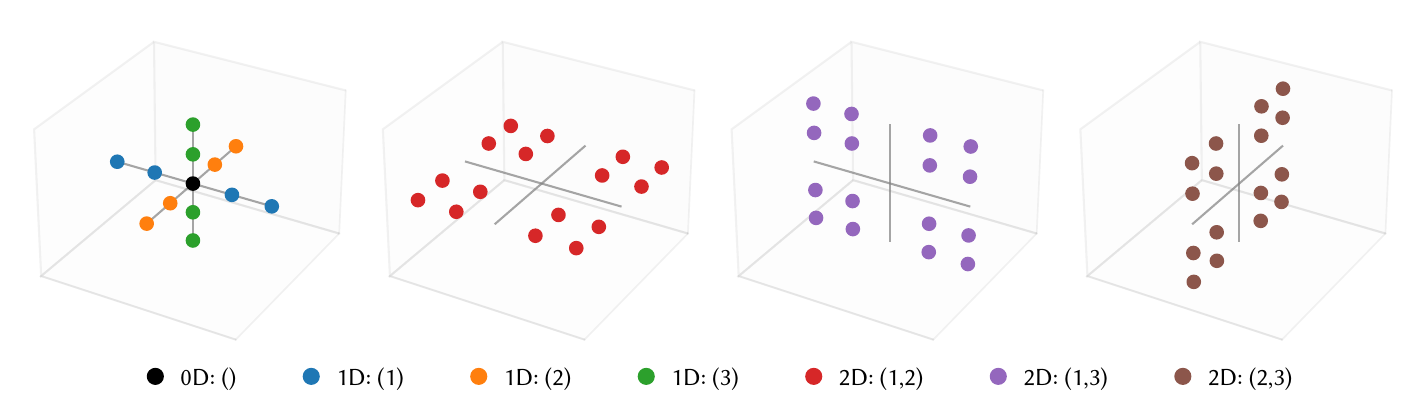}

  \caption{An incomplete grid in three dimensions with a cut level of $\alpha = 2$. 
  The grid includes the reference point (0D), 1D cuts and 2D cuts.
  A subgrid is defined by its \textit{mode combination} (MC), 
  which is simply the list of dimensions or modes that are displaced from the reference.
  The red 2D cut labelled by $(1,2)$ thus contains 
  points displaced along dimensions 1 and 2.
  An incomplete grid is defined by its \textit{mode combination range} (MCR),
  which is simply the list of MCs contained.}
  \label{fig:subgrids}
\end{figure}

\subsection{Related work}
A substantial literature has sought to address the $O(N^3)$ computational cost of conventional GPR.
Sparse inducing-point methods, such as FITC \citep{snelsonSparseGaussianProcesses2005}, 
yield approximate posteriors with improved 
scalability \citep{snelsonSparseGaussianProcesses2005, titsiasVariationalLearningInducing2009, hensmanGaussianProcessesBig2013}.
In a complementary direction, structured GP models
leverage Toeplitz or Kronecker structure, enabling exact inference
at much reduced complexity \citep{saatciScalableInferenceStructured2012, wilsonFastKernelLearning2014, gilboaScalingMultidimensionalInference2015, flaxmanFastKroneckerInference2015,ishidaHierarchicalAdditiveInteraction2025}. Kronecker-based method
are suitable for multidimensional data, but their reliance on complete grids leads to exponential scaling with dimensionality.
The three strategies may also be combined as in KISS-GP \citep{wilsonKernelInterpolationScalable2015}
or SKIP \citep{gardnerProductKernelInterpolation2018}, the latter of which improves
scalability with $D$ as well as $N$.
Despite these advances, large-scale and exact \ac{gpr} for truly high-dimensional problems remains unresolved.

\section{Background}

\subsection{Gaussian process regression}
The training data is a set of $D$-dimensional inputs 
$\mathcal{X} = \{ \mathbf{x}_i\}_{i = 1}^{N}$ along with the
corresponding noisy outputs, which are collected in a vector $\mathbf{y} \in \mathbb{R}^{N}$.
The task is now to predict the unknown outputs at a set of test points
$\mathcal{X}_{*} = \{ \mathbf{x}_i^{*} \}_{i = 1}^{N_{*}}$,
and for this purpose we need the training--training, training--test and test--test
covariance matrices, which are defined in terms of the kernel function, $k$:
\begin{alignat}{4}
  \mathbf{K}      = k(\mathcal{X}    , \mathcal{X}),\quad
  \mathbf{K}_{*}  = k(\mathcal{X}    , \mathcal{X}_{*}),\quad
  \mathbf{K}_{**} = k(\mathcal{X}_{*}, \mathcal{X}_{*}).
\end{alignat}
We also introduce the noisy training--training covariance matrix, 
$\mathbf{C}  = \mathbf{K} + \sigma^2 \mathbf{I}$, and
the weights, $\weights = \mathbf{C}^{-1} \mathbf{y}$.
With these definitions in place, we can compute the predictive 
mean and covariance as 
\begin{equation}
  \bm{\mu} = \mathbf{K}_{*}^{\trans} \mathbf{C}^{-1} \mathbf{y} = \mathbf{K}_{*}^{\trans} \weights,\quad
  \mathbf{\Sigma} = \mathbf{K}_{**} -  \mathbf{K}_{*}^{\trans} \mathbf{C}^{-1} \mathbf{K}_{*}. \label{eq:predictive_mean_variance}
\end{equation}
Any hyperparameter, $\theta$, can be determined by
maximizing the \ac{mll},
\begin{align} \label{eq:mll}
  \mathcal{L} 
  = - \frac{1}{2} \left( \mathbf{y}^{\trans} \weights + \log{|\mathbf{C}|} + N \log{2 \pi} \right).
\end{align}
For optimization using a gradient-based algorithm, we also require the derivatives:
\begin{align} \label{eq:mll_gradient}
  \pdv{\mathcal{L}}{\theta} 
  = \frac{1}{2} \weights^{\trans} \pdv{\mathbf{C}}{\theta} \weights
  - \frac{1}{2} \trace\left( \mathbf{C}^{-1} \pdv{\mathbf{C}}{\theta} \right). 
\end{align}
\Ac{gpr} thus requires the following ingredients:
\begin{enumerate}[label=(\roman*), nosep]
 \item Linear solves with $\mathbf{C}$, for example $\weights = \mathbf{C}^{-1} \mathbf{y}$;
 \item The log-determinant, $\log{|\mathbf{C}|} = \trace \log(\mathbf{C})$;
 \item The trace term, $\trace ( \mathbf{C}^{-1} \partial \mathbf{C} / \partial \theta )$;
 \item The quadratic term, $\weights^{\trans} (\partial \mathbf{C} / \partial \theta) \weights$.
\end{enumerate}
Conventional implementations use the Cholesky decomposition of $\mathbf{C}$,
which takes $\mathcal{O}(N^3)$ time to compute and $\mathcal{O}(N^2)$ space to store. 
Having obtained the decomposition, one can compute the quantities listed above directly in at most $\mathcal{O}(N^2)$ time.
Often $\mathbf{K}$ (and thus $\mathbf{C}$) allow fast \acp{mvp},
which is sufficient for numerically exact predictions
and accurate stochastic estimates of the \ac{mll} and the \ac{mll} gradient
\citep{gardnerGPyTorchBlackboxMatrixMatrix2018, wengerPreconditioningScalableGaussian2022}.
Appendix~\ref{appendix:gpr_mvp} recalls the main components of \ac{mvp}-based \ac{gpr},
including the use of preconditioning.
In the following, we focus on developing a fast, low-scaling kernel \ac{mvp}.

\subsection{Complete grids} \label{sec:complete_grids}

Throughout the paper, we are concerned with $D$-dimensional inputs $\mathbf{x}$ that are located on a grid.
We will start by defining a \textit{complete} grid, by which we mean
a Cartesian product grid like
\begin{equation} \label{eq:complete_grid_def}
  \mathcal{X} = \mathcal{X}^{(1)} \times \cdots \times \mathcal{X}^{(D)} \subset \mathbb{R}^D. 
\end{equation}
Each $\mathcal{X}^{(d)}$ is a 1D grid (not necessarily regular) of size $n_d$, and the full grid
has size $N = |\mathcal{X}| = n_1 n_2 \cdots n_D$. It is natural in this setting to index
each 1D grid by an index set $\mathcal{I}^{(d)} = \{0, 1, \ldots, n_d - 1 \}$ and each multidimensional grid point
by a multi-index $\mathbf{i} = (i_1, \ldots, i_D) \in \mathcal{I} = \mathcal{I}^{(1)} \times \cdots \times \mathcal{I}^{(D)}$.
It is well known
that the combination of a complete grid and a separable kernel,
\begin{align} \label{eq:separable_kernel}
    k(\mathbf{x}, \tilde{\mathbf{x}}) = \prod_{m=1}^{D} k^{(m)} \big( x^{(m)}, \tilde{x}^{(m)} \big),
\end{align}
enables \ac{gpr} with near-linear complexity \citep{saatciScalableInferenceStructured2012, wilsonFastKernelLearning2014, gilboaScalingMultidimensionalInference2015, flaxmanFastKroneckerInference2015}.
This is due to the fact that the kernel matrix obtained by evaluating \eqref{eq:separable_kernel} over a complete grid
is a Kronecker product. Specifically, one finds that
\begin{align} \label{eq:separable_kernel_complete_grid}
    \mathbf{K} = \mathbf{K}^{(1)} \otimes \mathbf{K}^{(2)} \otimes \cdots \otimes \mathbf{K}^{(D)},
    \quad \mathbf{K}^{(m)} \in \mathbb{R}^{n_m \times n_m},
\end{align}
where the matrices $\mathbf{K}^{(m)}$ are called base matrices.
The key properties of Kronecker products are outlined in Appendix \ref{appendix:properties_of_kronecker_products}, 
along with some implications for low-scaling \ac{gpr}.
One can, for instance, compute inverse kernel \acp{mvp} with a computational cost of only
$\mathcal{O}\big( (n_1 + \cdots + n_D) N \big) = \mathcal{O}(nDN)$, 
assuming 1D grids of identical size $n$.
In spite of the attractive computational complexity, there
are two major downsides to this approach: (1) The separable kernel is often a poor choice in high dimension; and (2)
the $N$ required to preserve modelling accuracy scales exponentially with the number of dimensions, thus limiting applications to $D < 10$ or so (depending on $n$).

\subsection{Additive kernels} \label{sec:additive_kernels}

A more attractive kernel format for high-dimensional applications is the \textit{additive} kernel \citep{duvenaudAdditiveGaussianProcesses2011, durrandeAdditiveCovarianceKernels2012, luAdditiveGaussianProcesses2022}, 
which is typically defined by a maximum interaction order, $\omega$:
\begin{align} 
  k =
    \sigma_0^2 + \sigma_1^2 \sum_{m} k^{(m)} + \sigma_2^2 \sum_{\mathclap{m < m'}} k^{(m)} k^{(m')}
     + \cdots + 
    \sigma_{\!\omega}^2 \sum_{\mathclap{m_1 < m_2 < \cdots < m_{\omega}}}  k^{(m_1)} k^{(m_2)} \cdots k^{(m_\omega)}.\label{eq:max_order_kernel_def}
\end{align}
The hyperparameters $\sigma_{k}^2$ are called order variances and control
the relative importance of different interaction orders.
In addition, the 1D kernels (base kernels) may each depend on one or more hyperparameters.
For definiteness, we will assume that the 1D kernels 
each depend on a \textit{length scale}, $\ell^{(m)}$.
If $\omega = D$, the kernel is \textit{saturated} and includes all possible interactions.
Such a kernel is highly descriptive, but it is usually not possible or necessary 
to include high-order terms, as shown by \citet{luAdditiveGaussianProcesses2022}
who introduced the \ac{oak}.
Compared to a generic additive kernel, the \ac{oak} approach includes an 
extra step where the 1D kernels are orthogonalized or centered \citep{durrandeAdditiveCovarianceKernels2012},
which ensures an identifiable model \citep{luAdditiveGaussianProcesses2022}.

\citet{ishidaHierarchicalAdditiveInteraction2025} introduce
a more general and flexible class of additive kernels, which we write as
\begin{align} \label{eq:sop_kernel_def}
  k(\mathbf{x}, \tilde{\mathbf{x}}) 
    &= \sum_{\mathbf{m} \in \kernelmcr} \sigma_{|\mathbf{m}|}^2 \prod_{m \in \mathbf{m}} k^{(m)} \big( x^{(m)}, \tilde{x}^{(m)} \big)
\end{align}
where $\kernelmcr$ lists the interaction terms included in the kernel.
In addition, they argue that $\kernelmcr$
should be \textit{hierarchical}, but not necessarily of the simple form implied by \eqref{eq:max_order_kernel_def}. 
Comparing with Section~\ref{sec:complete_grids}, it is clear that \eqref{eq:sop_kernel_def} evaluates to a sum of Kronecker products 
when using a complete grid, but it is not at all obvious that this kind of structure can be exploited. 
However, \citet{ishidaHierarchicalAdditiveInteraction2025} prove the surprising fact that \eqref{eq:sop_kernel_complete_grid} 
can be diagonalized in closed form (assuming centered base kernels).
Their approach allows GPR with near-linear scaling and a flexible additive kernel for complete-grid datasets.
However, the exponential scaling with $D$ remains as a serious limitation.

\section{Incomplete grids} \label{sec:incomplete_grids}

To address the curse of dimensionality and motivated by our target applications in computational chemistry, we need a version of grid-based GPR that allows 
sampling only a subset of the full grid, referred to as an \textit{incomplete} grid. We 
consider $\widehat{N}$ inputs located on an incomplete grid $\hatcal{X} \subseteq \mathcal{X}$ indexed by the multi-indices $\hatcal{I} \subseteq \mathcal{I}$.
We will start by introducing a specific type of incomplete grid and later describe how it
relates to a more general class of grids.
The incomplete grid used in this paper is defined with respect to
a \textit{reference point} that we label by the zero multi-index, $\mathbf{i} = (0, 0, \ldots, 0)$.
The reference point can be chosen freely according to the problem at hand.
In addition to the reference, the grid includes a set of low-dimensional \textit{subgrids} or \textit{cuts}.
\begin{exmp} \label{example:subgrids}
  Consider an incomplete grid including the reference, all 1D subgrids and all 2D subgrids.
  With $D = 3$ this construction includes the points corresponding to
  the multi-indices
  \begin{subequations} \label{eq:indices_MCR_dim3_cut2}
    \begin{alignat}{3}
      \text{0D:} \quad &(0,0,0)  \nonumber \\
      \text{1D:} \quad &(a,0,0), \;\; &&(0,b,0), \;\; &&(0,0,c)  \nonumber \\
      \text{2D:} \quad &(a,b,0), \;\; &&(a,0,c), \;\; &&(0,b,c)  \nonumber 
    \end{alignat}
  \end{subequations}
  for all $0 < a < n_1$, $0 < b < n_2$ and $0 < c < n_3$ (see also Figure~\ref{fig:subgrids}).
\end{exmp}
Index sets like Example~\ref{example:subgrids} occur frequently in the tensor literature \citep[see e.g.][]{koldaTensorDecompositionsApplications2009}
and in vibrational structure theory \citep{christiansenSecondQuantizationFormulation2004}.
In both of these fields,
the pertinent dimensions are called \textit{modes}, which is a terminology that we
will use in the following.
Given a set of 1D grids, a subgrid is uniquely defined by its \ac{mc}, 
which is simply the list of modes with non-zero indices.
An incomplete grid is in turn defined by its \ac{mcr},
which is simply the list of \acp{mc}. Example~\ref{example:subgrids} thus corresponds to the \ac{mcr}
\begin{align}
  \gridmcr = \{ (),\; (1),\; (2),\; (3),\; (1,2),\; (1,3),\; (2,3) \}.
\end{align}
A formal definition of subgrids and incomplete grids is given in Appendix~\ref{appendix:incomplete_grid_formal_def}.
In the following, we exclusively consider incomplete grids $\hatcal{I}$ 
whose \ac{mcr} is \textit{hierarchical} or \ac{cuts}:
\begin{definition} \label{def:cuts}
  The set $\mathcal{M}$ is \acf{cuts} if
  $\mathbf{m}' \subset \mathbf{m} \in \mathcal{M}$ implies $\mathbf{m}' \in \mathcal{M}$.
\end{definition}
%
The \ac{mcr} framework is flexible and can be used to put more emphasis
on certain dimensions and less on others. However, it is often useful
to define a \textit{simple} \ac{mcr} that includes all cuts of order $0, 1, 2, \ldots, \alpha$.
This construction is obviously \ac{cuts} and allows definite computational 
complexities to be derived and stated concisely.
Assuming for simplicity that $n_1 = \cdots = n_d = n$, we find that an incomplete 
grid with maximum cut level $\alpha$ has size
\begin{align}
  \widehat{N} 
  &=    \sum_{k = 0}^{\alpha} \binom{D}{k} (n - 1)^k 
  \sim \mathcal{O} \! \left( \binom{D}{\alpha} (n - 1)^\alpha    \right) 
  \sim \mathcal{O} \! \left( \frac{D^\alpha n^{\alpha}}{\alpha!} \right).
\end{align}
Importantly, we see that $\widehat{N}$ grows polynomially with $D$, 
in contrast to the exponential growth of the complete grid. Thus, with $\alpha < D$, 
this data format allows us to mitigate the curse of dimensionality.
 
A cut-based grid 
with an \ac{mcr} that it \ac{cuts} 
is just one example of a grid that is \ac{cud}:
\begin{definition} \label{def:cud}
  A set of multi-indices $\hatcal{I}$ is \acf{cud} if
  \begin{equation}
    \mspace{10mu} (i_1, \ldots, i_d, \ldots, i_D) \in \hatcal{I}, \; i_d > 0
      \;\Rightarrow\; (i_1, \ldots, i_d - 1, \ldots, i_D) \in \hatcal{I}. \mspace{10mu} \nonumber
  \end{equation}
  $\hatcal{I}$ is also said to be downward closed.
\end{definition}
It turns out that being \ac{cud} is the crucial property
that enables a fast \ac{mvp}.
Moreover, the notion of \ac{cud} can be generalized compared to Definition~\ref{def:cud}
\citep{holzmullerFastSparseGrid2021} (see also Appendix~\ref{appendix:general_cud_framework}), which opens 
the possibility of scalable \ac{gpr} with other types of incomplete grids.

\section{Combining incomplete grids and additive kernels} \label{sec:incomplete_grids_and_sop_kernels}

We now combine the additive kernel with the incomplete grid from Section~\ref{sec:incomplete_grids}. To that end, we introduce the following convenient notation:
\begin{definition} \label{def:kronecker_product}
Consider the square matrices $\mathbf{A}^{\!(m)} \in \mathbb{R}^{n_m \times n_m}$ for $m = 1, \ldots, D$ and let $N = n_1 n_2 \cdots n_D$.
\begin{enumerate}[label=(\roman*)]
  \item We define the Kronecker product
  \begin{align}
    \mathbf{A} = \mathbf{A}^{\!(1)} \otimes \cdots \otimes \mathbf{A}^{\!(D)} \in \mathbb{R}^{N \times N} \nonumber
  \end{align}
  via $A_{\mathbf{i} \mathbf{j}} = A^{(1)}_{i_1 j_1} A^{(2)}_{i_2 j_2} \cdots A^{(D)}_{i_D j_D}$
  for the multi-indices $\mathbf{i} = (i_1, i_2, \ldots, i_D)$ and $\mathbf{j} = (j_1, j_2, \ldots, j_D)$.
  \item A one-mode bracket matrix is defined as
  \begin{align} \label{eq:one_mode_bracket_matrix}
    \mathbf{A}^{\![m]} = \mathbf{I}^{(1)} \otimes \cdots \otimes \mathbf{A}^{\!(m)} \otimes \cdots \otimes \mathbf{I}^{(D)} \nonumber 
  \end{align}
  with identity matrices in all positions except one.
  \item A many-mode bracket matrix is defined as
  \begin{equation}
    \mathbf{A}^{\![\mathbf{m}]} = \prod_{m \in \mathbf{m}} \mathbf{A}^{\![m]} = \bigotimes_{m = 1}^{D} \mathbf{B}^{(m)}, \quad
    \mathbf{B}^{(m)} = 
    \begin{dcases}[c]
        \mathbf{A}^{\!(m)} & \text{if } m \in    \mathbf{m}, \\
        \mathbf{I}^{(m)}   & \text{if } m \notin \mathbf{m}.
    \end{dcases}
  \end{equation}
\end{enumerate}
\end{definition}
Introducing the all-ones matrices, $\mathbf{J}^{(m)} \in \mathbb{R}^{n_m \times n_m}$, the complete-grid additive kernel matrix can be written as
\begin{align} \label{eq:sop_kernel_complete_grid}
  \mathbf{K}
    = \sum_{\mathbf{m} \in \kernelmcr} \sigma_{|\mathbf{m}|}^2 \mathbf{K}^{[\mathbf{m}]} \mathbf{J}^{[\neg \mathbf{m}]}.
\end{align}
The notation $\neg \mathbf{m}$ means the set of modes $m \notin \mathbf{m}$ (the complement of $\mathbf{m}$).
The kernel matrix over the incomplete grid, $\hatbold{K} \in \mathbb{R}^{\widehat{N} \times \widehat{N}}$, can be formally obtained
from the complete-grid matrix, $\mathbf{K} \in \mathbb{R}^{N \times N}$, by appropriately deleting rows and columns. This can
be written as
\begin{align} \label{eq:sop_kernel_complete_grid_chopped_simple}
  \hatbold{K} 
  = \mathbf{\Gamma}^{\trans} \mathbf{K} \mathbf{\Gamma}
  = \sum_{\mathbf{m} \in \kernelmcr} 
  \sigma_{|\mathbf{m}|}^2 \mathbf{\Gamma}^{\trans} \mathbf{K}^{[\mathbf{m}]} \mathbf{J}^{[\neg \mathbf{m}]} \mathbf{\Gamma},
\end{align}
where $\mathbf{\Gamma} \in \mathbb{R}^{N \times \widehat{N}}$ is a \textit{chopping} matrix (the columns of $\mathbf{\Gamma}$
are the standard basis vectors $\mathbf{e}_\mathbf{i} \in \mathbb{R}^{N}$ with $\mathbf{i} \in \hatcal{I}$).
Equation~\eqref{eq:sop_kernel_complete_grid_chopped_simple} is thus a sum of chopped Kronecker products.
It would seem that chopping destroys the structure of the complete-grid kernel,
but we will see that chopping can in fact be handled 
efficiently and very elegantly if $\hatcal{I}$ is \ac{cud}.
Chopped Kronecker products have been used extensively for solving the vibrational 
Schrödinger equation \citep{wodraszkaPrunedCollocationbasedMulticonfiguration2019, zakUsingCollocationHierarchical2019, simmonsComputingVibrationalSpectra2023},
and the mathematical aspects have been formalized by
Holzmüller and Pflüger \citep{holzmullerFastSparseGrid2021} to whom we refer for details and additional references.
We will use the following theorem:
\begin{thm} \label{thm:chopping_main_text}
  Let the index set $\hatcal{I}$ be \ac{cud} and let $\mathbf{A}, \mathbf{B} \in \mathbb{R}^{N \times N}$.
  In addition, let $\mathbf{M}^{[m]} \in \mathbb{R}^{N \times N}$
  be an arbitrary one-mode bracket matrix (see Definition \ref{def:kronecker_product}).
  Using hats to denote chopping, e.g. $\hatbold{A} = \mathbf{\Gamma}^{\trans} \mathbf{A} \mathbf{\Gamma}$,
  we then have
  \begin{enumerate}[label=(\roman*)]
  \item  If $\mathbf{A}$ is lower triangular or $\mathbf{B}$ is upper triangular, 
  then $\widehat{\mathbf{A} \mathbf{B}} = \hatbold{A} \hatbold{B}$. \label{thm:chopping_main_text_i}
  \item If $\mathbf{A}$ is lower (upper) triangular and invertible, then $\mathbf{A}^{-1}$ is lower (upper) triangular
  and $\hatbold{A}$ is invertible with $\hatbold{A}^{-1} = \widehat{\mathbf{A}^{-1}}$. \label{thm:chopping_main_text_inv}
  \item The matrix-vector product $\hatboldsuper{M}{[m]} \hatbold{v} = \hatbold{w}$
  can be computed with a complexity of at most $\mathcal{O} ( n_m  \widehat{N} )$
  without referencing the multi-indices $\mathbf{i} \notin \hatcal{I}$. \label{thm:chopping_main_text_ii}
  \end{enumerate}
\end{thm}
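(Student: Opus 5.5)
Throughout I will read ``lower (upper) triangular'' for $\mathbf{A} \in \mathbb{R}^{N \times N}$ with respect to the componentwise partial order on multi-indices: $A_{\mathbf{i}\mathbf{j}} \neq 0$ only if $\mathbf{j} \le \mathbf{i}$ (resp. $\mathbf{i} \le \mathbf{j}$). This includes Kronecker products of triangular base matrices, which is the case of interest. The single fact I will use repeatedly is that Definition~\ref{def:cud} implies, by iterating the one-step decrement, that $\hatcal{I}$ is an order ideal: $\mathbf{j} \le \mathbf{i} \in \hatcal{I}$ implies $\mathbf{j} \in \hatcal{I}$.

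For \ref{thm:chopping_main_text_i}, I would write entrywise, for $\mathbf{i}, \mathbf{k} \in \hatcal{I}$,
\begin{equation}
  (\widehat{\mathbf{A}\mathbf{B}})_{\mathbf{i}\mathbf{k}} = \sum_{\mathbf{j} \in \mathcal{I}} A_{\mathbf{i}\mathbf{j}} B_{\mathbf{j}\mathbf{k}}. \nonumber
\end{equation}
If $\mathbf{A}$ is lower triangular, only terms with $\mathbf{j} \le \mathbf{i}$ survive. Each such $\mathbf{j}$ lies in $\hatcal{I}$, so the sum equals $\sum_{\mathbf{j} \in \hatcal{I}} A_{\mathbf{i}\mathbf{j}} B_{\mathbf{j}\mathbf{k}} = (\hatbold{A}\hatbold{B})_{\mathbf{i}\mathbf{k}}$. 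The case where $\mathbf{B}$ is upper triangular is symmetric, using $\mathbf{j} \le \mathbf{k}$. In matrix form, the statement is that $\mathbf{\Gamma}\mathbf{\Gamma}^{\trans}$ may be inserted between the two factors without changing the chopped product.

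For \ref{thm:chopping_main_text_inv}, the step needing care is showing that $\mathbf{A}^{-1}$ is triangular in the partial-order sense, not merely in some total order.
\begin{itemize}
  \item I would write $\mathbf{A} = \mathbf{\Delta}(\mathbf{I} - \mathbf{N})$, where $\mathbf{\Delta}$ is the diagonal of $\mathbf{A}$. It is invertible because, after ordering $\mathcal{I}$ by any linear extension of $\le$ (e.g. lexicographic), $\mathbf{A}$ is triangular in the usual sense, so its determinant is the product of its diagonal entries.
  \item Here $\mathbf{N} = \mathbf{I} - \mathbf{\Delta}^{-1}\mathbf{A}$ is strictly lower in the partial order.
  \item Products of partial-order-lower matrices are again partial-order-lower, by transitivity of $\le$. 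Strict lowerness forces $\mathbf{N}^{N} = \mathbf{0}$, since a chain $\mathbf{i} > \mathbf{j}_1 > \cdots$ has length less than $N$.
  \item Hence $\mathbf{A}^{-1} = \big(\sum_{k=0}^{N-1} \mathbf{N}^k\big)\mathbf{\Delta}^{-1}$ is lower triangular.
\end{itemize}
Applying \ref{thm:chopping_main_text_i} with $\mathbf{A}$ lower then gives $\hatbold{A}\,\widehat{\mathbf{A}^{-1}} = \widehat{\mathbf{A}\mathbf{A}^{-1}} = \hatbold{I} = \mathbf{I}$. Applying it with $\mathbf{A}^{-1}$ lower gives $\widehat{\mathbf{A}^{-1}}\,\hatbold{A} = \mathbf{I}$. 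The upper case follows by transposition.

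For \ref{thm:chopping_main_text_ii}, I would work fiber by fiber along mode $m$. We have $(\mathbf{M}^{[m]})_{\mathbf{i}\mathbf{j}} = M^{(m)}_{i_m j_m}$ if $\mathbf{i}$ and $\mathbf{j}$ agree outside mode $m$, and zero otherwise. So $\hatbold{w}_{\mathbf{i}} = \sum_{j_m} M^{(m)}_{i_m j_m} \hatbold{v}_{\mathbf{i}(i_m \to j_m)}$, summed over those $j_m$ for which the modified index lies in $\hatcal{I}$. Fix the indices outside mode $m$. By Definition~\ref{def:cud}, the set of admissible values of the $m$-th index in $\hatcal{I}$ is a prefix $\{0, \ldots, r-1\}$ with $r \le n_m$. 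The restriction of $\hatboldsuper{M}{[m]}$ to that fiber is therefore the leading $r \times r$ principal block of $\mathbf{M}^{(m)}$. Applying this block costs $\mathcal{O}(r^2) \le \mathcal{O}(n_m r)$, and summing over fibers gives $\mathcal{O}(n_m \widehat{N})$, since the fibers partition $\hatcal{I}$.

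The only algorithmic point is enumerating fibers directly from $\hatcal{I}$. This can be done by grouping the stored multi-indices by their complement-of-$m$ key, or, for the MCR-based grids, subgrid by subgrid. No index outside $\hatcal{I}$ is ever touched. I expect the partial-order subtlety in \ref{thm:chopping_main_text_inv} to be the main obstacle; the rest is bookkeeping.
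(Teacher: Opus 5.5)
Your proof is correct, and it takes a different route from the paper. The paper gives no argument of its own: it cites Theorems~1 and~3 of \citet{holzmullerFastSparseGrid2021}, whereas you give a self-contained elementary proof.

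\textbf{Reading of ``triangular''.} Reading it with respect to the componentwise order on multi-indices is the right choice. This is the order induced by the 1D orders in Appendix~\ref{appendix:general_cud_framework}. Under ordinary row-major triangularity, (i) would already fail for the CUD set $\hatcal{I} = \{(0,0),(1,0)\}$, because $(0,1)$ precedes $(1,0)$ in that order but lies outside $\hatcal{I}$.

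\textbf{Parts (i) and (iii).} These arguments use only downward closure and the fiber structure. They therefore carry over verbatim to the general preorders of Appendix~\ref{appendix:general_cud_framework}, and that generality is what the paper gains by citing. Your count for (iii) does not even need CUD: any fiber of any index set has at most $n_m$ points, and CUD only makes each fiber a prefix.

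\textbf{Part (ii).} This is the one step tied to your specific setting. The splitting $\mathbf{A} = \mathbf{\Delta}(\mathbf{I} - \mathbf{N})$ with $\mathbf{N}$ nilpotent relies on antisymmetry. For a genuine preorder, such as Example~\ref{example:preorder_block_triangular}, a triangular matrix carries a full block on the equivalence class of nonzero indices. Then $\mathbf{N}$ need not be nilpotent, and $\mathbf{\Delta}$ can even be singular.

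An argument covering both settings at once goes as follows. Lower triangular matrices form a unital algebra: reflexivity gives $\mathbf{I}$, and transitivity gives closure under products. By Cayley--Hamilton, $\mathbf{A}^{-1}$ is a polynomial in $\mathbf{A}$, so it is lower triangular.

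For the statement as posed, with CUD in the sense of Definition~\ref{def:cud}, your Neumann-series proof is complete.
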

\begin{proof}
  See \citet[Theorems 1 and 3]{holzmullerFastSparseGrid2021}.
\end{proof}

Theorem~\ref{thm:chopping_main_text}\ref{thm:chopping_main_text_inv} actually
allows a separable kernel to be inverted directly over an incomplete grid,
but only in the noiseless case (see Appendix~\ref{appendix:chopped_kronecker_products}). 
Our initial investigations
in this direction were not promising: 
(i)   The complete lack of noise leads to very ill-conditioned numerics;
(ii)  the $D$-complexity is higher than CUTS-GPR; and
(iii) a separable kernel is not attractive for the applications we have in mind.

The details of how to compute the \ac{mvp} in Theorem~\ref{thm:chopping_main_text}\ref{thm:chopping_main_text_ii}
are covered by Theorem~\ref{thm:chopped_one_mode_contraction} and Appendix~\ref{appendix:contractions_mcr_tensor}.
The crucial aspect of
Theorem~\ref{thm:chopping_main_text}\ref{thm:chopping_main_text_ii}
is that it only ever references the multi-indices of the incomplete grid.
The remaining multi-indices of the underlying complete grid, $\mathbf{i} \notin \hatcal{I}$, are never referenced, not even
in intermediate steps.
Staying on the incomplete grid at all times is a \textit{necessary} condition for breaking the curse of dimensionality, but
the complexity offered by Theorem~\ref{thm:chopping_main_text}\ref{thm:chopping_main_text_ii} is not \textit{sufficient} in itself
due the large number of terms in the additive kernel.
This problem can, however, be resolved by careful use of structure 
and sparsity as described in Section~\ref{sec:fast_mvps}.

\section{Low-scaling implementation}

\subsection{Fast matrix-vector products} \label{sec:fast_mvps}

Our task is now to apply the chopping framework to the 
kernel matrix in \eqref{eq:sop_kernel_complete_grid}.
Before proceeding, we observe that the all-ones matrix
can be factorized like
\begin{align} \label{eq:J_decomposition}
  \mathbf{J}^{(m)} =
  \mathbf{L}^{(m)} \mathbf{R}^{(m)} \mathbf{U}^{(m)}
\end{align}
with
\begin{align} 
  \mathbf{L}^{(m)} =
  \begin{bmatrix} 
    1      & 0      & \cdots & 0     \\
    1      & 1      & \cdots & 0     \\
    \vdots & \vdots & \ddots & \vdots\\
    1      & 0      & \cdots & 1
  \end{bmatrix}, \quad
  \mathbf{R}^{(m)} =
  \begin{bmatrix} 
    1      & 0      & \cdots & 0     \\
    0      & 0      & \cdots & 0     \\
    \vdots & \vdots & \ddots & \vdots\\
    0      & 0      & \cdots & 0
  \end{bmatrix}, \quad
  \mathbf{U}^{(m)} =
  \begin{bmatrix} 
    1      & 1      & \cdots & 1     \\
    0      & 1      & \cdots & 0     \\
    \vdots & \vdots & \ddots & \vdots\\
    0      & 0      & \cdots & 1
  \end{bmatrix}.
\end{align}
The matrix $\mathbf{R}^{(m)}$ is symmetric
and idempotent and hence a projection.
It is, more specifically, the matrix that projects
onto the reference point for mode $m$.
Using \eqref{eq:J_decomposition}, a short derivation (see Appendix~\ref{appendix:deriving_kernel_mvp})
shows that \eqref{eq:sop_kernel_complete_grid} can be rewritten as
\begin{gather}
  \mathbf{K}
  = 
  \mathbf{L}
  \left[ 
    \sum_{\mathbf{m}} \sigma_{|\mathbf{m}|}^2 
    \mathbf{M}^{[\mathbf{m}]}
    \mathbf{R}^{[\neg \mathbf{m}]} 
  \right]
  \mathbf{U}
  \equiv \mathbf{L} \mathbf{M} \mathbf{U}, \quad
  \mathbf{M}^{(m)} = \big(\mathbf{L}^{(m)} \big)^{-1} \mathbf{K}^{(m)} \big(\mathbf{U}^{(m)} \big)^{-1}, \label{eq:sop_kernel_complete_grid_rewritten} \\
  \mathbf{L} = \mathbf{L}^{(1)} \otimes \cdots \otimes \mathbf{L}^{(D)}, \quad
  \mathbf{U} = \mathbf{U}^{(1)} \otimes \cdots \otimes \mathbf{U}^{(D)}.
\end{gather}
Each term in \eqref{eq:sop_kernel_complete_grid_rewritten}
has a lower triangular factor to the left ($\mathbf{L}$),
a non-triangular factor in the middle ($\mathbf{M}^{[\mathbf{m}]}$)
and two upper-triangular factors to the right ($\mathbf{R}^{[\neg \mathbf{m}]}$ and $\mathbf{U}$).
We can therefore apply 
Theorem~\ref{thm:chopping_main_text}\ref{thm:chopping_main_text_i}
thrice to get
\begin{align}
  \hatbold{K} 
  = \hatbold{L}
  \left[ 
    \sum_{\mathbf{m}} \sigma_{|\mathbf{m}|}^2 
    \widehat{\mathbf{M}^{[\mathbf{m}]}}
    \widehat{\mathbf{R}^{[\neg \mathbf{m}]}}
  \right]
  \hatbold{U}
  = \hatbold{L} \hatbold{M} \hatbold{U}. \label{eq:sop_kernel_complete_grid_chopped}
\end{align}
Using the factorized form above, we can rephrase the \ac{mvp} with $\hatbold{K}$ as a sequence of three simpler
\acp{mvp}. The matrices 
$\hatbold{L} = \hatboldsuper{L}{[1]} \hatboldsuper{L}{{[2]}} \cdots \hatboldsuper{L}{{[D]}}$ and 
$\hatbold{U} = \hatboldsuper{U}{[1]} \hatboldsuper{U}{{[2]}} \cdots \hatboldsuper{U}{{[D]}}$
are evidently very sparse in addition to being highly structured.
One can show that each factor touches only a small part of the vector,
with the result that \acp{mvp} by $\hatbold{L}$ and $\hatbold{U}$
have a complexity of only $\mathcal{O}(\alpha \widehat{N})$ (see Appendix~\ref{appendix:complexity_Lv_Uv}).
This should be compared to the naive complexity of $\mathcal{O}(n D \widehat{N})$ obtained by repeatedly
applying Theorem~\ref{thm:chopping_main_text}\ref{thm:chopping_main_text_ii} without utilizing the special structure of 
the 1D matrices $\mathbf{L}^{(m)}$ and $\mathbf{U}^{(m)}$.

The \acp{mvp} with $\hatbold{M}$ is much more challenging, simply because of the large number of terms.
However, we demonstrate in Appendix~\ref{appendix:complexity_Mv} that 
each term is extremely sparse due to 
the presence of the projector $\widehat{\mathbf{R}^{[\neg \mathbf{m}]}}$.
This leads to a complexity of only $\mathcal{O}(n \alpha \widehat{N})$, which
is also the complexity of the full \ac{mvp} by $\hatbold{K}$.
What we have obtained is a factorization of $\hatbold{K}$ into three factors that exhibit 
a highly structured kind of sparsity that allows an extremely fast \ac{mvp}. 

\subsection{The quadratic term and overall cost} \label{sec:quadratic_term}
Computing the quadratic term of \eqref{eq:mll_gradient}, $\weights^{\trans} (\partial \hatbold{C} / \partial \theta) \weights$,
is not a difficult task on its own, and the cost for each individual hyperparameter is usually not problematic.
Consider, however, the task of computing \textit{all} length scale quadratic terms.
A typical implementation consists of a derivative \ac{mvp},
$(\partial \hatbold{C} / \partial \ell^{(m)}) \weights$, followed by a dot product with $\weights$.
Even neglecting the cost of the \ac{mvp}, this leads to a complexity of $\mathcal{O}(D\widehat{N})$
due to the dot products alone. If the grid \ac{mcr} is simple with cut level $\alpha$, this implies
a $D$-complexity of $\mathcal{O}(D^{\alpha + 1})$ compared to $\mathcal{O}(D^{\alpha})$ in the rest of the code.
Appendix~\ref{appendix:quadratic_term} shows how to maintain a $D$-complexity of $\mathcal{O}(D^{\alpha})$
by leveraging the factorized form of $\hatbold{K}$ in
\eqref{eq:sop_kernel_complete_grid_chopped}.

Appendix~\ref{appendix:overal_structure} gives an overview of the code and the cost of the various steps.
The most expensive parts are the hyperparameter optimization and predictive variances
with complexities
$\mathcal{O}\big(N_{\mathrm{opt}} N_{\mathrm{probe}} N_{\mathrm{CG}} (n \alpha \widehat{N} + k \widehat{N} ) \big)$
and
$\mathcal{O}\big( \widehat{N}_{*} N_{\mathrm{CG}} (n \alpha \widehat{N} + k \widehat{N} ) \big)$,
respectively. The symbols $N_{\mathrm{opt}}$, $N_{\mathrm{probe}}$, $N_{\mathrm{CG}}$
and $\widehat{N}_{*}$
denote the number of optimization cycles, probe vectors (for trace estimation),
\ac{cg} iterations and test points, respectively, while $k$ is the preconditioner rank.

\section{Numerical results} \label{sec:experimental_results}

\subsection{Computational complexity} \label{sec:computational_complexity}

The kernel \ac{mvp} has a theoretical complexity of only 
$\mathcal{O}(n \alpha \widehat{N})$.
If $n$ and $\alpha$ are kept constant
this translates to $\mathcal{O}(\widehat{N})$ or, equivalently, $\mathcal{O}(D^\alpha)$.
It is important to verify this complexity and to assess the performance of the \ac{mvp} in absolute terms.
The $D$-complexity is determined 
for $\alpha = 2,3,4$ and $n = 5,10,20$ by running a series of 
simple benchmarks on a single core (see Appendix~\ref{appendix:mvp_scaling} for details and raw timings).
The number of dimensions, $D$, is varied in the range from $16$ to at most $16384$ (depending on $\alpha$ and $n$).
For each combination of $\alpha$ and $n$, the largest benchmark has $\widehat{N}$
equal to a few billion and a CPU time of no more than three minutes.
CPU times have been fitted to a power law, $\log(t_{\mathrm{CPU}} / \mathrm{s}) = a \log(D) + b$,
to determine the empirical $D$-complexity, $\mathcal{O}(D^a)$. The fits are plotted for
$\alpha = 2,4$ in Figure~\ref{fig:gridgpr_benchmark_alpha24} (Figure~\ref{fig:gridgpr_benchmark_alpha3} shows $\alpha = 3$).
The empirical slopes [2.06--2.08 ($\alpha = 2$),
3.09--3.15 ($\alpha = 3$) and 
4.21--4.29 ($\alpha = 4$)] are slightly 
larger than theoretical, asymptotic slopes of 2, 3 and 4.
This is simply a sign that the asymptotic region has not been fully reached, especially for $\alpha = 4$
(see also Appendix~\ref{appendix:mvp_scaling}).
Figure~\ref{fig:gridgpr_benchmark_alpha24_N} shows
the CPU time for $\alpha = 2$ and $n = 10$ as a function of $\widehat{N}$, and the fit clearly verifies
the linear $\widehat{N}$-scaling.


\begin{figure}[t]
  {
  \phantomsubcaption\label{fig:gridgpr_benchmark_alpha24}
  \phantomsubcaption\label{fig:gridgpr_benchmark_alpha24_N}
  \phantomsubcaption\label{fig:gradient_norms}
  \phantomsubcaption\label{fig:one_mode_cut}
  }
  \centering
  \includegraphics[width=1\textwidth]{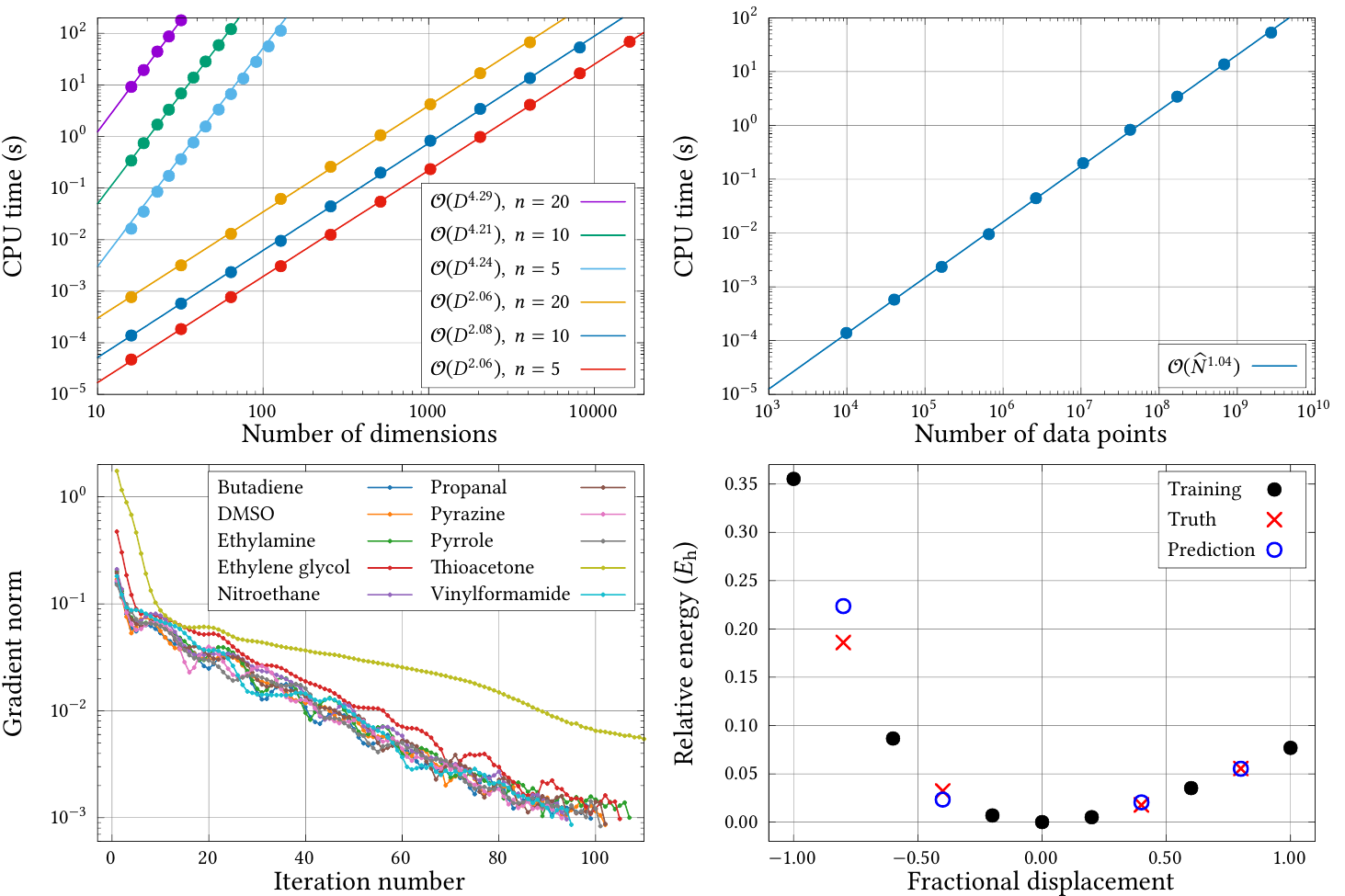}

  \caption{%
  Top left  (a): $D$-scaling of the kernel \ac{mvp} for $\alpha = 2, 4$ and $n = 5,10,20$. 
  Top right (b): $\widehat{N}$-scaling of the kernel \ac{mvp} for $\alpha = 2$ and $n = 10$.
  Bottom left  (c): Learning curves for all ten molecules. 
  Bottom right (d): Training data, predictions and reference data for the 24th 1D cut of the thioacetone PES.}
  \label{fig:4_by_4}
\end{figure}

\subsection{Application to PES data} \label{sec:application_to_pes}

\subsubsection{Computational setup} \label{sec:computational_setup}

As a first large-scale application of CUTS-GPR, 
we consider the 24-dimensional \acp{pes} of the ten organic molecules listed in Table~\ref{tab:test_molecules}.
The \acp{pes} were sampled according to the 1D grids in Table~\ref{tab:1D_grids} with a cut level of $\alpha = 3$.
The coarse grid ($n = 7$) was used for training ($\widehat{N} = \num{447 265}$), while the test set consists of the fine grid ($n = 11$)
with the training points removed ($\widehat{N}_* = \num{1 604 576}$).\footnote{The datasets are available at \url{https://anonymous.4open.science/r/24D_3M_PES_DATA-F32E}.}
We used an \ac{oak} kernel with maximum interaction order $\omega = \alpha = 3$ (see Appendix~\ref{appendix:kernel_centering} for details on kernel centering).
Before running the CUTS-GPR calculations, the training locations and training outputs were standardized to have zero mean 
and unit variance. 
All CUTS-GPR calculations were run with a fixed noise of $\sigma^2 = 10^{-3}$, 
while the remaining hyperparameters were optimized using the Adam algorithm \citep{kingmaAdamMethodStochastic2017} with learning rate 0.1
and a gradient norm threshold of $\varepsilon_{\mathrm{opt}} = 10^{-3}$
(initial values are specified in Table~\ref{tab:initial_guess}). 
To regularize the hyperparameter optimization we place priors 
over $\sigma^2_{k}$ and $\ell^{(m)}$, similar to \citet{luAdditiveGaussianProcesses2022}
and \citet{hvarfnerVanillaBayesianOptimization2024} (see Appendix~\ref{appendix:priors} for details).
The preconditioner rank was $k = 10$ and the \ac{cg} residuals were converged to a relative norm of
$\varepsilon_{\mathrm{CG}} = 10^{-3}$. 35 reparameterized Rademacher probe vectors were used for stochastic trace estimation (see Appendices~\ref{sec:stochastic_trace_estimation} and \ref{sec:sampling_probe_vectors}).
All calculations were run on 36 cores of an AMD EPYC 9565 processor using standard double precision.

Few \ac{gpr} methods (exact or approximate) can handle the combination of an additive kernel
with large $N$ and large $D$. We were therefore only able to compare against
SVGP \citep{hensmanGaussianProcessesBig2013} 
as implemented in GPyTorch \citep{gardnerGPyTorchBlackboxMatrixMatrix2018} 
with the \ac{oak} available in BoTorch \citep{balandatBoTorchFrameworkEfficient2020}.
The current BoTorch implementation only supports $\omega = 2$, 
so we used a locally modified version
to also allow $\omega = 3$.
Settings are described in Appendix~\ref{appendix:svgp_settings}.
The SVGP calculations were run on 36 cores of either an AMD EPYC 9565 CPU ($\omega = 3$) or
a twin Intel Xeon Gold 6140 CPU ($\omega = 2$).

\subsubsection{Results}

First, we study the convergence of the hyperparameter optimization in CUTS-GPR. Figure~\ref{fig:gradient_norms}
shows the learning curves for all ten molecules.
The norm declines rather quickly during the first 5--10 iterations, after which
the improvement slows down. 
All molecules converge after around 100 iterations,
except for thioacetone, which requires 176 steps to reach the threshold.
Although the gradient used for optimization is a stochastic estimate,
convergence is steady and systematic. We attribute this to the
fact that almost all trace estimates entering the gradient
are remarkably well-determined with a \ac{sem} less than \SI{1}{\percent} (see Table~\ref{tab:trace_estimates} for an example), 
even though the number of probe vectors
(35) and the preconditioner rank (10) are not very large.
The exceptions are $\sigma^2_0$ and $\sigma^2_1$, whose \ac{sem}
are rather large (\SI{518}{\percent} and \SI{37.6}{\percent}, respectively) (see Table~\ref{tab:trace_estimates}). 
The large uncertainty on the $\sigma^2_0$ trace term does not appear to impair the overall convergence,
something that we ascribe to the robustness of the Adam optimizer and to
the fact that $\sigma^2_0$ is anyway very small.

Table~\ref{tab:timings} shows the CPU and wall times for hyperparameter optimization and predictions in CUTS-GPR (only the predictive mean was computed).
The wall time spent on hyperparameter optimization ranges from \SI{1.62}{\hour} to \SI{3.58}{\hour} (mean \SI{2.16}{\hour}), while predictions take less than two minutes for all $\num{1 604 576}$ test points.
This difference simply reflects the fact that the \ac{cg} solver is run \textit{many} times during optimization, while
the predictive mean requires only a single run.
Appendix~\ref{appendix:svgp_timings} contains the SVGP wall times. Even the fastest SVGP
calculations (with $\omega = 2$) take around \SI{40}{\min} to complete. For $\omega = 3$, SVGP is
always slower than CUTS-GPR. Many factors impact
absolute timings, but we think it is fair to say that the cost of CUTS-GPR is comparable to or smaller than SVGP for the cases
in question. This is remarkable, considering the fact that CUTS-GPR is numerically exact.

Figure~\ref{fig:max_mae} compares the maximum absolute error (MAX) and
root mean square error (RMSE)
for CUTS-GPR and SVGP (both error measures are range normalized and averaged over the ten molecules).
The numerical values can be found in Appendix~\ref{appendix:average_errors}, which
also considers the mean absolute error (MAE).
We find that CUTS-GPR outperforms SVGP for all three error measures (MAX, RMSE and MAE), 
which highlights the advantage of treating the kernel exactly.
This is in fact true for each individual test case (see Appendix~\ref{appendix:all_errors}).
The difference between CUTS-GPR and SVGP is particularly large for the maximum error, which indicates
that SVGP is not able to describe the most difficult points in spite of the
fact that the target function is quite smooth. 
As seen in Figure~\ref{fig:max_mae},
the SVGP errors exhibit diminishing returns with additional inducing points. 
Thus, increasing their number further is both impractical 
and unlikely to significantly improve accuracy.
Even for CUTS-GPR, the maximum error is significantly larger than the MAE and RMSE.
Large errors chiefly occur at points where the target function is very steep (see Figure~\ref{fig:one_mode_cut} for an example),
which is to be expected considering the relative coarseness of the training grid.

\begin{figure}[H]
\centering
\includegraphics[width=0.9\linewidth]{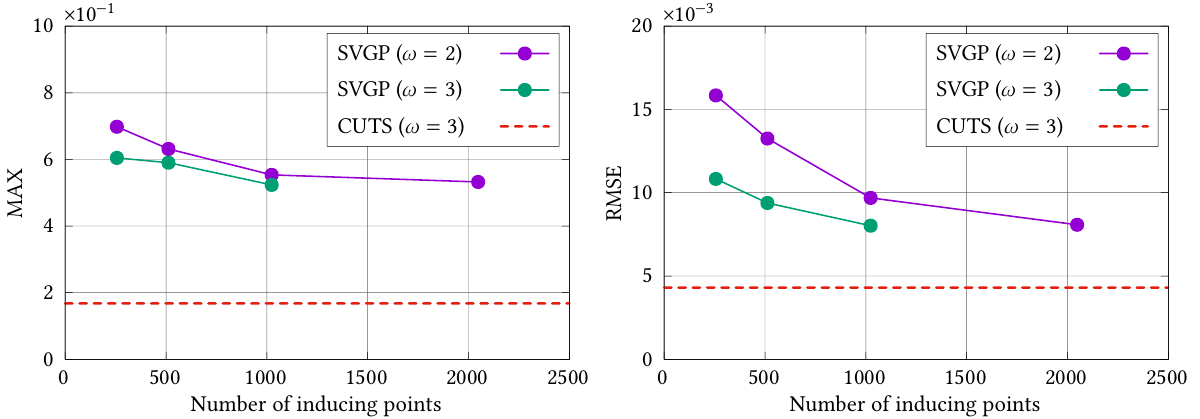}
\caption{Comparison of range normalized MAX and RMSE. The errors are averaged over the
ten molecules in Table~\ref{tab:test_molecules}.}
\label{fig:max_mae}
\end{figure}

\section{Conclusions, limitations and extensions} \label{sec:limitations_extensions}

We have introduced CUTS-GPR, a new approach for doing numerically exact \ac{gpr} 
with large datasets in high-dimensional settings. 
CUTS-GPR is based on the combination of two attractive components: (i) Additive kernels and (ii) structured, incomplete grids.
Careful analysis shows the surprising fact that this combination implies a highly structured kernel,
which in turn allows a scalable kernel \ac{mvp}.
The \ac{mvp} offers low-order polynomial scaling with $D$ \textit{and} near-linear or even linear scaling with $N$,
something that we demonstrate theoretically and empirically.
Without approximating the kernel matrix, CUTS-GPR enables \ac{gpr} is previously 
inaccessible settings, potentially with billions of data points 
and thousands of dimensions -- well beyond what is feasible with current methods.

In its current form, CUTS-GPR assumes a particular data format that may be too restrictive for some applications.
Some restrictions can, however, be lifted with relative ease.
Accidental missing data
points can, for example, be handled by introducing an extra layer of chopping.
Writing $\tilde{N} < \widehat{N}$ for the actual number of data points, the kernel \ac{mvp} 
then reads as
$\tilde{\mathbf{C}} \tilde{\mathbf{v}} = \tilde{\mathbf{\Gamma}}^{\trans} \hatbold{C} \tilde{\mathbf{\Gamma}} \tilde{\mathbf{v}}$
with $\tilde{\mathbf{C}} \in \mathbb{R}^{\tilde{N} \times \tilde{N}}$.
In contrast to the chopping inside $\hatbold{C}$, which is handled \textit{implicitly}, 
the vector $\tilde{\mathbf{\Gamma}} \tilde{\mathbf{v}} \in \mathbb{R}^{\widehat{N}}$ should be constructed \textit{explicitly}
followed by fast multiplication with $\hatbold{C}$.
We have also mentioned that the notion of an incomplete grid can be generalized by using the flexibility of
the \ac{cud} property (Definition~\ref{def:cud}). This potentially allows a wider range of grid-type data
sets to be addressed. One could also interpolate non-grid data onto an incomplete grid, similar
to the idea of KISS-GP \citep{wilsonKernelInterpolationScalable2015} whose applicability in high
dimension is limited by the use of a complete grid.

Although the reliance on a particular data structure may be viewed as a limitation it should
be remembered that, very often, the user controls the 
sampling of data and hence the data structure.
In applications such as \ac{pes} fitting, which we have considered in this paper,
an incomplete grid structure is a very natural choice.

Computing predictive variances for very large test sets is a bottleneck in our current implementation.
In the future, we plan on combining CUTS-GPR with Lanczos variance estimates (LOVE) \citep{pleissConstantTimePredictiveDistributions2018a}
to enable much faster computation of variances.



\begin{ack}
This work was supported by the Danish National Research Foundation through the Center of Excellence for Chemistry of Clouds (Grant Agreement No. DNRF172) and 
by the Independent Research Fund Denmark through Grant No. 1026-00122B.
The authors declare no competing interests.
\end{ack}

\newpage
\hypersetup{urlcolor=black}
\section*{References}
{
\small
\printbibliography[heading=none]
}

\hypersetup{urlcolor=magenta}

\newpage
\appendix
\numberwithin{equation}{section}
\numberwithin{figure}{section}
\numberwithin{table}{section}

\section{Gaussian process regression using matrix-vector products} \label{appendix:gpr_mvp}
\renewcommand{\theequation}{\thesection\arabic{equation}}

\subsection{Conjugate gradient} \label{appendix:cg}
Linear systems like $\mathbf{C} \mathbf{x} = \mathbf{b}$ can be solved
iteratively using the well-known \ac{cg} algorithm \citep{hestenesMethodsConjugateGradients1952,meurantLanczosConjugateGradient2006,saadIterativeMethodsSparse2007}.
Since the overall computational effort scales linearly with the number of \ac{cg}
iterations, it is important to accelerate convergence as much as possible.
This can be achieved by using a preconditioner, 
i.e. a \ac{spd} matrix $\mathbf{P} \approx \mathbf{C}$
that allows fast solves, $\mathbf{P}^{-1} \mathbf{v}$. 
The preconditioner is formally introduced into the \ac{cg} algorithm 
by using its Cholesky
decomposition $\mathbf{P} = \mathbf{E} \mathbf{E}^{\trans}$
to write an equivalent linear system,
\begin{align}
  (\mathbf{E}^{-1} \mathbf{C} \mathbf{E}^{-\trans}) (\mathbf{E}^{\trans} \mathbf{x})
  = (\mathbf{E}^{-1} \mathbf{b})
\end{align}
This system has the same solution as the original system, but
convergence depends on the conditioning of the matrix $\tilde{\mathbf{C}} = \mathbf{E}^{-1} \mathbf{C} \mathbf{E}^{-\trans}$
rather than $\mathbf{C}$ itself. If $\mathbf{P}$ approximates $\mathbf{C}$ well
then $\tilde{\mathbf{C}} \approx \mathbf{I}$
and convergence can be much improved. This is often summarized in terms
of condition numbers:
\begin{align}
  \kappa_{\mathrm{precond}} 
  = \condition{\tilde{\mathbf{C}}} \ll \condition{\mathbf{C}} = \kappa_{\mathrm{orig}}.
\end{align}
The Cholesky factor $\mathbf{E}$ is not needed in the actual 
implementation of the algorithm and serves only
as a formal device. All that is needed is \acp{mvp} with the inverse
preconditioner, $\mathbf{P}^{-1} \mathbf{v}$.

Any convenient
preconditioner can be used, but we will rely on a rank-$k$ pivoted Cholesky
decomposition \citep{beebeSimplificationsGenerationTransformation1977, harbrechtLowrankApproximationPivoted2012, chenRandomlyPivotedCholesky2025} such that
\begin{gather}
  \mathbf{Z} \mathbf{Z}^{\trans} \approx \mathbf{K}, \quad \mathbf{Z} \in \mathbb{R}^{N \times k}  \\
  \mathbf{P} = \mathbf{Z} \mathbf{Z}^{\trans} + \sigma^2 \mathbf{I} \approx \mathbf{C}.
\end{gather}
From the matrix inverse and matrix determinant lemmas
we get the following simple expressions:
\begin{gather}
  \mathbf{P}^{-1} = \frac{1}{\sigma^2}
  \left( \mathbf{I}_{N} - \mathbf{Z} \mathbf{Q}^{-1} \mathbf{Z}^{\trans} \right), \label{eq:Pinv_woodbury} \\
  \log{|\mathbf{P}|} = \log{|\mathbf{Q}|} + (N - k) \log(\sigma^2), \label{eq:logdetP} \\
  \mathbf{Q} = \sigma^2 \mathbf{I}_k + \mathbf{Z}^{\trans} \mathbf{Z} \in \mathbb{R}^{k \times k}.  \label{eq:precond_Q_def}
\end{gather}
The matrix $\mathbf{Q}$ and its Cholesky factorization can be computed in $\mathcal{O}(k^2 N)$
and $\mathcal{O}(k^3)$ time, respectively. Once this is done,
the cost of $\log{|\mathbf{P}|}$ is only $\mathcal{O}(k)$ while $\mathbf{P}^{-1} \mathbf{v}$
has a complexity of $\mathcal{O}(kN)$.

The pivoted Cholesky algorithm evaluates the main diagonal and $k$ columns of $\mathbf{K}$ at a cost of 
$\mathcal{O}(\rho_{\mathrm{diag}})$ and $\mathcal{O}(k \rho_{\mathrm{col}})$, respectively ($\rho_{\mathrm{diag}}$ and $\rho_{\mathrm{col}}$ depend on the structure of the kernel; see Appendices~\ref{appendix:K_column} and \ref{appendix:K_diagonal}).
In addition, the algorithm expends $\mathcal{O}(k^2 N)$ arithmetic operations
and requires $\mathcal{O}(kN)$ storage \citep{chenRandomlyPivotedCholesky2025}.

\subsection{Lanczos}
The (non-preconditioned) Lanczos algorithm \citep{lanczosIterationMethodSolution1950,chenLanczosAlgorithmMatrix2024} can be described very informally as follows:
Running $\ell$ iterations of the Lanczos algorithm with a symmetric matrix $\mathbf{C}$
and starting vector $\mathbf{b}$ builds an approximation
\begin{align}
  \mathbf{C} \approx \mathbf{Q}_{\ell}^{\phantom{\trans}} \mathbf{T}_{\ell}^{\phantom{\trans}} \mathbf{Q}_{\ell}^{\trans}, 
  \quad \mathbf{T}_{\ell} \in \mathbb{R}^{\ell \times \ell},
  \quad \mathbf{Q}_{\ell} \in \mathbb{R}^{N    \times \ell}.
\end{align}
The matrix $\mathbf{T}_{\ell}$ is symmetric tridiagonal
and the columns of $\mathbf{Q}_{\ell}$ (the Lanczos vectors) are
orthonormal (in exact arithmetic) with the first column being $\mathbf{q}_0 = \mathbf{b} / \norm{\mathbf{b}}$.
It follows that
\begin{align} \label{eq:lanczos_QTb}
  \mathbf{Q}_{\ell}^{\trans} \mathbf{b} = \mathbf{e}_0 \norm{\mathbf{b}}.
\end{align}
Lanczos can be used to approximate quadratic forms involving matrix functions,
a technique known as Lanczos quadrature \citep{chenLanczosAlgorithmMatrix2024}:
\begin{align}
  \mathbf{b}^{\trans} f(\mathbf{C}) \mathbf{b} 
  &\approx \mathbf{b}^{\trans} \mathbf{Q}_{\ell}^{\phantom{\trans}} f(\mathbf{T}_{\ell}^{\phantom{\trans}}) \mathbf{Q}_{\ell}^{\trans} \mathbf{b} \nonumber \\
  &= \norm{\mathbf{b}}^2 \mathbf{e}_0^{\trans} f(\mathbf{T}_{\ell}) \mathbf{e}_0.
\end{align}
Although the equality due to \eqref{eq:lanczos_QTb} only holds in exact arithmetic,
one should always use the latter expression \citep{muscoStabilityLanczosMethod2024}. 
This has the added benefit that the Lanczos vectors need not be stored.

The Lanczos algorithm (with starting vector $\mathbf{b}$) 
and the \ac{cg} algorithm (with right-hand side $\mathbf{b}$ and zero starting guess) 
are intimately related,
and the quantities of one can be obtained from the other \citep{meurantLanczosConjugateGradient2006,saadIterativeMethodsSparse2007}.
With a slight (and very inexpensive) modification of \ac{cg} \citep{gardnerGPyTorchBlackboxMatrixMatrix2018} we can thus obtain
the Lanczos tridiagonal matrix $\mathbf{T}_{\ell}$ and compute
\begin{align} \label{eq:lanczos_quadrature_log}
  \mathbf{b}^{\trans} \log(\mathbf{C}) \mathbf{b} 
  \approx \norm{\mathbf{b}}^2 \mathbf{e}_0^{\trans} \log(\mathbf{T}_{\ell}) \mathbf{e}_0.
\end{align}
The matrix logarithm of $\mathbf{T}_{\ell}$ can be computed
by diagonalization, which has a cost of $\mathcal{O}(\ell^2)$ since
$\mathbf{T}_{\ell}$ is symmetric tridiagonal.

If a preconditioner is used,
the equations above must be modified slightly.
In this case the Lanczos algorithm (and the modified \ac{cg} algorithm) 
build an approximation
of the preconditioned matrix $\tilde{\mathbf{C}} = \mathbf{E}^{-1} \mathbf{C} \mathbf{E}^{-\trans}$
and the formal starting vector is the preconditioned vector 
$\tilde{\mathbf{b}} = \mathbf{E}^{-1} \mathbf{b}$ (rather the user-supplied vector $\mathbf{b}$).
Comparing with \eqref{eq:lanczos_quadrature_log} we thus find that
\begin{align} \label{eq:lanczos_quadrature_log_precond_tilde}
  \tilde{\mathbf{b}}^{\trans} \log(\tilde{\mathbf{C}}) \tilde{\mathbf{b}}
  \approx \norm{\tilde{\mathbf{b}}}^2 \mathbf{e}_0^{\trans} \log(\tilde{\mathbf{T}}_{\ell}) \mathbf{e}_0
\end{align}
or, equivalently,
\begin{align} \label{eq:lanczos_quadrature_log_precond}
  \mathbf{b}^{\trans} \mathbf{E}^{-\trans} \log(\tilde{\mathbf{C}}) \mathbf{E}^{-1} \mathbf{b}
  \approx (\mathbf{b}^{\trans} \mathbf{P}^{-1} \mathbf{b}) \mathbf{e}_0^{\trans} \log(\tilde{\mathbf{T}}_{\ell}) \mathbf{e}_0.
\end{align}
Importantly, the right-hand side of \eqref{eq:lanczos_quadrature_log_precond} does
not refer to $\mathbf{E}$, which can once again be considered as a formal device.

\subsection{Stochastic trace estimation} \label{sec:stochastic_trace_estimation}
The trace of a matrix $\mathbf{A} \in \mathbb{R}^{N \times N}$ can be
computed as the expectation value of a quadratic form.
To see this, we consider a random vector $\mathbf{b} \in \mathbb{R}^{N}$
with $\mathbb{E}[\mathbf{b} \mathbf{b}^{\trans}] = \mathbf{I}$,
e.g. a Rademacher random vector.
Using the cyclic property of the trace
and the linearity of the
expectation value, we find that
\begin{align}
    \mathbb{E}[ \mathbf{b}^{\trans} \mathbf{A} \mathbf{b} ] 
  &= \mathbb{E}[ \trace(\mathbf{b}^{\trans} \mathbf{A} \mathbf{b}) ] \nonumber \\
  &= \mathbb{E}[ \trace(\mathbf{A} \mathbf{b} \mathbf{b}^{\trans}) ] \nonumber \\
  &= \trace(\mathbf{A} \mathbb{E}[ \mathbf{b} \mathbf{b}^{\trans}] ) \nonumber \\
  &= \trace(\mathbf{A}). \label{eq:stochastic_trace_estimation}
\end{align}
This observation leads to the Girard--Hutchinson estimator \citep{girardFastMonteCarloCrossvalidation1989,hutchinsonStochasticEstimatorTrace1989} with $m$ probe vectors:
\begin{align}
  \trace(\mathbf{A}) \approx \frac{1}{m} \sum_{i = 1}^{m} \mathbf{b}_i^{\trans} \mathbf{A} \mathbf{b}_i^{\phantom{\trans}}.
\end{align}
Due to the use of a preconditioner, we must 
modify the trace estimation slightly and use a random vector with
$\mathbb{E}[\mathbf{b} \mathbf{b}^{\trans}] = \mathbf{P}$ 
(Section~\ref{sec:sampling_probe_vectors} shows how to sample such a vector).
Using the same
arguments that lead to \eqref{eq:stochastic_trace_estimation} we see that
\begin{align}
  \trace(\mathbf{A}) 
  &= \mathbb{E}[\mathbf{b}^{\trans} \mathbf{A} \mathbf{P}^{-1} \mathbf{b}] \label{eq:stochastic_trace_estimation_Pinv} \\
  &= \mathbb{E}[\mathbf{b}^{\trans} \mathbf{E}^{-\trans} \mathbf{A} \mathbf{E}^{-1} \mathbf{b}]. \label{eq:stochastic_trace_estimation_Einv}
\end{align}
Equation \eqref{eq:stochastic_trace_estimation_Pinv} is used to estimate the trace term,
\begin{align}
  \trace\left( \mathbf{C}^{-1} \pdv{\mathbf{C}}{\theta} \right) 
  &= \mathbb{E}[\mathbf{b}^{\trans} \mathbf{C}^{-1} \pdv{\mathbf{C}}{\theta} \mathbf{P}^{-1} \mathbf{b}] \nonumber \\
  &\approx \frac{1}{m} \sum_{i = 1}^{m} \mathbf{b}_{i}^{\trans} \mathbf{C}^{-1} \pdv{\mathbf{C}}{\theta} \mathbf{P}^{-1} \mathbf{b}_{i}, \label{eq:estimate_trace_term}
\end{align}
while \eqref{eq:stochastic_trace_estimation_Einv} and \eqref{eq:lanczos_quadrature_log_precond}
result in an estimate for the log-determinant of $\tilde{\mathbf{C}}$:
\begin{align}
  \log{|\tilde{\mathbf{C}}|} 
  &= \trace{\log(\tilde{\mathbf{C}})} \nonumber \\
  &= \mathbb{E}[\mathbf{b}^{\trans} \mathbf{E}^{-\trans} \log(\tilde{\mathbf{C}}) \mathbf{E}^{-1} \mathbf{b}] \nonumber \\
  &\approx \frac{1}{m} \sum_{i = 1}^{m} \mathbf{b}_{i}^{\trans} \mathbf{E}^{-\trans} \log(\tilde{\mathbf{C}}) \mathbf{E}^{-1} \mathbf{b}_{i} \nonumber \\
  &\approx \frac{1}{m} \sum_{i = 1}^{m} (\mathbf{b}_{i}^{\trans} \mathbf{P}^{-1} \mathbf{b}_{i}) \mathbf{e}_0^{\trans} \log(\tilde{\mathbf{T}}_{\ell,i}) \mathbf{e}_0. \label{eq:estimate_log_determinant}
\end{align}
It is important to note that \eqref{eq:estimate_trace_term}
uses stochastic trace estimation alone, while 
\eqref{eq:estimate_log_determinant} involves two separate approximations,
namely stochastic trace estimation and Lanczos quadrature (this combination is called stochastic Lanczos quadrature).

For each probe vector $\mathbf{b}_i$ a single run of the modified, preconditioned \ac{cg} algorithm
produces the linear solve $\mathbf{C}^{-1} \mathbf{b}_i$ used in \eqref{eq:estimate_trace_term}
as well as the Lanczos tridiagonal matrix $\tilde{\mathbf{T}}_{\ell,i}$ used in \eqref{eq:estimate_log_determinant}.
The preconditioner solve $\mathbf{P}^{-1} \mathbf{b}_i$ is used in both estimates and is easily obtained
from \eqref{eq:Pinv_woodbury} as already explained.

Having computed an estimate of $\log{|\tilde{\mathbf{C}}|}$, we need to recover
an estimate of $\log{|\mathbf{C}|}$. Using the fact that $\mathbf{C}$
and $\mathbf{P}$ have positive determinants (since they are \ac{spd})
we get
\begin{align}
  \log{|\tilde{\mathbf{C}}|} 
  = \log{|\mathbf{E}^{-1} \mathbf{C} \mathbf{E}^{-\trans}|}
  &= \log{|\mathbf{C} \mathbf{P}^{-1}|} \nonumber \\
  &= \log{|\mathbf{C}|} - \log{|\mathbf{P}|}
\end{align}
or simply
\begin{align}
  \log{|\mathbf{C}|} = 
  \log{|\mathbf{P}|} + \log{|\tilde{\mathbf{C}}|}.
\end{align}
The estimated log-determinant of $\mathbf{C}$ can thus be expressed as the sum of two
terms: A deterministic approximation based on the preconditioner and a stochastic estimate of the remainder.
If the preconditioner is sufficiently good, the remainder will be small and the associated stochastic noise
will also be small. This reduces the variance of the overall estimate of $\log{|\mathbf{C}|}$ and the \ac{mll}
\citep{wengerPreconditioningScalableGaussian2022}, which is very helpful when optimizing 
hyperparameters.

It is possible to achieve a similar variance reduction for the trace term by
simply adding and subtracting a preconditioner term:
\begin{align}
  \trace\left( \mathbf{C}^{-1} \pdv{\mathbf{C}}{\theta} \right)
  = \trace\left( \mathbf{P}^{-1} \pdv{\mathbf{P}}{\theta} \right)
  + \left[ 
    \trace\left( \mathbf{C}^{-1} \pdv{\mathbf{C}}{\theta} \right) 
  - \trace\left( \mathbf{P}^{-1} \pdv{\mathbf{P}}{\theta} \right) 
  \right]
\end{align}
The first term on the right-hand side is 
computed deterministically (via a closed-form expression),
while the term in square brackets is estimated by using \eqref{eq:estimate_trace_term}.

\subsection{Sampling the probe vectors} \label{sec:sampling_probe_vectors}

Consider two independent random vectors $\mathbf{v} \in \mathbb{R}^{k}$ and 
$\mathbf{w} \in \mathbb{R}^{N}$ such that 
\begin{subequations} \label{eq:random_vectors_vw}
  \begin{alignat}{5}
    \mathbb{E}[\mathbf{v} \mathbf{v}^{\trans}] &= \mathbf{I}_{k} &&,  & \quad \bm{\mu} &= \mathbb{E}[\mathbf{v}] &&= 0, \\
    \mathbb{E}[\mathbf{w} \mathbf{w}^{\trans}] &= \mathbf{I}_{N} &&,  & \quad \bm{\nu} &= \mathbb{E}[\mathbf{w}] &&= 0.
  \end{alignat}
\end{subequations}
In this paper, $\mathbf{v}$ and $\mathbf{w}$ are Rademacher random vectors.
Since the vectors are independent with mean zero we have
\begin{align} \label{eq:EvwT_zero}
  \mathbf{0} = \mathrm{cov}(\mathbf{v}, \mathbf{w})
  = \mathbb{E}[(\mathbf{v} - \bm{\mu})(\mathbf{w} - \bm{\nu})^{\trans}]
  = \mathbb{E}[\mathbf{v} \mathbf{w}^{\trans}].
\end{align}
Now let
\begin{align}
  \mathbf{b} = \mathbf{Z} \mathbf{v} + \sigma \mathbf{w}, \quad \mathbf{Z} \in \mathbb{R}^{N \times k}.
\end{align}
Expanding the terms and using \eqref{eq:random_vectors_vw} and \eqref{eq:EvwT_zero}
one easily finds that
\begin{align}
  \mathbb{E}[\mathbf{b} \mathbf{b}^{\trans}] 
  = \mathbf{Z} \mathbf{Z}^{\trans} + \sigma^2 \mathbf{I} 
  = \mathbf{P}.
\end{align}


\newpage
\section{Properties of Kronecker products} \label{appendix:properties_of_kronecker_products}
\renewcommand{\theequation}{\thesection\arabic{equation}}

The following theorem states a few important properties of the Kronecker product.

\begin{thm} \label{thm:properties_of_kronecker_products}
  Let $\mathbf{A}^{\!(m)}, \mathbf{B}^{(m)} \in \mathbb{R}^{n_m \times n_m}$ for $m = 1, 2, \ldots, D$.
  \begin{enumerate}[label=(\roman*)]
    \item The Kronecker product is linear in each factor (multilinear):
    \begin{equation}
      \mathbf{A}^{\!(1)} \otimes \cdots \otimes (\mathbf{A}^{\!(m)} + \mathbf{B}^{(m)}) \otimes \cdots \otimes \mathbf{A}^{\!(D)}
      \begin{alignedat}[t]{2}
      = \; &\mathbf{A}^{\!(1)} \otimes \cdots \otimes \mathbf{A}^{\!(m)} &&\otimes \cdots \otimes \mathbf{A}^{\!(D)}\\
      + \; &\mathbf{A}^{\!(1)} \otimes \cdots \otimes \mathbf{B}^{(m)} &&\otimes \cdots \otimes \mathbf{A}^{\!(D)}. \nonumber
      \end{alignedat}
    \end{equation}
    \item The Kronecker product is compatible with the ordinary matrix product (mixed-product property):
    \begin{equation}
      \big( \mathbf{A}^{\!(1)} \otimes  \cdots \otimes \mathbf{A}^{\!(D)} \big)
      \big( \mathbf{B}^{(1)} \otimes  \cdots \otimes \mathbf{B}^{(D)} \big)
      = 
      \big( \mathbf{A}^{\!(1)} \mathbf{B}^{(1)} \big) \otimes \cdots \otimes 
      \big( \mathbf{A}^{\!(D)} \mathbf{B}^{(D)} \big). \nonumber
    \end{equation}
    \item A Kronecker product is invertible if each factor is invertible. In that case
    \begin{equation}
      \big( \mathbf{A}^{\!(1)} \otimes \cdots \otimes \mathbf{A}^{\!(D)} \big)^{-1}
      =
      \big( \mathbf{A}^{\!(1)} \big)^{-1} \otimes \cdots \otimes \big(  \mathbf{A}^{\!(D)} \big)^{-1}. \nonumber
    \end{equation}
    \item Matrix-vector products with $\mathbf{A} = \mathbf{A}^{\!(1)} \otimes \cdots \otimes \mathbf{A}^{\!(D)}$
    can be written as a sequence of one-index contractions:
    \begin{align}
      \big( \mathbf{A} \mathbf{v} \big)_{\mathbf{i}} 
      &= \sum_{\mathbf{j}} A_{\mathbf{i} \mathbf{j}} v_{\mathbf{j}} \nonumber \\
      &= \sum_{j_1} A^{(1)}_{i_1 j_1} \sum_{j_2} A^{(2)}_{i_2 j_2} \cdots  \sum_{j_D} A^{(D)}_{i_D j_D} v_{i_1 i_2 \cdots i_D}. \nonumber
    \end{align}
    \item Matrix-vector product with a one-mode bracket matrix can be written as a single one-index contraction:
    \begin{align}
      \big( \mathbf{A}^{\![m]} \mathbf{v} \big)_{\mathbf{i}} 
      &= \sum_{\mathbf{j}} A^{[m]}_{\mathbf{i} \mathbf{j}} v_{\mathbf{j}} \nonumber \\
      &= \sum_{j_m} A^{(m)}_{i_m j_m} v_{i_1 \cdots i_{m-1} j_{m} i_{m + 1} \cdots i_D}. \nonumber
    \end{align}
    \item Assume that the matrices $\mathbf{A}^{(m)}$ are diagonalizable as 
    $\mathbf{A}^{(m)} = \mathbf{Q}^{(m)} \mathbf{D}^{(m)} \big( \mathbf{Q}^{(m)} )^{-1}$. Then the Kronecker product
    $\mathbf{A} = \mathbf{A}^{\!(1)} \otimes \cdots \otimes \mathbf{A}^{\!(D)}$ is also diagonalizable:
    \begin{gather}
      \mathbf{A} = \mathbf{Q} \mathbf{D} \mathbf{Q}^{-1}, \nonumber \\
      \begin{alignedat}[t]{3}
        \mathbf{Q} &= \mathbf{Q}^{(1)} &&\otimes \cdots \otimes  \mathbf{Q}^{(D)}&&, \nonumber \\
        \mathbf{D} &= \mathbf{D}^{(1)} &&\otimes \cdots \otimes  \mathbf{D}^{(D)}&&. \nonumber
      \end{alignedat}
    \end{gather}
    \item The bracket matrices commute in the sense that
    \begin{align}
      \big[\mathbf{A}^{\![m]}         , \mathbf{A}^{\![m]}          \big] &= 0 \quad\text{if}\quad m \neq m', \nonumber \\
      \big[\mathbf{A}^{\![\mathbf{m}]}, \mathbf{A}^{\![\mathbf{m}']}\big] &= 0 \quad\text{if}\quad \mathbf{m} \cap \mathbf{m}' = \varnothing. \nonumber 
    \end{align}
  \end{enumerate}
\end{thm}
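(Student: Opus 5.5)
The plan is to reduce everything to the entrywise definition $A_{\mathbf{i}\mathbf{j}} = \prod_m A^{(m)}_{i_m j_m}$ from Definition~\ref{def:kronecker_product}(i), and then check each item by elementary manipulation of finite sums over multi-indices. For (i), fix all factors except the $m$-th. The entry $A_{\mathbf{i}\mathbf{j}}$ is then a product of fixed scalars times $A^{(m)}_{i_m j_m}$, so it is linear in $\mathbf{A}^{(m)}$, and the identity follows entrywise. For (ii), expand $(\mathbf{A}\mathbf{B})_{\mathbf{i}\mathbf{k}} = \sum_{\mathbf{j}} \prod_m A^{(m)}_{i_m j_m} B^{(m)}_{j_m k_m}$. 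Because $\mathcal{I}$ is a Cartesian product, the sum over $\mathbf{j}$ factorizes into a product of $D$ one-dimensional sums:
\[
\prod_m \sum_{j_m} A^{(m)}_{i_m j_m} B^{(m)}_{j_m k_m} = \prod_m \big(\mathbf{A}^{(m)}\mathbf{B}^{(m)}\big)_{i_m k_m}.
\]
This interchange of sum and product is really the only step that needs care.

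Everything else is a corollary of (ii). For (iii), apply (ii) to $\mathbf{A}$ and $\bigotimes_m (\mathbf{A}^{(m)})^{-1}$. This gives $\bigotimes_m \mathbf{I}^{(m)}$, which entrywise is $\prod_m \delta_{i_m j_m} = \delta_{\mathbf{i}\mathbf{j}}$, i.e.\ the identity. Multiplying in the other order gives the identity as well.

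For (vi), apply (ii) twice to $\mathbf{Q}\mathbf{D}\mathbf{Q}^{-1}$, using (iii) to identify $\mathbf{Q}^{-1}$. It remains to note that $\mathbf{D}$ is diagonal: its entry is $\prod_m D^{(m)}_{i_m j_m}$, which vanishes unless $\mathbf{i}=\mathbf{j}$.

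For (vii), both products $\mathbf{A}^{[\mathbf{m}]}\mathbf{A}^{[\mathbf{m}']}$ and $\mathbf{A}^{[\mathbf{m}']}\mathbf{A}^{[\mathbf{m}]}$ reduce by (ii) to the same Kronecker product. When $\mathbf{m}\cap\mathbf{m}'=\varnothing$, each slot contains either $\mathbf{A}^{(m)}\mathbf{I}=\mathbf{I}\mathbf{A}^{(m)}$ or $\mathbf{I}\mathbf{I}$. The one-mode statement is the special case $\mathbf{m}=\{m\}$, $\mathbf{m}'=\{m'\}$ with $m\neq m'$; the displayed statement presumably intends $\mathbf{A}^{[m']}$ in the second slot.

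For (iv), write $(\mathbf{A}\mathbf{v})_{\mathbf{i}} = \sum_{j_1}\cdots\sum_{j_D} \prod_m A^{(m)}_{i_m j_m}\, v_{j_1\cdots j_D}$. Pull each factor $A^{(m)}_{i_m j_m}$ out of every sum that does not involve $j_m$, which yields the nested form. The summand should read $v_{j_1\cdots j_D}$; the index $i$ in the statement is a typo. For (v), specialize (iv) to $\mathbf{A}^{(m')}=\mathbf{I}$ for $m'\neq m$. The Kronecker deltas collapse every sum except the one over $j_m$.

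I do not expect a genuine obstacle here. The main points of care are the factorization of the multi-index sum in (ii), and keeping track of the typos just noted so that the stated identities are proved in their intended form.
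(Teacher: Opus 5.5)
Your proposal is correct and follows essentially the same route as the paper, which verifies (i)--(v) directly from the entrywise definition in Definition~\ref{def:kronecker_product}(i) and derives (vi) and (vii) from the mixed-product property (ii); obtaining (iii) from (ii) and (v) from (iv), rather than straight from the definition, is only a minor variation. You have also read both typos correctly: the summand in (iv) should be $v_{j_1 j_2 \cdots j_D}$, and the second argument of the first commutator in (vii) should be $\mathbf{A}^{\![m']}$.
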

\begin{proof}
  The statements (i)--(v) are easily verified using Definition~\ref{def:kronecker_product}(i). Statements (vi) and (vii) follow directly 
  from (ii).
\end{proof}

\subsection{Implications for complete-grid GPR}
Since a Kronecker product can be diagonalized in closed form via Theorem~\ref{thm:properties_of_kronecker_products}(vi),
this allows spherical noise to be easily added while maintaining structure:
\begin{gather}
  \mathbf{C} =
  \mathbf{K} + \sigma^2 \mathbf{I} 
  = \mathbf{Q} \mathbf{D} \mathbf{Q}^{\trans} + \sigma^2 \mathbf{I}
  = \mathbf{Q} \big( \mathbf{D} + \sigma^2 \mathbf{I} \big) \mathbf{Q}^{\trans}, \\
  \mathbf{Q} = \mathbf{Q}^{(1)} \otimes \cdots \otimes \mathbf{Q}^{(D)}, \quad 
  \mathbf{D} = \mathbf{D}^{(1)} \otimes \cdots \otimes \mathbf{D}^{(D)}, \quad \big( \mathbf{Q}^{(m)}  \big)^{\!\trans} \mathbf{K}^{(m)} \mathbf{Q}^{(m)} = \mathbf{D}^{(m)}.
\end{gather}
Noting the Kronecker structure of the eigenvector matrix,
Theorem~\ref{thm:properties_of_kronecker_products}(iv)
leads to inverse kernel \acp{mvp} (e.g. weights) with a complexity of only
$\mathcal{O}\big( (n_1 + \cdots + n_D) N \big) = \mathcal{O}(nDN)$ (assuming 1D grids of equal size, $n$).
Trace terms and log-determinants can be obtained with even less computational effort \citep[see][Chapter 5 for details]{saatciScalableInferenceStructured2012}.


\newpage
\section{Formal definition of subgrids and incomplete grids} \label{appendix:incomplete_grid_formal_def}
\renewcommand{\theequation}{\thesection\arabic{equation}}

\begin{definition} \label{def:subgrid}
  The subgrid $\mathcal{I}^{(\mathbf{m})}$ with displaced dimensions $\mathbf{m} = (m_1, m_2, \ldots, m_k)$
  and dimension $k = |\mathbf{m}|$
  is the set
  of multi-indices of the form
  \begin{align}
    \mathbf{i} = (i_1, i_2, \ldots, i_D) ,\quad
    i_m =
    \begin{dcases}[c]
      0   & \text{if } m \notin \mathbf{m}, \\
      a_m & \text{if } m \in    \mathbf{m},
    \end{dcases} \nonumber
  \end{align}
  where $0 < a_m < n_m$. Recording the non-zero indices in a vector 
  $\mathbf{a} = (a_{m_1}, a_{m_2}, \ldots, a_{m_k})$
  and making the association $\mathbf{i} \cong (\mathbf{m}, \mathbf{a})$,
  we may equivalently define $\mathcal{I}^{(\mathbf{m})}$ as the set of tuples
  \begin{align}
    (\mathbf{m}, \mathbf{a}) ,\quad
    \mathbf{a} \in 
    \mathcal{A}^{(\mathbf{m})} =
    \mathcal{A}^{(m_1)} \times
    \mathcal{A}^{(m_2)} \times
    \cdots \times
    \mathcal{A}^{(m_k)}, \nonumber
  \end{align}
  where $\mathcal{A}^{(m)} = \mathcal{I}^{(m)} \setminus \{0\}$ is the set
  of non-zero indices for dimension $m$. 
  Generally,
  \begin{align}
    |\mathcal{I}^{(\mathbf{m})}| 
    = \prod_{l = 1}^{k} |\mathcal{A}^{(m_l)}|
    = \prod_{l = 1}^{k} \big(|\mathcal{I}^{(m_l)}| - 1\big).
  \end{align}
  For later use, we note that
  $\mathcal{I}^{(\varnothing)}$ is the set containing the reference alone.
\end{definition}
With this in place, the following definition follows naturally:
\begin{definition} \label{def:incomplete_grid}
  A cut-based incomplete grid is a set of points $\hatcal{I} \subseteq \mathcal{I}$
  of the form
  \begin{align}
    \hatcal{I} = \bigcup_{\mathbf{m} \in \gridmcr} \mathcal{I}^{(\mathbf{m})}.
  \end{align}
  The set $\gridmcr$ specifies the cuts or subgrids that are included in $\hatcal{I}$.
\end{definition}
Definition~\ref{def:incomplete_grid} suggests a certain data structure for vectors indexed by $\hatcal{I}$
(e.g. the vector of observed data, $\hatbold{y}$): The reference data point is a scalar,
the 1D data is organized as a set of vectors, the 2D data as a set of matrices and so on.
In short, a low-dimensional \textit{subgrid}, $\mathcal{I}^{(\mathbf{m})}$, translates to a low-dimensional \textit{subtensor}, $\mathbf{y}^{(\mathbf{m})}$
(cf. Example~\ref{example:subgrids}, Figure~\ref{fig:subgrids} and Figure~\ref{fig:3D_tensor}).

\begin{figure}[H]
\centering
\includegraphics[width=\linewidth]{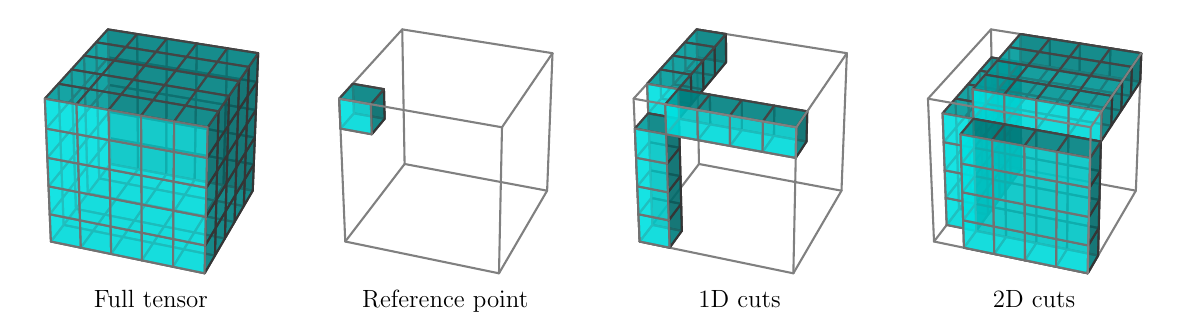}
\caption{Subtensors corresponding to the subgrids in Example~\ref{example:subgrids}.}
\label{fig:3D_tensor}
\end{figure}


\newpage
\section{A generalized notion of CUD} \label{appendix:general_cud_framework}
\renewcommand{\theequation}{\thesection\arabic{equation}}

It turns out that the version of the \ac{cud} property given in the main text (Definition~\ref{def:cud})
can be generalized considerably, as described by \citet{holzmullerFastSparseGrid2021}.
Here, we state a few selected definitions from their work that indicate how
our GPR approach can be extended to other types of grids.
The framework of \citet{holzmullerFastSparseGrid2021} relies heavily on the notion of a preorder:
\begin{definition}
A preorder is a binary relation $\leq$ on a set $\mathcal{S}$ that satisfies (i) and (ii) for all $i,j,k \in \mathcal{S}$: 
\begin{enumerate}[label=(\roman*), nosep]
  \item Reflexivity: $i \leq i$ (every element is related to itself).
  \item Transitivity: $i \leq j$ and $j \leq k$ implies $i \leq k$.
  \item Antisymmetry: $i \leq j$ and $j \leq i$ implies $i = j$ (no two distinct elements precede each other).
  \item Strong connectedness: $i \leq j$ or $j \leq i$ (all pairs of elements are related).
\end{enumerate}
A partial order satisfies (i)--(iii), while a total order (or linear order)
satisfies (i)--(iv).
\end{definition}
Having defined the concept of a preorder, we consider an index set $\mathcal{I}$
and a subset $\hatcal{I} \in \mathcal{I}$ which is \ac{cud} in the following, generalized sense:
\begin{definition} 
The subset $\hatcal{I} \subset \mathcal{I}$ is \ac{cud} 
if $i \leq j$ implies $i \in \hatcal{I}$
for all $i \in \mathcal{I}$ and $j \in \hatcal{I}$.
$\hatcal{I}$ is also said to be downward closed with respect to $\leq$.
\end{definition}
We also need a generalized notion of triangularity \citep[see][Definition 1]{holzmullerFastSparseGrid2021}:
\begin{definition} 
Let $\mathcal{I}$ be an index set equipped with a preorder, $\leq$. 
In addition, let $\mathbf{A}$ be a square matrix of size $|\mathcal{I}|$.
We define $\leq$-triangularity in the following way:
\begin{enumerate}[label=(\roman*), nosep]
  \item $\mathbf{A}$ is called lower $\leq$-triangular if $A_{ij} \neq 0$ implies $j \leq i$ for all $i,j\in \mathcal{I}$.
  \item $\mathbf{A}$ is called upper $\leq$-triangular if $A_{ij} \neq 0$ implies $i \leq j$ for all $i,j\in \mathcal{I}$.
\end{enumerate}
\end{definition}

In the context of grids, it is natural to introduce 1D index sets $\mathcal{I}^{(d)}$ equipped
with preorders $\leq_{d}$ (note that the 1D indices need not be integers).
The corresponding multi-indices are defined as in the main text:
\begin{align}
  \mathcal{I} = \mathcal{I}^{(1)} \times \cdots \times \mathcal{I}^{(D)}.
\end{align}
The 1D preorders in turn induce a preorder $\leq$ on $\mathcal{I}$ \citep[see][Definition 2]{holzmullerFastSparseGrid2021}:
\begin{definition}
  For multi-indices $\mathbf{i}, \mathbf{j} \in \mathcal{I}$, we define $\mathbf{i} \leq \mathbf{j}$
  if and only if $i_d \leq_d j_d$ for all $1 \leq d \leq D$. 
\end{definition}

With these definitions in place, Theorems~\ref{thm:chopping} and \ref{thm:chopped_one_mode_contraction} hold as they stand 
\citep[see][Theorems 1 and 3, for proofs and formal statements]{holzmullerFastSparseGrid2021}.
These theorems allow kernel \acp{mvp} to be computed without reference to the underlying complete grid, thus eliminating
the curse of dimensionality. The concrete computational complexity will, however, depend on the details of the grid.

One grid type of interest is the classical sparse grid \citep{bungartzSparseGrids2004}, whose
hierarchical structure makes it \ac{cud}.
Sparse grids are discussed in detail by \citet{holzmullerFastSparseGrid2021}, so
we will not consider them any further apart from mentioning their 
potential use in a \ac{gpr} context.

In the main text we considered the 1D index sets
\begin{align}
  \mathcal{I}^{(m)} = \{ 0, 1, \ldots, n_m - 1 \},
\end{align}
and we tacitly assumed a particular preorder,
namely the standard order on the integers. A matrix that is triangular 
with respect to the standard order is simply triangular in the ordinary sense.
It is, however, possible to choose a different preorder under which
the \ac{mcr} grid (Definitions~\ref{def:subgrid} and \ref{def:incomplete_grid})
is still \ac{cud}, i.e. a different preorder that is consistent with the \ac{mcr}
framework.
The key observation is that the non-zero indices 
$a_m \in \{1, 2, \ldots, n_m - 1 \} = \mathcal{A}^{(m)}$ are all treated on an equal footing. Only
the reference index is special.
This suggests the following preorder on $\mathcal{I}^{(m)}$:\newpage
\begin{exmp} \label{example:preorder_block_triangular}
  Let $\mathcal{I}^{(m)} = \{ 0, 1, \ldots, n - 1 \}$
  and $\mathcal{A}^{(m)} = \{1, 2, \ldots, n - 1 \}$.
  Let $\leq_m$ be a binary relation on $\mathcal{I}^{(m)}$ defined by:
  \begin{enumerate}[label=(\roman*), nosep]
    \item $0 \leq_m 0$.
    \item $0 \leq_m a$ for all $a    \in \mathcal{A}^{(m)}$.
    \item $a \leq_m b$ for all $a, b \in \mathcal{A}^{(m)}$.
  \end{enumerate}
  $\leq_m$ is easily verified to be a preorder. 
  Let $\bar{n} = n - 1$.
  The following square matrices of size $n$
  are lower and upper triangular, respectively, with respect to $\leq_m$:
  \begin{align}
    \mathbf{L}^{\!(m)} =
    \left[
    \begin{array}{c|cccc}
      l_{0,0}       & 0             & 0             & \cdots & 0             \dhstrut{6pt}{0pt}  \\ \hline
      l_{1,0}       & l_{1,1}       & l_{1,2}       & \cdots & l_{1,\bar{n}} \dhstrut{0pt}{10pt} \\
      l_{2,0}       & l_{2,1}       & l_{2,2}       & \cdots & l_{2,\bar{n}}       \\
      \vdots        & \vdots        & \vdots        & \ddots & \vdots              \\
      l_{\bar{n},0} & l_{\bar{n},1} & l_{\bar{n},2} & \cdots & l_{\bar{n},\bar{n}} \\
    \end{array}
    \right],
    \quad
    \mathbf{U}^{(m)} =
    \left[
    \begin{array}{c|cccc}
      u_{0,0} & u_{0,1}       & u_{0,2}       & \cdots & u_{0,\bar{n}} \dhstrut{6pt}{0pt}  \\ \hline
      0       & u_{1,1}       & u_{1,2}       & \cdots & u_{1,\bar{n}} \dhstrut{0pt}{10pt} \\
      0       & u_{2,1}       & u_{2,2}       & \cdots & u_{2,\bar{n}}       \\
      \vdots  & \vdots        & \vdots        & \ddots & \vdots              \\
      0       & u_{\bar{n},1} & u_{\bar{n},2} & \cdots & u_{\bar{n},\bar{n}} \\
    \end{array}
    \right].
  \end{align}
\end{exmp}


\newpage
\section{Chopped Kronecker products} \label{appendix:chopped_kronecker_products}
\renewcommand{\theequation}{\thesection\arabic{equation}}
\renewcommand{\thethm}{\thesection\arabic{thm}}

The following theorems provide the mathematical underpinning that enables our work:
\begin{thm} \label{thm:chopping}
  Let the index set $\hatcal{I}$ be \ac{cud} and let $\mathbf{A}, \mathbf{B} \in \mathbb{R}^{N \times N}$.
  We then have:
  \begin{enumerate}[label=(\roman*)]
  \item If $\mathbf{A}$ is lower triangular or $\mathbf{B}$ is upper triangular, 
  then $\widehat{\mathbf{A} \mathbf{B}} = \hatbold{A} \hatbold{B}$. \label{thm:chopping_i}
  \item If $\mathbf{A}$ is lower (upper) triangular and invertible, then $\mathbf{A}^{-1}$ is lower (upper) triangular
  and $\hatbold{A}$ is invertible with $\hatbold{A}^{-1} = \widehat{\mathbf{A}^{-1}}$. \label{thm:chopping_ii}
  \end{enumerate}
\end{thm}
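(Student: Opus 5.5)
The argument is entrywise and uses only the downward-closed structure of $\hatcal{I}$. Triangularity is taken with respect to the componentwise (product) order on multi-indices: $\mathbf{A}$ is lower triangular if $A_{\mathbf{i}\mathbf{j}} \neq 0$ implies $\mathbf{j} \leq \mathbf{i}$, and upper triangular if it implies $\mathbf{i} \leq \mathbf{j}$. This is the notion relevant for Kronecker products of 1D triangular factors: a Kronecker product of lower (upper) triangular base matrices is lower (upper) triangular in this sense. It is also the sense in which the statement is used for $\mathbf{L}$, $\mathbf{U}$ and $\mathbf{R}^{[\neg\mathbf{m}]}$.

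\textbf{Step 1: downward closure.} Repeated application of Definition~\ref{def:cud} shows that $\mathbf{j} \leq \mathbf{i}$ componentwise and $\mathbf{i} \in \hatcal{I}$ imply $\mathbf{j} \in \hatcal{I}$. To see this, decrement one coordinate at a time from $\mathbf{i}$ down to $\mathbf{j}$; each step stays in $\hatcal{I}$.

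\textbf{Part (i).} Fix $\mathbf{i}, \mathbf{k} \in \hatcal{I}$. Then
\begin{align}
(\widehat{\mathbf{A}\mathbf{B}})_{\mathbf{i}\mathbf{k}} = \sum_{\mathbf{j} \in \mathcal{I}} A_{\mathbf{i}\mathbf{j}} B_{\mathbf{j}\mathbf{k}}, \qquad (\hatbold{A}\hatbold{B})_{\mathbf{i}\mathbf{k}} = \sum_{\mathbf{j} \in \hatcal{I}} A_{\mathbf{i}\mathbf{j}} B_{\mathbf{j}\mathbf{k}}. \nonumber
\end{align}
It suffices to show that every term with $\mathbf{j} \notin \hatcal{I}$ vanishes.
\begin{itemize}
\item If $\mathbf{A}$ is lower triangular, $A_{\mathbf{i}\mathbf{j}} \neq 0$ forces $\mathbf{j} \leq \mathbf{i} \in \hatcal{I}$, hence $\mathbf{j} \in \hatcal{I}$ by Step 1.
\item If $\mathbf{B}$ is upper triangular, $B_{\mathbf{j}\mathbf{k}} \neq 0$ forces $\mathbf{j} \leq \mathbf{k} \in \hatcal{I}$, hence $\mathbf{j} \in \hatcal{I}$ by Step 1.
\end{itemize}
In matrix language, $\mathbf{\Gamma}^{\trans}\mathbf{A}(\mathbf{I} - \mathbf{\Gamma}\mathbf{\Gamma}^{\trans}) = 0$ in the first case, and symmetrically in the second.

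\textbf{Part (ii), triangularity of the inverse.} This is the main obstacle, because the componentwise order is only a partial order. First extend $\leq$ to a linear order on $\mathcal{I}$, for example the lexicographic order, which is compatible with the componentwise order. In that ordering $\mathbf{A}$ is triangular in the usual sense, so $\mathbf{A}$ has nonzero diagonal and $\mathbf{A}^{-1}$ is triangular in the linear sense. This does not yet give triangularity with respect to the partial order.

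For that I would use the Neumann series. Write $\mathbf{A} = \mathbf{D}(\mathbf{I} - \mathbf{N})$ with $\mathbf{D}$ diagonal and invertible, and $\mathbf{N}$ strictly lower triangular with respect to $\leq$. The order on the finite set $\mathcal{I}$ has finite chains, so $\mathbf{N}$ is nilpotent: $(\mathbf{N}^p)_{\mathbf{i}\mathbf{j}}$ is a sum over strict chains $\mathbf{j} < \cdots < \mathbf{i}$ of length $p$, and these do not exist for large $p$. Hence
\begin{align}
\mathbf{A}^{-1} = \sum_{p \geq 0} \mathbf{N}^p \mathbf{D}^{-1} \nonumber
\end{align}
is a finite sum. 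Each term is lower triangular with respect to $\leq$ by transitivity, so $\mathbf{A}^{-1}$ is lower triangular.

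\textbf{Part (ii), inverse of the chopped matrix.} Apply (i) to the products $\mathbf{A}\mathbf{A}^{-1}$ and $\mathbf{A}^{-1}\mathbf{A}$; in each, one of the two factors is lower triangular, so (i) applies to both. Since $\widehat{\mathbf{I}} = \mathbf{I}$, this gives
\begin{align}
\mathbf{I} = \hatbold{A}\,\widehat{\mathbf{A}^{-1}} = \widehat{\mathbf{A}^{-1}}\,\hatbold{A}. \nonumber
\end{align}
Therefore $\hatbold{A}$ is invertible with inverse $\widehat{\mathbf{A}^{-1}}$. The upper-triangular case follows by transposition, using $\widehat{\mathbf{A}^{\trans}} = \hatbold{A}^{\trans}$.
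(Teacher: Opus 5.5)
Your proof is correct, but note that the paper gives no argument of its own here: it simply cites Holzmüller and Pflüger (Theorem~1), so what you supply is a self-contained proof. Your reading of ``triangular'' as triangular with respect to the componentwise order is the right one. For the flattened lexicographic order, part (i) fails, because a CUD set need not be a lexicographic down-set.

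Two of your steps carry over verbatim to the general setting of Appendix~\ref{appendix:general_cud_framework}: part (i), and the deduction of $\hatbold{A}^{-1} = \widehat{\mathbf{A}^{-1}}$ by applying (i) to $\mathbf{A}\mathbf{A}^{-1}$ and $\mathbf{A}^{-1}\mathbf{A}$.

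The one step that is narrower than the cited result is the triangularity of $\mathbf{A}^{-1}$. The cited theorem is formulated for preorders, and Appendix~\ref{appendix:general_cud_framework} relies on that when it asserts the theorem still holds for the generalized CUD notion. Your argument (lexicographic extension, nonzero diagonal, then Neumann series) uses antisymmetry. Under the block-triangular preorder of Example~\ref{example:preorder_block_triangular}, an invertible lower $\le$-triangular matrix can have zeros on its diagonal, e.g.\ a permutation within the block of nonzero indices. Then your $\mathbf{D}$ is not invertible.

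A shorter route works for any preorder. Lower $\le$-triangular matrices contain $\mathbf{I}$ (by reflexivity) and are closed under sums and products (by transitivity). By Cayley--Hamilton, $\mathbf{A}^{-1}$ is a polynomial in $\mathbf{A}$, hence lower $\le$-triangular. With that substitution, and Step~1 replaced by the definition of downward closure, your argument covers the theorem in the full generality in which the paper later invokes it.
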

\begin{proof}
  See \citet[Theorem 1]{holzmullerFastSparseGrid2021}.
\end{proof}
\begin{thm} \label{thm:chopped_one_mode_contraction}
Define the notation
\begin{align}
  (j_m | \mathbf{i}_{-m})   &= (i_1, \ldots, i_{m-1}, j_m, i_{m+1}, \ldots, i_D), \nonumber \\
  \hatcal{I}(\mathbf{i}, m) &= \{ j_m \in \mathcal{I}^{(m)} \;|\; (j_m | \mathbf{i}_{-m}) \in \hatcal{I} \}. \nonumber
\end{align}
Using this notation, a matrix-vector product of the form $\hatbold{w} = \hatboldsuper{A}{[m]} \hatbold{v}$ can be computed as
\begin{align}
  \hatbold{w}_{\mathbf{i}}
  = \sum_{j_m \in \hatcal{I}(\mathbf{i}, m)} A^{(m)}_{i_m j_m} v_{(j_m | \mathbf{i}_{-m})}, \quad \mathbf{i} \in \hatcal{I}. \nonumber
\end{align}
The complexity is at most $\mathcal{O}(|\mathcal{I}^{(m)}| \cdot |\hatcal{I}|)$.
\end{thm}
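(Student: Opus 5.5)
The plan is to write out the chopped product entrywise and check that the restriction to $\hatcal{I}$ removes exactly the terms that vanish by definition. By definition of the chopping matrix, $\hatbold{w} = \mathbf{\Gamma}^{\trans} \mathbf{A}^{[m]} \mathbf{\Gamma} \hatbold{v}$. Here $\mathbf{\Gamma} \hatbold{v}$ is the vector on the complete grid that agrees with $\hatbold{v}$ on $\hatcal{I}$ and vanishes on $\mathcal{I} \setminus \hatcal{I}$, and $\mathbf{\Gamma}^{\trans}$ simply reads off the entries indexed by $\mathbf{i} \in \hatcal{I}$. So for $\mathbf{i} \in \hatcal{I}$ we have
\begin{align}
  \hatbold{w}_{\mathbf{i}} = \sum_{\mathbf{j} \in \hatcal{I}} A^{[m]}_{\mathbf{i}\mathbf{j}} v_{\mathbf{j}}. \nonumber
\end{align}

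Next I insert the entrywise form of the one-mode bracket matrix from Definition~\ref{def:kronecker_product}, equivalently Theorem~\ref{thm:properties_of_kronecker_products}(v). This gives $A^{[m]}_{\mathbf{i}\mathbf{j}} = A^{(m)}_{i_m j_m} \prod_{d \neq m} \delta_{i_d j_d}$. The Kronecker deltas force $\mathbf{j} = (j_m | \mathbf{i}_{-m})$, so the sum over $\mathbf{j} \in \hatcal{I}$ reduces to a sum over those $j_m \in \mathcal{I}^{(m)}$ with $(j_m | \mathbf{i}_{-m}) \in \hatcal{I}$. That set is precisely $\hatcal{I}(\mathbf{i}, m)$, which yields the stated formula. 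Only indices in $\hatcal{I}$ are ever referenced, both as outputs $\mathbf{i}$ and as inputs $(j_m | \mathbf{i}_{-m})$.

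For the complexity, each output entry $\mathbf{i} \in \hatcal{I}$ needs $|\hatcal{I}(\mathbf{i}, m)| \le |\mathcal{I}^{(m)}|$ multiply-adds, for a total of at most $|\mathcal{I}^{(m)}| \cdot |\hatcal{I}|$. The main obstacle is not the algebra but making this count honest: enumerating $\hatcal{I}(\mathbf{i}, m)$ must not cost more than its size. Here I would use that $\hatcal{I}$ is \ac{cud}, although the formula itself does not need it. Downward closedness implies that $\hatcal{I}(\mathbf{i}, m)$ is an initial segment $\{0, 1, \ldots, r-1\}$ of $\mathcal{I}^{(m)}$. The plan is then to group $\hatcal{I}$ into fibres along mode $m$, i.e.\ classes of indices sharing $\mathbf{i}_{-m}$, each of length $r$. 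On each fibre I perform a dense $r \times r$ product with the leading block of $\mathbf{A}^{(m)}$, at cost $r^2 \le n_m r$. Summing over fibres gives $\mathcal{O}(n_m \widehat{N})$.

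In practice the fibres can be enumerated directly from the data structure of Definition~\ref{def:incomplete_grid}: for the cut-based grid, a fibre through mode $m$ consists of the index with $i_m = 0$ in the subgrid of $\mathbf{m}\setminus\{m\}$, together with the $a_m$-row in the subgrid of $\mathbf{m}\cup\{m\}$ when that subgrid is present. This gives a constant-time lookup of the fibre for each index. For a general \ac{cud} set one precomputes these fibres once, or alternatively defers to \citet[Theorem 3]{holzmullerFastSparseGrid2021}.
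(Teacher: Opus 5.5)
Your proof is correct, and it takes a different route from the paper only in that the paper has no argument of its own here: it just points to \citet[Theorem 3]{holzmullerFastSparseGrid2021}.

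Your entrywise derivation (chopping as restriction, then the Kronecker-delta form of $A^{[m]}_{\mathbf{i}\mathbf{j}}$ collapsing the sum onto $\hatcal{I}(\mathbf{i},m)$) is therefore a self-contained replacement for that citation. Your fibre-wise cost count is essentially the pole-wise argument underlying the unidirectional principle in that reference.

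It also lines up with what the paper does afterwards in Appendix~\ref{appendix:contractions_mcr_tensor}. For a cut-based grid, every fibre along mode $m$ has length $1$ or $n_m$, because its points with $i_m>0$ all lie in a single subgrid, which is either in $\gridmcr$ or not. The dense product with $\mathbf{A}^{(m)}$ on a full fibre splits exactly into the passive, down, up and forward pieces of Algorithm~\ref{alg:chopped_one_mode_contraction_subtensor_final}. On a length-one fibre only the passive piece survives. The paper simply regroups fibres by mode combination so that each piece becomes one tensor operation.

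Your remark that the formula needs no CUD assumption is right, and the count does not need it either. For an arbitrary subset $\hatcal{I}$, a fibre whose mode-$m$ indices form $S \subseteq \mathcal{I}^{(m)}$ costs $|S|^2 \le n_m |S|$ using the $S \times S$ submatrix of $\mathbf{A}^{(m)}$. CUD merely makes that block the leading one.

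Consequently your argument carries over unchanged to the preorder-based CUD of Appendix~\ref{appendix:general_cud_framework}, which is the setting of the cited theorem. Where CUD is genuinely indispensable is Theorem~\ref{thm:chopping}, which is what makes chaining such one-mode products exact.
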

\begin{proof}
  See \citet[Theorem 3]{holzmullerFastSparseGrid2021}.
\end{proof}
\begin{cor} \label{cor:chopped_kronecker_product}
  Let $\{ \mathbf{A}^{\!(m)} \}_{m=1}^{D}$ be square matrices
  of size $n_m$ with LU decompositions 
  $\mathbf{A}^{\!(m)} = \mathbf{L}^{(m)} \mathbf{U}^{(m)}$. 
  For a fixed $k \in \{1, 2, \ldots, D \}$, let $\mathbf{M}^{(k)} \in \mathbb{R}^{n_{k} \times n_{k}}$ 
  be a square matrix that does not possess an LU decomposition. We define
  \begin{alignat}{3}
    \mathbf{A} 
    &= \mathbf{A}^{\!(1)} \otimes \cdots \otimes \mathbf{A}^{\!(k)} &&\otimes \cdots \otimes \mathbf{A}^{\!(D)} 
    &&= \bigg( \prod_{m} \mathbf{L}^{[m]} \bigg) \bigg( \prod_{m} \mathbf{U}^{[m]} \bigg), \nonumber \\
    \mathbf{A}^{\!\prime} 
    &= \mathbf{A}^{\!(1)} \otimes \cdots \otimes \mathbf{M}^{(k)}   &&\otimes \cdots \otimes \mathbf{A}^{\!(D)}
    &&= \bigg( \prod_{m \neq k} \mathbf{L}^{[m]} \bigg) \mathbf{M}^{[k]} \bigg( \prod_{m \neq k} \mathbf{U}^{[m]} \bigg). \nonumber
  \end{alignat}
  As a direct consequence of Theorem~\ref{thm:chopping}, we have
  \begin{align}
    \widehat{\mathbf{A}}            = \bigg( \prod_{m}        \hatboldsuper{L}{[m]} \bigg)                       \bigg( \prod_{m}        \hatboldsuper{U}{[m]} \bigg), \quad
    \widehat{\mathbf{A}^{\!\prime}} = \bigg( \prod_{m \neq k} \hatboldsuper{L}{[m]} \bigg) \hatboldsuper{M}{[m]} \bigg( \prod_{m \neq k} \hatboldsuper{U}{[m]} \bigg). \nonumber
  \end{align}
  The chopped Kronecker products $\widehat{\mathbf{A}}$ and $\widehat{\mathbf{A}^{\!\prime}}$ can be multiplied with a vector by repeated application of
  Theorem~\ref{thm:chopped_one_mode_contraction}. The complexity is at most $\mathcal{O}\big( (n_1 + \cdots + n_D) \cdot |\hatcal{I}|\big)$.
\end{cor}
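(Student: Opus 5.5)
The plan is to reduce everything to Theorem~\ref{thm:chopping}\ref{thm:chopping_i} applied repeatedly, using the bracket-matrix factorizations of $\mathbf{A}$ and $\mathbf{A}^{\!\prime}$. I first check the stated factorizations on the complete grid. By the mixed-product property (Theorem~\ref{thm:properties_of_kronecker_products}(ii)), $\mathbf{A}^{\!(m)} = \mathbf{L}^{(m)}\mathbf{U}^{(m)}$ gives
\[
\mathbf{A} = \bigotimes_m \mathbf{L}^{(m)}\mathbf{U}^{(m)} = \Big(\bigotimes_m \mathbf{L}^{(m)}\Big)\Big(\bigotimes_m \mathbf{U}^{(m)}\Big),
\]
and $\bigotimes_m \mathbf{L}^{(m)} = \prod_m \mathbf{L}^{[m]}$ by the same property, since the one-mode brackets act on distinct factors and commute. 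For $\mathbf{A}^{\!\prime}$ I write the $k$-th factor as $\mathbf{I}\,\mathbf{M}^{(k)}\mathbf{I}$ and every other factor as $\mathbf{L}^{(m)}\mathbf{U}^{(m)}$. This gives $\big(\prod_{m\neq k}\mathbf{L}^{[m]}\big)\mathbf{M}^{[k]}\big(\prod_{m\neq k}\mathbf{U}^{[m]}\big)$.

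Next I record that each $\mathbf{L}^{[m]}$ is lower triangular and each $\mathbf{U}^{[m]}$ is upper triangular on $\mathcal{I}$, with respect to the componentwise order used in Definition~\ref{def:cud}. Indeed, $L^{[m]}_{\mathbf{i}\mathbf{j}} = L^{(m)}_{i_mj_m}\prod_{d\neq m}\delta_{i_dj_d}$ is nonzero only if $j_m\le i_m$ and $j_d=i_d$ for $d\neq m$, which means $\mathbf{j}\le\mathbf{i}$. Products of lower (upper) triangular matrices remain lower (upper) triangular, so $\mathcal{L}=\prod\mathbf{L}^{[m]}$ is lower triangular and $\mathcal{U}=\prod\mathbf{U}^{[m]}$ is upper triangular. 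Here I take "triangular" in the partial-order sense of Holzmüller and Pflüger, which is what Theorem~\ref{thm:chopping} actually uses. Stating this precisely is the step I expect to need the most care, since a lexicographic total order would not be compatible with downward closedness. The key point is that componentwise triangularity is exactly the notion under which CUD sets behave well.

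Then I peel off factors one at a time. Theorem~\ref{thm:chopping}\ref{thm:chopping_i} with $\mathbf{A}=\mathbf{L}^{[1]}$ lower triangular gives $\widehat{\mathbf{L}^{[1]}\mathbf{X}} = \hatboldsuper{L}{[1]}\hatbold{X}$ for any $\mathbf{X}$. Inducting over the $L$ factors, and then over the $U$ factors from the right (each time with $\mathbf{B}$ upper triangular), gives both displayed chopped identities. The middle factor $\mathbf{M}^{[k]}$ requires no triangularity, because it is always isolated between a lower factor on the left and an upper factor on the right. The one exception is the degenerate case where no such factor exists, and there the identity is trivial.

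For the complexity, I apply $\widehat{\mathbf{A}}$ or $\widehat{\mathbf{A}^{\!\prime}}$ to $\hatbold{v}$ as a sequence of at most $D$ chopped one-mode products. Each one is computed by Theorem~\ref{thm:chopped_one_mode_contraction} at cost $\mathcal{O}(n_m|\hatcal{I}|)$ and never leaves $\hatcal{I}$. In $\widehat{\mathbf{A}}$ each mode appears twice (once via $L$, once via $U$), and in $\widehat{\mathbf{A}^{\!\prime}}$ mode $k$ appears once. Summing over the modes gives $\mathcal{O}\big((n_1+\cdots+n_D)|\hatcal{I}|\big)$, since the factor $2$ is absorbed into the constant.
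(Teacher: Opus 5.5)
Your proposal is correct and follows the paper's route. The paper states the chopped factorizations as a direct consequence of Theorem~\ref{thm:chopping}\ref{thm:chopping_i}, applied factor by factor, and obtains the complexity by repeated use of Theorem~\ref{thm:chopped_one_mode_contraction}; your reading of triangularity in the componentwise sense rather than the lexicographic one is the right one. One small slip: the sequence consists of $2D$ one-mode products for $\widehat{\mathbf{A}}$ and $2D-1$ for $\widehat{\mathbf{A}^{\!\prime}}$, not ``at most $D$'', although your later per-mode count and the final bound are correct.
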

The details of how to implement Theorem~\ref{thm:chopped_one_mode_contraction}
depend on the precise nature of $\hatcal{I}$. 
In our case, an algorithm can be formulated entirely in terms of the subtensors $\mathbf{v}^{(\mathbf{m})}$ and $\mathbf{w}^{(\mathbf{m})}$
rather than the individual elements of $\hatbold{v}$ and $\hatbold{w}$ (see Appendix~\ref{appendix:contractions_mcr_tensor}).
This also means that the scalar multiplication in Theorem~\ref{thm:chopped_one_mode_contraction} 
is replaced by standard operations on the subtensors (scaling, outer products and one-index contractions),
which is a great advantage in terms of performance.

For completeness, we note that Theorem~\ref{thm:chopping}\ref{thm:chopping_ii}
allows direct inversion of a separable kernel defined over an incomplete grid.
To see this, consider the complete-grid kernel in \eqref{eq:separable_kernel_complete_grid} and introduce the
LU (Cholesky) decomposition of the base kernels. We then have
\begin{align}
  \mathbf{K} 
  = \mathbf{K}^{[1]} \mathbf{K}^{[2]} \cdots \mathbf{K}^{[D]}
  = \big( \mathbf{L}^{[1]} \mathbf{L}^{[2]} \cdots \mathbf{L}^{[D]}  \big)
     \big( \mathbf{U}^{[1]} \mathbf{U}^{[2]} \cdots \mathbf{U}^{[D]}  \big).
\end{align}
By applying Theorem~\ref{thm:chopping}, we find
\begin{align} 
  \hatbold{K} &= 
  \big( \hatboldsuper{L}{[1]} \hatboldsuper{L}{[2]} \cdots \hatboldsuper{L}{[D]} \big)
  \big( \hatboldsuper{U}{[1]} \hatboldsuper{U}{[2]} \cdots \hatboldsuper{U}{[D]} \big), \label{eq:separable_kernel_incomplete_grid} \\
  \hatbold{K}^{-1} &= 
  \big( \hatboldsupertilde{U}{[1]} \hatboldsupertilde{U}{[2]} \cdots \hatboldsupertilde{U}{[D]} \big)
  \big( \hatboldsupertilde{L}{[1]} \hatboldsupertilde{L}{[2]} \cdots \hatboldsupertilde{L}{[D]} \big) \label{eq:separable_kernel_incomplete_grid_inverse}
\end{align}
where we have temporarily used an overbar to denote the inverse to avoid clutter.
For example,
\begin{align}
  \bar{\mathbf{U}}^{[m]}
  = \big( \mathbf{U}^{[m]} \big)^{-1}
  = \mathbf{I}^{(1)} \otimes \cdots \otimes \big( \mathbf{U}^{(m)} \big)^{-1} \otimes \cdots \otimes \mathbf{I}^{(D)}.
\end{align}
Unfortunately, this format in \eqref{eq:separable_kernel_incomplete_grid} does not allow addition of noise, 
in contrast to
the complete-grid method of \citet{ishidaHierarchicalAdditiveInteraction2025},
which is based on eigendecomposition rather than LU decomposition.
Even in applications where the data is essentially noiseless,
noise is usually required to regularize the inverse. 
Although we are not willing to rule out this approach completely, our
initial investigations were not promising due to very ill-conditioned
numerics. In addition, the \ac{mvp} complexity associated with \eqref{eq:separable_kernel_incomplete_grid}
and \eqref{eq:separable_kernel_incomplete_grid_inverse}
is $\mathcal{O}(nD\widehat{N})$ compared to $\mathcal{O}(n \alpha \widehat{N})$,
which is achieved by CUTS-GPR. When $\alpha \ll D$, this difference
is very significant. Finally, we note that the additive kernel used in
CUTS-GPR is often more attractive than a separable kernel, especially
in high dimension.


\newpage
\section{Contractions with an MCR tensor} \label{appendix:contractions_mcr_tensor}
\renewcommand{\theequation}{\thesection\arabic{equation}}

The aim of this appendix is to specialize Theorem~\ref{thm:chopped_one_mode_contraction}
to our \ac{mcr}-based framework.
Theorem~\ref{thm:chopped_one_mode_contraction} suggests an algorithm with an outer loop over the elements of the result vector, $\hatbold{w}$.
It is instructive, however, to start by considering a simplistic algorithm where the outer loop runs over the input vector, $\hatbold{v}$, instead:
\begin{center}
  \begin{minipage}{0.7\linewidth}
    \begin{algorithm}[H]
      \caption{Compute $\hatbold{w} = \hatboldsuper{A}{[m]} \hatbold{v}$ using Theorem~\ref{thm:chopped_one_mode_contraction}.} \label{alg:chopped_one_mode_contraction_appendix}
      \begin{algorithmic}
      \For{$\mathbf{i} \in \hatcal{I}$}
        \For{$j_m \in \hatcal{I}^{(m)}$}
          \If{$(j_m | \mathbf{i}_{-m}) \in \hatcal{I}$}
            \State $\dhstrut{6pt}{0pt}w_{(j_m | \mathbf{i}_{-m})} \mathrel{{+}{=}} A^{(m)}_{j_m i_m} v_{(i_m | \mathbf{i}_{-m})}$
          \EndIf
        \EndFor
      \EndFor
      \end{algorithmic}
    \end{algorithm}
    \end{minipage}
\end{center}

Our goal is not to state all the specifics of our concrete implementation, 
but rather to explain the intuitions of how the algorithm works 
and how it may be implemented using standard tensor operations.
The final algorithm is well-known in vibrational structure theory, where it is
derived using arguments from quantum mechanics \citep{seidlerFastComputationsCorrelated2008,seidlerAutomaticDerivationEvaluation2009}.

It is important to note how the matrix $\mathbf{A}^{\!(m)}$ connects
the input multi-index $\mathbf{i} = (i_m | \mathbf{i}_{-m})$
to the result multi-index $\mathbf{i}' = (j_m | \mathbf{i}_{-m})$.
In particular, $\mathbf{A}^{\!(m)}$ acts to make the substitution $i_m \rightarrow j_m$.
We will examine how this substitution affects 
a multi-index $\mathbf{i} \in \hatcal{I}$ and, in particular, under which circumstances
an out-of-space multi-index $\mathbf{i}' \notin \hatcal{I}$ is produced.

We start by recalling a few definitions.
Let $\gridmcr$ be an \ac{mcr} that is \ac{cuts} (Definition~\ref{def:cuts}) and let 
\begin{align}
  \hatcal{I} = \bigcup_{\mathbf{m} \in \gridmcr} \mathcal{I}^{(\mathbf{m})}
\end{align}
be the corresponding incomplete grid (Definitions~\ref{def:subgrid} and \ref{def:incomplete_grid}). 
For a multi-index $\mathbf{i} \in \mathcal{I}^{(\mathbf{m})}$
we make the association $\mathbf{i} = (i_1, \ldots, i_D) \cong (\mathbf{m}, \mathbf{a})$ (see Definition~\ref{def:subgrid})
such that for all $m = 1, \ldots, D$
\begin{subequations}
  \begin{alignat}{2}
    i_m &= a_m > 0 &&\quad\text{if } m    \in \mathbf{m}, \\
    i_m &= 0       &&\quad\text{if } m \notin \mathbf{m}.
  \end{alignat}
\end{subequations}
In other words, $\mathbf{m}$ indicates the positions of the non-zero indices, which are collected in $\mathbf{a}$.
We use the notation
\begin{align}
  \mathbf{i}_{-m} &= (i_1, \ldots, i_{m-1}, i_{m+1}, \ldots, i_D), \\
  (j_m | \mathbf{i}_{-m}) &= (i_1, \ldots, i_{m-1}, j_m, i_{m+1}, \ldots, i_D)
\end{align}
to mean the multi-indices obtained from $\mathbf{i}$ by removing $i_m$ and by making the replacement $i_m \rightarrow j_m$, respectively.
Now consider a fixed $\mathbf{m} \in \gridmcr$ and a multi-index $\mathbf{i} \in \mathcal{I}^{(\mathbf{m})}$.
We first choose a mode $m \in \mathbf{m}$ and replace the non-zero index $i_m = a_m > 0$ by either zero or by a non-zero
index $b_m \in \mathcal{A}^{(m)}$. The following then holds:
\begin{alignat}{2}
  (0   | \mathbf{i}_{-m}) &\cong (\mathbf{m} \setminus m,        \mathbf{a}_{-m} ) &&\mspace{8mu}\in \mathcal{I}^{(\mathbf{m} \setminus m)}, \\
  (b_m | \mathbf{i}_{-m}) &\cong (\mathbf{m}            , (b_m | \mathbf{a}_{-m})) &&\mspace{8mu}\in \mathcal{I}^{(\mathbf{m}            )}.
\end{alignat}
If instead we choose $m \notin \mathbf{m}$ and replace the index $i_m = 0$, we find that
\begin{alignat}{2}
  (0   | \mathbf{i}_{-m}) &\cong (\mathbf{m}            ,        \mathbf{a}      ) &&\in \mathcal{I}^{(\mathbf{m}            )}, \\
  (b_m | \mathbf{i}_{-m}) &\cong (\mathbf{m} \cup      m, (b_m | \mathbf{a}     )) &&\in \mathcal{I}^{(\mathbf{m} \cup      m)}.
\end{alignat}
This can be summarized as
\begin{align}
  \text{if } m &\in \mathbf{m} \quad:\quad
  \left\{
  \begin{alignedat}{3}
    \mathbf{i} &\in \mathcal{I}^{(\mathbf{m})} \xrightarrow{\mathmakebox[3.7em][l]{\mspace{10mu}a_m \rightarrow 0  }} \mathbf{i}' &&\in \mathcal{I}^{(\mathbf{m} \setminus m)} &&\quad\text{(down)} \\
    \mathbf{i} &\in \mathcal{I}^{(\mathbf{m})} \xrightarrow{\mathmakebox[3.7em][l]{\mspace{10mu}a_m \rightarrow b_m}} \mathbf{i}' &&\in \mathcal{I}^{(\mathbf{m}            )} &&\quad\text{(forward)}
  \end{alignedat} \right. \nonumber
  \\
  \text{if } m &\notin \mathbf{m} \quad:\quad
  \left\{
  \begin{alignedat}{3}
    \mathbf{i} &\in \mathcal{I}^{(\mathbf{m})} \xrightarrow{\mathmakebox[3.7em][l]{\mspace{25.5mu}0  \rightarrow 0    }} \mathbf{i}  &&\in \mathcal{I}^{(\mathbf{m}       )} &&\quad\text{(passive)} \\
    \mathbf{i} &\in \mathcal{I}^{(\mathbf{m})} \xrightarrow{\mathmakebox[3.7em][l]{\mspace{25.5mu}0  \rightarrow b_m  }} \mathbf{i}' &&\in \mathcal{I}^{(\mathbf{m} \cup m)} &&\quad\text{(up)}
  \end{alignedat} \right. \nonumber
\end{align}
Since $\gridmcr$ it \ac{cuts}, we know that ${\mathbf{m} \!\setminus\! m} \in \gridmcr$ and $\mathcal{I}^{(\mathbf{m} \setminus m)} \subset \hatcal{I}$.
It may, however, be the case that $\mathbf{m} \cup m \notin \gridmcr$, so the last substitution may produce an out-of-space multi-index $\mathbf{i}' \notin \hatcal{I}$.
The four classes of substitutions are called \textit{down}, \textit{forward}, \textit{passive} and \textit{up}.

Now let $\hatbold{v}$ be a vector or tensor indexed over $\hatcal{I}$.
With the \ac{mcr} construction in mind, it is natural to view $\hatbold{v}$ as a collection of subtensors $\mathbf{v}^{(\mathbf{m})}$,
just like $\hatcal{I}$ is a collection of subgrids $\mathcal{I}^{(\mathbf{m})}$. The subtensors have elements $\mathbf{v}^{(\mathbf{m})}_{\mathbf{a}}$
for all $\mathbf{a} \in \mathcal{A}^{(\mathbf{m})}$.
Before introducing the subtensor notation into the algorithm, we simply split
the main body of the algorithm according to the four types of index substitutions:
\begin{center}
  \begin{minipage}{0.8\linewidth}
    \begin{algorithm}[H]
      \caption{Compute $\hatbold{w} = \hatboldsuper{A}{[m]} \hatbold{v}$ using Theorem~\ref{thm:chopped_one_mode_contraction}.} \label{alg:chopped_one_mode_contraction_udfp}
      \begin{algorithmic}
      \For{$\mathbf{i} \in \hatcal{I}$}
        \For{$j_m \in \hatcal{I}^{(m)}$}
          \If{$j_m = 0$ \textbf{and} $i_m > 0$} \Comment{down}
            \State $\dhstrut{11pt}{14pt} w_{(0 | \mathbf{i}_{-m})} \mathrel{{+}{=}} A^{(m)}_{0 i_m} v_{\mathbf{i}}$
          \ElsIf{$j_m > 0$ \textbf{and} $i_m > 0$} \Comment{forward}
            \State $\dhstrut{11pt}{14pt} w_{(j_m | \mathbf{i}_{-m})} \mathrel{{+}{=}} A^{(m)}_{j_m i_m} v_{\mathbf{i}}$
          \ElsIf{$j_m = 0$ \textbf{and} $i_m = 0$} \Comment{passive}
            \State $\dhstrut{11pt}{14pt} w_{(0 | \mathbf{i}_{-m})} \mathrel{{+}{=}} A^{(m)}_{0 0} v_{\mathbf{i}}$
          \ElsIf{$j_m > 0$ \textbf{and} $i_m = 0$} \Comment{up}
            \If{$\dhstrut{0pt}{13pt} (j_m | \mathbf{i}_{-m}) \in \hatcal{I}$} \Comment{skip out-of-space}
              \State $\dhstrut{6pt}{12pt} w_{(j_m | \mathbf{i}_{-m})} \mathrel{{+}{=}} A^{(m)}_{j_m 0} \, v_{\mathbf{i}}$
            \EndIf
          \EndIf
        \EndFor
      \EndFor
      \end{algorithmic}
    \end{algorithm}
    \end{minipage}
\end{center}
This rewrite is of course valid for any choice of $\hatcal{I}$, but it is particularly well-suited
for the \ac{mcr}-based construction.
We now replace the loop over $\mathbf{i} \in \hatcal{I}$ with an equivalent nested loop over
$\mathbf{m} \in \gridmcr$ and $\mathbf{a} \in \mathcal{A}^{(\mathbf{m})}$.
In addition, we observe that $i_m = 0$ implies to $m \in \mathbf{m}$,
while $i_m > 0$ implies to $m \notin \mathbf{m}$.
Next, we replace the loop over $j_m \in \mathcal{I}^{(m)}$ with
a separate line covering $j_m = 0$ 
and a loop over $b_m \in \mathcal{A}^{(m)} = \mathcal{I}^{(m)} \setminus \{0\}$.
Finally, we introduce the subtensor notation to obtain the following algorithm:
\begin{center}
  \begin{minipage}{0.8\linewidth}
    \begin{algorithm}[H]
      \caption{Compute $\hatbold{w} = \hatboldsuper{A}{[m]} \hatbold{v}$ using Theorem~\ref{thm:chopped_one_mode_contraction}.} \label{alg:chopped_one_mode_contraction_subtensor}
      \begin{algorithmic}
      \For{$\mathbf{m} \in \gridmcr$}
      \For{$\mathbf{a} \in \mathcal{A}^{(\mathbf{m})}$}
        \If{$m \in \mathbf{m}$}
          \State $\dhstrut{6pt}{14pt} w^{(\mathbf{m}\setminus m)}_{\mathbf{a}_{-m}} \mathrel{{+}{=}} A^{(m)}_{0a^m} v^{(\mathbf{m})}_{\mathbf{a}}$ \Comment{down}
          \For{$b_m \in \mathcal{A}^{(m)}$}
            \State $\dhstrut{8pt}{14pt} w^{(\mathbf{m})}_{(b_m | \mathbf{a}_{-m})} \mathrel{{+}{=}} A^{(m)}_{b_m a_m} v^{(\mathbf{m})}_{\mathbf{a}}$ \Comment{forward}
          \EndFor
          \State
        \Else
          \State $\dhstrut{8pt}{14pt} w^{(\mathbf{m})}_{\mathbf{a}} \mathrel{{+}{=}} A^{(m)}_{00} v^{(\mathbf{m})}_{\mathbf{a}}$ \Comment{passive}
          \If{$(\mathbf{m} \cup m) \in \gridmcr$} \Comment{skip out-of-space}
          \For{$\dhstrut{0pt}{10pt} b_m \in \mathcal{A}^{(m)}$}
            \State $\dhstrut{8pt}{14pt} w^{(\mathbf{m} \cup m)}_{(b_m | \mathbf{a})} \mathrel{{+}{=}} A^{(m)}_{b_m 0} \, v^{(\mathbf{m})}_{\mathbf{a}}$ \Comment{up}
          \EndFor
          \EndIf
        \EndIf
      \EndFor
      \EndFor
      \end{algorithmic}
    \end{algorithm}
    \end{minipage}
\end{center}
Algorithm~\ref{alg:chopped_one_mode_contraction_subtensor} suggests the following partitioning
of the one-mode matrix (see also Example~\ref{example:preorder_block_triangular}):
\begin{align}
  \mathbf{A}^{\!(m)} =
  \left[
  \begin{array}{c|c}
    p^{(m)}          & \mathbf{d}^{(m)} \dhstrut{6pt}{10pt} \\ \hline
    \mathbf{u}^{(m)} & \mathbf{F}^{(m)} \dhstrut{0pt}{12pt}
  \end{array}
  \right].
\end{align}
The vectors $\mathbf{d}^{(m)}$ and $\mathbf{u}^{(m)}$
contain the \textit{down} and \textit{up} elements ($A^{(m)}_{0 b_m}$ and $A^{(m)}_{a_m 0}$), respectively,
while the matrix $\mathbf{F}^{(m)}$ contains the \textit{forward} elements ($A^{(m)}_{a_m b_m}$).
The passive element is $p^{(m)} = A^{(m)}_{00}$.
With this notation we see that
\begin{enumerate}[label=(\roman*), nosep]
  \item The \textit{down} contraction is a one-index contraction between the vector $\mathbf{d}^{(m)}$
  and the tensor $\mathbf{v}^{(\mathbf{m})}$.
  \item The \textit{forward} contraction is a one-index contraction between the matrix $\mathbf{F}^{(m)}$
  and the tensor $\mathbf{v}^{(\mathbf{m})}$.
  \item The \textit{passive} contraction is a scalar multiplication between the scalar $p^{(m)}$ and the tensor $\mathbf{v}^{(\mathbf{m})}$.
  \item The \textit{up} contraction is an outer product between the vector $\mathbf{u}^{(m)}$
  and the tensor $\mathbf{v}^{(\mathbf{m})}$.
\end{enumerate}
A one-index contraction in mode $m$ is also known as an $m$-mode product \citep{koldaTensorDecompositionsApplications2009}.
Assuming $n_1 = \cdots = n_D = n$ for simplicity and letting $k = |\mathbf{m}|$, the complexities are as follows:
\begin{enumerate}[label=(\roman*), nosep]
  \item Down: $\mathcal{O}(n^{k})$
  \item Forward: $\mathcal{O}(n^{k+1})$
  \item Passive: $\mathcal{O}(n^{k})$
  \item Up: $\mathcal{O}(n^{k+1})$
\end{enumerate}
Since the contraction index $m$ is clear from context, 
we simply use the symbol $\otimes$ to denote the vector-tensor outer product
and the symbol $\times$ to denote the $m$-mode product.
With this notation, we can restate Algorithm~\ref{alg:chopped_one_mode_contraction_subtensor} as
Algorithm~\ref{alg:chopped_one_mode_contraction_subtensor_final}, which is generic in the sense
that all elements of $\mathbf{A}^{(m)}$ and $\hatbold{v}$ may be non-zero.
However, it is obvious that the algorithm can be simplified if certain blocks of $\mathbf{A}^{(m)}$
are zero (Appendix~\ref{appendix:complexity_Lv_Uv})
or if certain subtensors in $\hatbold{v}$ are zero (Appendix~\ref{appendix:complexity_Mv}).
\begin{center}
  \begin{minipage}{0.8\linewidth}
    \begin{algorithm}[H]
      \caption{Compute $\hatbold{w} = \hatboldsuper{A}{[m]} \hatbold{v}$ using Theorem~\ref{thm:chopped_one_mode_contraction}.} \label{alg:chopped_one_mode_contraction_subtensor_final}
      \begin{algorithmic}
      \For{$\mathbf{m} \in \gridmcr$}
        \If{$m \in \mathbf{m}$}
          \State $\dhstrut{6pt}{14pt} \mathbf{w}^{(\mathbf{m}\setminus m)} \mathrel{{+}{=}} \mathbf{d}^{(m)} \times \mathbf{v}^{(\mathbf{m})}$ \Comment{down}
          \State $\dhstrut{6pt}{12pt} \mathbf{w}^{(\mathbf{m}           )} \mathrel{{+}{=}} \mathbf{F}^{(m)} \times \mathbf{v}^{(\mathbf{m})}$ \Comment{forward}
        \Else
          \State $\dhstrut{7pt}{13pt} \mathbf{w}^{(\mathbf{m}           )} \mathrel{{+}{=}}        {p}^{(m)}        \mathbf{v}^{(\mathbf{m})}$ \Comment{passive}
          \If{$(\mathbf{m} \cup m) \in \gridmcr$} \Comment{skip out-of-space}
            \State $\dhstrut{6pt}{14pt} \mathbf{w}^{(\mathbf{m} \cup m)} \mathrel{{+}{=}} \mathbf{u}^{(m)} \otimes \mathbf{v}^{(\mathbf{m})}$ \Comment{up}
          \EndIf
        \EndIf
      \EndFor
      \end{algorithmic}
    \end{algorithm}
    \end{minipage}
\end{center}


\newpage
\section{Kernel matrix-vector products} \label{appendix:kernel_mvp}
\renewcommand{\theequation}{\thesection\arabic{equation}}


\subsection{Deriving the kernel MVP} \label{appendix:deriving_kernel_mvp}

Using \eqref{eq:J_decomposition} and the properties of the bracket matrices, we rewrite
\eqref{eq:sop_kernel_complete_grid} as
\begin{align}
  \mathbf{K}
    &= \sum_{\mathbf{m}} \sigma_{|\mathbf{m}|}^2 
    \mathbf{K}^{[\mathbf{m}]} \mathbf{J}^{[\neg \mathbf{m}]} \nonumber \\
    &= \sum_{\mathbf{m}} \sigma_{|\mathbf{m}|}^2 
    \mathbf{K}^{[\mathbf{m}]} \mathbf{L}^{[\neg \mathbf{m}]} \mathbf{R}^{[\neg \mathbf{m}]} \mathbf{U}^{[\neg \mathbf{m}]} \nonumber \\
    &= 
      \sum_{\mathbf{m}} \sigma_{|\mathbf{m}|}^2 
      \mathbf{L}^{[\neg \mathbf{m}]}  \left[ \mathbf{L}^{[\mathbf{m}]} \big(\mathbf{L}^{[\mathbf{m}]} \big)^{-1} \right]
      \mathbf{K}^{[\mathbf{m}]} \mathbf{R}^{[\neg \mathbf{m}]} 
      \left[ \big(\mathbf{U}^{[\mathbf{m}]} \big)^{-1} \mathbf{U}^{[\mathbf{m}]} \right] \mathbf{U}^{[\neg \mathbf{m}]}
    \nonumber \\
    &= \mathbf{L}
    \left[ 
      \sum_{\mathbf{m}} \sigma_{|\mathbf{m}|}^2 
      \big(\mathbf{L}^{[\mathbf{m}]} \big)^{-1} 
      \mathbf{K}^{[\mathbf{m}]}
      \big(\mathbf{U}^{[\mathbf{m}]} \big)^{-1}
      \mathbf{R}^{[\neg \mathbf{m}]} 
    \right]
    \mathbf{U} \nonumber \\
    &=
    \mathbf{L}
    \left[ 
      \sum_{\mathbf{m}} \sigma_{|\mathbf{m}|}^2 
      \mathbf{M}^{[\mathbf{m}]}
      \mathbf{R}^{[\neg \mathbf{m}]} 
    \right]
    \mathbf{U}
    \nonumber \\
    &=
    \mathbf{L}
    \mathbf{M}
    \mathbf{U}.
\end{align}
At the second equality we use the factorization of $\mathbf{J}^{(m)}$ \eqref{eq:J_decomposition},
at the third equality we insert identities (in square brackets), and at the fourth equality
we observe
\begin{alignat}{7}
  \mathbf{L}^{[\neg \mathbf{m}]} \mathbf{L}^{[\mathbf{m}]} 
  &= \mathbf{L}^{[1]} &&\cdots &&\;\mathbf{L}^{[D]}
  &&= \mathbf{L}^{(1)} &&\otimes \cdots \otimes &&\;\mathbf{L}^{(D)}
  &&\equiv \mathbf{L}, \\
  \mathbf{U}^{[\mathbf{m}]} \mathbf{U}^{[\neg \mathbf{m}]} 
  &= \mathbf{U}^{[1]} &&\cdots &&\;\mathbf{U}^{[D]}
  &&= \mathbf{U}^{(1)} &&\otimes \cdots \otimes &&\;\mathbf{U}^{(D)}
  &&\equiv \mathbf{U}.
\end{alignat}
At the last two lines of \eqref{eq:sop_kernel_complete_grid_rewritten}, we simply make the definitions
\begin{align}
  \mathbf{M}^{(m)} &= \big(\mathbf{L}^{(m)} \big)^{-1} \mathbf{K}^{(m)}
  \big(\mathbf{U}^{(m)} \big)^{-1}, \\
  \mathbf{M} &= \left[ 
    \sum_{\mathbf{m}} \sigma_{|\mathbf{m}|}^2 
    \mathbf{M}^{[\mathbf{m}]}
    \mathbf{R}^{[\neg \mathbf{m}]}
  \right].
\end{align}


\subsection{Complexity of MVPs with \texorpdfstring{$\hatbold{L}$}{\textbf{L}} and \texorpdfstring{$\hatbold{U}$}{\textbf{U}}} \label{appendix:complexity_Lv_Uv}

In order to determine the complexity of the \ac{mvp}
\begin{align} \label{eq:mvp_U}
  \hatbold{U} \hatbold{v} = \big( \hatboldsuper{U}{[1]} \hatboldsuper{U}{[1]} \cdots \hatboldsuper{U}{[D]}  \big) \hatbold{v}
\end{align}
we start by considering a single one-mode contraction $\hatboldsuper{U}{[m]} \hatbold{v}$.
The underlying 1D matrix $\mathbf{U}^{m}$ can be decomposed as
\begin{align}
  \mathbf{U}^{m} =
  \begin{bmatrix} 
    1      & 1      & \dots  & 1     \\
    0      & 1      & \dots  & 0     \\
    \vdots & \vdots & \ddots & \vdots\\
    0      & 0      & \dots  & 1
  \end{bmatrix}
  =
  \begin{bmatrix} 
    1      & 0      & \dots  & 0     \\
    0      & 1      & \dots  & 0     \\
    \vdots & \vdots & \ddots & \vdots\\
    0      & 0      & \dots  & 1
  \end{bmatrix}
  +
  \begin{bmatrix} 
    0      & 1      & \dots  & 1     \\
    0      & 0      & \dots  & 0     \\
    \vdots & \vdots & \ddots & \vdots\\
    0      & 0      & \dots  & 0
  \end{bmatrix}
  = \mathbf{I}^{(m)} + \mathbf{S}^{(m)}
\end{align}
which implies
\begin{align} \label{eq:Um_contraction}
  \hatboldsuper{U}{[m]} \hatbold{v} = \big( \hatboldsuper{I}{[m]} + \hatboldsuper{S}{[m]} \big) \hatbold{v} = \hatbold{v} + \hatboldsuper{S}{[m]} \hatbold{v}.
\end{align}
The only non-zero block of the matrix $\mathbf{S}^{(m)}$ is the down-block,
so the second term in \eqref{eq:Um_contraction} only involves down-contractions in mode $m$.
This means that only \acp{mc} containing $m$ are touched.

For definiteness, we consider an \ac{mcr}, $\gridmcr$, defined by a maximum cut level, $\alpha$.
The \ac{mcr} is dominated by the $\alpha$-cuts, which will thus determine the complexity. The number of $\alpha$-cuts
containing the given mode $m$ is
\begin{align}
  \binom{D - 1}{\alpha - 1} \sim \mathcal{O} \! \left( \frac{(D - 1)^{\alpha - 1}}{(\alpha - 1)!} \right) = \mathcal{O} \! \left( \frac{D^{\alpha - 1}}{(\alpha - 1)!} \right).
\end{align}
Assuming $n_1 = \cdots = n_D = n$ for simplicity, each down-contraction has an operation count
of $n^{\alpha}$, so the complexity of $\hatboldsuper{U}{[m]} \hatbold{v}$ is 
\begin{align}
  \mathcal{O} \! \left( \frac{D^{\alpha - 1} n^{\alpha}}{(\alpha - 1)!} \right).
\end{align}
Equation \eqref{eq:mvp_U} involves $D$ \acp{mvp} with this complexity, so the overall
cost of $\hatbold{U} \hatbold{v}$ is
\begin{align}
  \mathcal{O} \! \left( \frac{D^{\alpha} n^{\alpha}}{(\alpha - 1)!} \right) = \mathcal{O} ( \alpha \widehat{N} ).
\end{align}

The complexity of $\hatbold{L} \hatbold{v}$ can be established in much the same way. To start with, we note that
\begin{align}
  \mathbf{L}^{m} =
  \begin{bmatrix} 
    1      & 0      & \dots  & 0     \\
    1      & 1      & \dots  & 0     \\
    \vdots & \vdots & \ddots & \vdots\\
    1      & 0      & \dots  & 1
  \end{bmatrix}
  =
  \begin{bmatrix} 
    1      & 0      & \dots  & 0     \\
    0      & 1      & \dots  & 0     \\
    \vdots & \vdots & \ddots & \vdots\\
    0      & 0      & \dots  & 1
  \end{bmatrix}
  +
  \begin{bmatrix} 
    0      & 0      & \dots  & 0     \\
    1      & 0      & \dots  & 0     \\
    \vdots & \vdots & \ddots & \vdots\\
    1      & 0      & \dots  & 0
  \end{bmatrix}
  = \mathbf{I}^{(m)} + \mathbf{D}^{(m)}
\end{align}
which implies
\begin{align} \label{eq:Lm_contraction}
  \hatboldsuper{L}{[m]} \hatbold{v} = \big( \hatboldsuper{I}{[m]} + \hatboldsuper{D}{[m]} \big) \hatbold{v} = \hatbold{v} + \hatboldsuper{D}{[m]} \hatbold{v}.
\end{align}
This time, the second term involves up-contractions in mode $m$, so only \acp{mc} excluding $m$ are touched.
Due to the chopping we only need
to consider those up-contractions that remain in $\gridmcr$. Doing an up-contraction on
an $\alpha$-cut will for example produce a result of order $\alpha + 1$, which is outside $\gridmcr$.
The dominating up-contractions are thus those that touch \acp{mc} of order $\alpha - 1$ that do not contain the given mode $m$.
The number of such \acp{mc} is
\begin{align}
  \binom{D - 1}{\alpha - 1} \sim \mathcal{O} \! \left( \frac{(D - 1)^{\alpha - 1}}{(\alpha - 1)!} \right) = \mathcal{O} \! \left( \frac{D^{\alpha - 1}}{(\alpha - 1)!} \right)
\end{align}
while the cost of each up-contraction is $n^{\alpha}$. In conclusion, we find that $\hatbold{L} \hatbold{v}$ has the same complexity
as $\hatbold{U} \hatbold{v}$, namely $\mathcal{O} ( \alpha \widehat{N} )$.


\subsection{Complexity of MVPs with \texorpdfstring{$\hatbold{M}$}{\textbf{M}}} \label{appendix:complexity_Mv}

We will now address the complexity of the \ac{mvp} with the matrix
\begin{align}
  \hatbold{M} = 
  \sum_{\mathbf{m}} \sigma_{|\mathbf{m}|}^2 
    \widehat{\mathbf{M}^{[\mathbf{m}]}}
    \widehat{\mathbf{R}^{[\neg \mathbf{m}]}}.
\end{align}
For simplicity, we will start by considering $\hatboldsuper{R}{[\neg \mathbf{m}]} \hatbold{v}$
and then $\hatboldsuper{M}{[\mathbf{m}]} \hatboldsuper{R}{[\neg \mathbf{m}]} \hatbold{v}$
before summing up the terms.
The key component is the matrix $\widehat{\mathbf{R}^{[\neg \mathbf{m}]}}$,
which projects onto the subspace where all modes $m \notin \mathbf{m}$ are in the reference
(equivalently, this is the subspace where only the modes $m \in \mathbf{m}$ are allowed to be
displaced from the reference). This subspace can generally be characterized by a set of multi-indices
$\hatcal{I}(\mathbf{m}) \subseteq \hatcal{I}$, but for our purposes it is more useful
to consider the corresponding \ac{mcr}, $\mathcal{M}(\mathbf{m}) \subseteq \gridmcr$.
Due to the action of the projector (as described above), we must have
\begin{align} \label{eq:subspace_mcr}
  \mathcal{M}(\mathbf{m}) = \{ \mathbf{m}' \,\vert\, \mathbf{m}' \subseteq \mathbf{m}, \,  \mathbf{m}' \in \gridmcr\}.
\end{align}
In our implementation we currently assume $\kernelmcr \subseteq \gridmcr$,
so we always have $\mathbf{m} \in \gridmcr$. But $\gridmcr$ is \ac{cuts},
so in this case \eqref{eq:subspace_mcr} simplifies to
\begin{align} \label{eq:subspace_mcr_simplified}
  \mathcal{M}(\mathbf{m}) = \{ \mathbf{m}' \,\vert\, \mathbf{m}' \subseteq \mathbf{m} \} = \mathrm{powerset}(\mathbf{m}),
\end{align}
which has size $|\mathcal{M}(\mathbf{m})| = 2^{|\mathbf{m}|}$. 

In addition to being a projector, $\widehat{\mathbf{R}^{[\neg \mathbf{m}]}}$ commutes with $\widehat{\mathbf{M}^{[\mathbf{m}]}}$
so
\begin{align}
  \hatboldsuper{M}{[\mathbf{m}]} \hatboldsuper{R}{[\neg \mathbf{m}]}
  = \hatboldsuper{R}{[\neg \mathbf{m}]} \hatboldsuper{M}{[\mathbf{m}]} \hatboldsuper{R}{[\neg \mathbf{m}]},
\end{align}
which shows that the action of $\hatboldsuper{M}{[\mathbf{m}]}$ does not produce any terms outside the subspace
$\mathcal{M}(\mathbf{m})$. We conclude that each term in $\hatbold{M}$ touches only a small part of the vector,
which is in fact the key to the performance of the \ac{mvp}.
The actual multiplication by $\hatboldsuper{M}{[\mathbf{m}]}$ can be carried out using 
the LU-based machinery (Theorem~\ref{thm:chopping} and Corollary~\ref{cor:chopped_kronecker_product}) with the only modification that the code should only
access the sub-vector defined by $\mathcal{M}(\mathbf{m})$, rather than the full vector.
If $\mathbf{m} \in \gridmcr$ (as in our current implementation),
the subspace $\mathcal{M}(\mathbf{m})$ can be viewed as a complete grid over the modes $\mathbf{m}$.
This means that $\hatboldsuper{M}{[\mathbf{m}]}$ can be carried out as a sequence of ordinary
one-mode contractions (no LU decomposition is necessary).

To derive a definite complexity, we consider identical kernel and grid \acp{mcr} ($\kernelmcr = \gridmcr$)
defined by a maximum interaction order or cut level, $\alpha$.
The \ac{mvp} is dominated by the highest-order kernel terms (the $\alpha$-terms), and each such term
is dominated by the forward contractions. 
Consider an $\alpha$-term indexed by $\mathbf{m}$. Every non-empty subspace \ac{mc}, $\mathbf{m}' \in \mathcal{M}(\mathbf{m})$,
can be forward-contracted $k = |\mathbf{m}'|$ times, and each of those forward contraction has an operation count of $(n - 1)^{k + 1}$
(assuming $n_1 = \cdots = n_D = n$). For a single $\alpha$-term, this yields an operation count of
\begin{align}
  \sum_{k=1}^{\alpha} \binom{\alpha}{k} k (n - 1)^{k + 1} 
  = \alpha n^{\alpha - 1} (n - 1)^2
  < \alpha n^{\alpha + 1}.
\end{align}
The first equality can be proved by differentiating the binomial theorem, 
$n^{\alpha} = \sum_{k = 0}^{\alpha} \binom{\alpha}{k} (n - 1)^k$, with respect to $n$
and multiplying the result by $(n - 1)^{2}$ on both sides.
The number of $\alpha$-terms is
\begin{align}
  \binom{D}{\alpha} \sim \mathcal{O} \! \left( \frac{D^{\alpha}}{\alpha !} \right),
\end{align}
so the overall complexity of the \ac{mvp} by $\hatbold{M}$ is
\begin{align}
  \mathcal{O} \! \left( n \alpha \frac{D^{\alpha} n^\alpha}{\alpha !} \right) = \mathcal{O} ( n \alpha \widehat{N} ).
\end{align}


\subsection{Complexity of MVPs with \texorpdfstring{$\partial \hatbold{M} / \partial \ell^{(m)}$}{∂\textbf{M}/∂\textit{l}}} \label{appendix:complexity_dMdl_v}

The length scale derivative,
\begin{align} \label{eq:appendix_dMdl_def}
  \pdv{\hatbold{M}}{ \ell^{(m)} } = \sum_{\substack{ \mathbf{m} \in \kernelmcr,\\ m \in \mathbf{m} }} \sigma^2_{|\mathbf{m}|} \pdv{ \hatboldsuper{M}{[\mathbf{m}]} }{ \ell^{(m)} } \hatboldsuper{R}{[\neg\mathbf{m}]},
\end{align}
has the same form as $\hatbold{M}$ itself, except that the sum is restricted. 
This means that each of the dominating $\alpha$-terms in \eqref{eq:appendix_dMdl_def} has the same cost as an $\alpha$-term in $\hatbold{M}$,
namely $\mathcal{O}(\alpha n^{\alpha + 1})$ (see Appendix~\ref{appendix:complexity_Mv}).
The number of $\alpha$-terms in \eqref{eq:appendix_dMdl_def} is exactly the number $\alpha$-terms containing the given mode $m$:
\begin{align} \label{eq:alpha_cuts_containing_m}
  \binom{D - 1}{\alpha - 1} < \binom{D}{\alpha - 1} \sim \mathcal{O}\!\left( \frac{D^{\alpha - 1}}{(\alpha - 1)!} \right).
\end{align}
We thus find that the complexity of \eqref{eq:appendix_dMdl_def} is
\begin{align}
    \mathcal{O}\!\left( \alpha n^{\alpha + 1} \frac{D^{\alpha - 1}}{(\alpha - 1)!} \right)
  = \mathcal{O}\!\left( \frac{n \alpha^2}{D} \frac{D^{\alpha} n^{\alpha}}{\alpha!} \right)
  = \mathcal{O}\!\left( \frac{n \alpha^2}{D} \widehat{N} \right).
\end{align}
Each term in the derivative \ac{mvp} populates the subspace defined by $\mathcal{M}(\mathbf{m})$ (see Appendix~\ref{appendix:complexity_Mv}).
In total, all $\alpha$-cuts containing $m$ as well as all lower-order cuts are populated. 
Counting the dominating $\alpha$-cuts and using \eqref{eq:alpha_cuts_containing_m}, we find that the number of
non-zero elements of $\hatbold{w} = (\partial \hatbold{M} / \partial \ell^{(m)}) \hatbold{v}$ goes like
\begin{align}
  \mathcal{O}\!\left( n^\alpha \frac{D^{\alpha - 1}}{(\alpha - 1)!} \right) = 
  \mathcal{O}\!\left( \frac{\alpha}{D}  \widehat{N} \right).
\end{align}
Appendix~\ref{appendix:K_column} shows that an \ac{mvp} of the form $\hatbold{M} \hatbold{U} \hatbold{e}_{\mathbf{i}}$
can be computed directly (without forming the intermediate vector $\hatbold{U} \hatbold{e}_{\mathbf{i}}$) at a 
cost of $\mathcal{O}(\alpha \widehat{N})$. This should be compared to 
the generic \ac{mvp} $\hatbold{M} \hatbold{U} \hatbold{v}$ with complexity $\mathcal{O}(n \alpha \widehat{N})$. 
The same reduction by a factor $n$ applies to the derivative \ac{mvp}, so 
$(\partial \hatbold{M} / \partial \ell^{(m)}) \hatbold{U} \hatbold{e}_{\mathbf{i}}$ can be obtained
with complexity
\begin{align}
  \mathcal{O}\!\left( \frac{\alpha^2}{D} \widehat{N} \right).
\end{align}


\subsection{Reducing complexity in the quadratic term}  \label{appendix:quadratic_term}

In order to maintain a $D$-complexity of $\mathcal{O}(D^{\alpha})$, we will use the factorized form of $\hatbold{K}$ 
\eqref{eq:sop_kernel_complete_grid_chopped}:
\begin{align} \label{eq:length scale_quadratic_term}
  \weights^{\trans} \pdv{\hatbold{K}}{\ell^{(m)}} \weights
  = \big( \hatbold{U} \weights \big)^{\trans} \pdv{\hatbold{M}}{\ell^{(m)}}  \big( \hatbold{U} \weights \big).
\end{align}
Appendix~\ref{appendix:complexity_dMdl_v} shows that a derivative \ac{mvp} 
like $\hatbold{w} = (\partial \hatbold{M} / \partial \ell^{(m)}) \hatbold{v}$ 
has a complexity of only $\mathcal{O}(n \alpha^2 \widehat{N} / D)$.
In addition, only a small fraction of $\hatbold{w}$ is non-zero, leading to $\mathcal{O}(\alpha \widehat{N} / D)$ dot products with $\hatbold{w}$.
Computing \eqref{eq:length scale_quadratic_term} for all $D$ length scales can thus be done at a total cost of $\mathcal{O}(n \alpha^2 \widehat{N}) \sim \mathcal{O}(D^{\alpha})$.

The same approach can be used for the bilinear forms that occur in the stochastic estimate of $\trace ( \mathbf{C}^{-1} \partial \mathbf{C} / \partial \theta )$
and the deterministic computation of preconditioner trace terms; see
\eqref{eq:estimate_trace_term} and \eqref{eq:precond_trace_term_via_mvp}, respectively.


\newpage
\section{Computing a single column of \texorpdfstring{$\hatbold{K}$}{\textbf{K}}}  \label{appendix:K_column}
\renewcommand{\theequation}{\thesection\arabic{equation}}

In order to construct the pivoted Cholesky preconditioner (see Appendix~\ref{appendix:cg}), we must be
able to access a single column of the kernel matrix.
For a given multi-index $\mathbf{i} \in \hatcal{I}$, we can compute the corresponding column
of $\hatbold{K}$ by doing an \ac{mvp} with the standard basis vector $\hatbold{e}_{\mathbf{i}} \in \mathbb{R}^{\widehat{N}}$:
\begin{align} \label{eq:single_column_using_mvp}
  \hatbold{k}_{\mathbf{i}} 
  = \hatbold{K} \hatbold{e}_{\mathbf{i}}.
\end{align}
As described in Section~\ref{sec:fast_mvps} this has a complexity 
of $\mathcal{O}(n \alpha \widehat{N})$ (see also Appendices~\ref{appendix:complexity_Lv_Uv}
and \ref{appendix:complexity_Mv}).
In the following we show that this can be improved to $\mathcal{O}(\alpha \widehat{N})$.
We start by applying Theorem~\ref{thm:chopping} to $\hatbold{K}$
after which we introduce explicit chopping matrices:
\begin{align} \label{eq:single_column_using_structure}
  \hatbold{k}_{\mathbf{i}} 
  = \hatbold{L} \reallywidehat{\mathbf{M} \mathbf{U}} \hatbold{e}_{\mathbf{i}}
  = \hatbold{L} \big( \mathbf{\Gamma}^{\trans} \mathbf{M} \mathbf{U} \mathbf{\Gamma} \big) \big( \mathbf{\Gamma}^{\trans} \mathbf{e}_{\mathbf{i}} \big)
  = \hatbold{L} \mathbf{\Gamma}^{\trans} \mathbf{M} \mathbf{U} \mathbf{e}_{\mathbf{i}}.
\end{align}
The vector $\mathbf{e}_{\mathbf{i}}$ is contained in the subspace corresponding to $\hatcal{I}$
so $\mathbf{\Gamma} \mathbf{\Gamma}^{\trans} \mathbf{e}_{\mathbf{i}} = \mathbf{e}_{\mathbf{i}}$
even though $\mathbf{\Gamma} \mathbf{\Gamma}^{\trans}$ is generally not the identity.
The first part of the \ac{mvp} is now expressed without chopping,
which is usually not desirable (indeed, the chopping framework was introduced to avoid
exactly this). It is useful, however, in this particular case since $\mathbf{e}_{\mathbf{i}}$
can be viewed as a rank-one tensor.
To be specific, we make the association $\mathbf{i} \cong (\mathbf{m}, \mathbf{a})$
(see Definition~\ref{def:incomplete_grid}) and write 
\begin{align} \label{eq:ei_def}
  \mathbf{e}_{\mathbf{i}} = \bigotimes_{m=1}^{D} \mathbf{v}^{(m)}, \quad
  \mathbf{v}^{(m)} =
  \begin{dcases}[c]
    \mathbf{r}^{(m)} & \text{if } m \notin \mathbf{m}, \\
    \mathbf{a}^{(m)} & \text{if } m \in    \mathbf{m}.
  \end{dcases}
\end{align}
The vector $\mathbf{v}^{(m)} \in \mathbb{R}^{n_m}$ 
has element $i_m$ equal to one and the remaining elements equal to zero.
The symbols $\mathbf{r}^{(m)}$ and
$\mathbf{a}^{(m)}$ only serve to visually separate the cases $m \notin \mathbf{m}$ ($i_m = 0$)
and $m \in \mathbf{m}$ ($i_m = a_m > 0$).
To further simplify notation we reorder the factors of the Kronecker product, which
corresponds to a permutation of the elements of $\mathbf{e}_{\mathbf{i}}$. To
compensate, we multiply by the inverse permutation, $\mathbf{P}$, and rewrite \eqref{eq:ei_def} as
\begin{align} \label{eq:ei_reordered}
  \mathbf{e}_{\mathbf{i}} = \mathbf{P} \bigg[
  \bigg( \bigotimes_{m    \in \mathbf{m}} \mathbf{a}^{(m)} \bigg)
  \otimes
  \bigg( \bigotimes_{m \notin \mathbf{m}} \mathbf{r}^{(m)} \bigg) \bigg].
\end{align}
We stress that the only purpose of this reordering is to make the equations more readable.
No permutation is necessary in the actual computation.

Before proceeding we observe that 
\begin{subequations}
  \begin{align}
    \mathbf{U}^{(m)} \mathbf{r}^{(m)} &= \mathbf{r}^{(m)}, \label{eq:Ur}\\
    \mathbf{U}^{(m)} \mathbf{a}^{(m)} &= \mathbf{r}^{(m)} + \mathbf{a}^{(m)}, \label{eq:Ua}\\
    \mathbf{R}^{(m)} \mathbf{r}^{(m)} &= \mathbf{r}^{(m)}, \label{eq:Rr}\\
    \mathbf{R}^{(m)} \big( \mathbf{r}^{(m)} + \mathbf{a}^{(m)} \big) &= \mathbf{r}^{(m)}. \label{eq:Rr_plus_a}
  \end{align}
\end{subequations}
Using \eqref{eq:Ur} and \eqref{eq:Ua}, we find
\begin{align}
  \mathbf{U} \mathbf{e}_{\mathbf{i}}
  &=
  \bigg( \bigotimes_{m=1}^{D} \mathbf{U}^{(m)} \bigg)
  \mathbf{P} \bigg[
  \bigg( \bigotimes_{m    \in \mathbf{m}} \mathbf{a}^{(m)} \bigg)
  \otimes
  \bigg( \bigotimes_{m \notin \mathbf{m}} \mathbf{r}^{(m)} \bigg) \bigg] \nonumber \\
  &=
  \mathbf{P} \bigg[
  \bigg( \bigotimes_{m    \in \mathbf{m}} \mathbf{U}^{(m)} \mathbf{a}^{(m)} \bigg)
  \otimes
  \bigg( \bigotimes_{m \notin \mathbf{m}} \mathbf{U}^{(m)} \mathbf{r}^{(m)} \bigg) \bigg] \nonumber \\
  &=
  \mathbf{P} \bigg[
  \bigg( \bigotimes_{m    \in \mathbf{m}} \big( \mathbf{r}^{(m)} + \mathbf{a}^{(m)} \big) \bigg)
  \otimes
  \bigg( \bigotimes_{m \notin \mathbf{m}} \mathbf{r}^{(m)} \bigg) \bigg].
\end{align}
Substituting this into \eqref{eq:single_column_using_structure}
and applying \eqref{eq:Rr} and \eqref{eq:Rr_plus_a} then yields
\begin{align} \label{eq:single_column_expanded}
  \hatbold{k}_{\mathbf{i}} 
  &= \hatbold{L} \mathbf{\Gamma}^{\trans} \mathbf{M} \mathbf{U} \mathbf{e}_{\mathbf{i}} \nonumber \\
  &= \hatbold{L} \mathbf{\Gamma}^{\trans}
  \Bigg(
  \sum_{\mathbf{m}'} \sigma^2_{|\mathbf{m}'|} 
  \mathbf{M}^{[\mathbf{m}']} \mathbf{R}^{[\neg \mathbf{m}']}
  \mathbf{P} \bigg[
  \bigg( \bigotimes_{m    \in \mathbf{m}} \big( \mathbf{r}^{(m)} + \mathbf{a}^{(m)} \big) \bigg)
  \otimes
  \bigg( \bigotimes_{m \notin \mathbf{m}} \mathbf{r}^{(m)} \bigg) \bigg] \Bigg) \nonumber \\
  &= 
  \begin{multlined}[t]
    \hatbold{L} \mathbf{\Gamma}^{\trans}
    \Bigg(
    \sum_{\mathbf{m}'} \sigma^2_{|\mathbf{m}'|} 
    \mathbf{M}^{[\mathbf{m}']} 
    \mathbf{P}' \bigg[
    \bigg( \bigotimes_{m \in \mathbf{m}' \cap \mathbf{m}} \big( \mathbf{r}^{(m)} + \mathbf{a}^{(m)} \big) \bigg) \\
    {}\mspace{200mu} \otimes
    \bigg( \bigotimes_{m \in \mathbf{m}' \setminus \mathbf{m}} \mathbf{r}^{(m)} \bigg)
    \otimes
    \bigg( \bigotimes_{m \notin \mathbf{m}'} \mathbf{r}^{(m)} \bigg) \bigg] 
    \Bigg)
  \end{multlined} \nonumber \\
  &= 
  \begin{multlined}[t]
    \hatbold{L} \mathbf{\Gamma}^{\trans}
    \Bigg(
    \sum_{\mathbf{m}'} \sigma^2_{|\mathbf{m}'|} 
    \mathbf{P}' \bigg[
    \bigg( \bigotimes_{m \in \mathbf{m}' \cap \mathbf{m}} \mathbf{M}^{(m)} \big( \mathbf{r}^{(m)} + \mathbf{a}^{(m)} \big) \bigg) \\
    {}\mspace{200mu} \otimes
    \bigg( \bigotimes_{m \in \mathbf{m}' \setminus \mathbf{m}} \mathbf{M}^{(m)} \mathbf{r}^{(m)} \bigg)
    \otimes
    \bigg( \bigotimes_{m \notin \mathbf{m}'} \mathbf{r}^{(m)} \bigg) \bigg] 
    \Bigg) 
  \end{multlined} \nonumber \\
  &=
  \begin{multlined}[t]
  \hatbold{L} \mathbf{\Gamma}^{\trans}
    \Bigg(
    \sum_{\mathbf{m}'} \sigma^2_{|\mathbf{m}'|} 
    \mathbf{P}' \bigg[
    \bigg( \bigotimes_{m \in \mathbf{m}' \cap \mathbf{m}} \big( \mathbf{M}^{(m)}_{:0} + \mathbf{M}^{(m)}_{:a^m} \big) \bigg) \\
    {}\mspace{200mu} \otimes
    \bigg( \bigotimes_{m \in \mathbf{m}' \setminus \mathbf{m}} \mathbf{M}^{(m)}_{:0} \bigg)
    \otimes
    \bigg( \bigotimes_{m \notin \mathbf{m}'} \mathbf{r}^{(m)} \bigg) \bigg] 
    \Bigg).
  \end{multlined}
\end{align}
The order of the Kronecker factors may change at the third equality, corresponding to
a new permutation $\mathbf{P}'$. The notation $\mathbf{M}^{(m)}_{:j}$ means the $j$th column
of $\mathbf{M}^{(m)}$. In practice, the computation proceeds as follows: For each kernel term
$\mathbf{m}' \in \kernelmcr$, a Kronecker product of 
matrix columns is formed according to \eqref{eq:single_column_expanded}.
The Kronecker product is then partitioned according to \acp{mc} and added
to an intermediary vector. If $\mathbf{m}' \notin \gridmcr$
then the Kronecker product may generate out-of-space \acp{mc}, which should
be skipped during computation due to the presence of the chopping matrix $\mathbf{\Gamma}^{\trans}$. 
Finally, $\hatbold{L}$ is applied to the intermediary vector to obtain the result.

To derive a definite complexity, we consider identical kernel and grid 
\acp{mcr} ($\kernelmcr = \gridmcr$)
defined by a maximum interaction order or cut level, $\alpha$.
We also assume, for simplicity, that $n_1 = \cdots = n_d = n$.
Each of the dominating terms in \eqref{eq:single_column_expanded} consists of 
a Kronecker product of $\alpha$ vectors of size $n$, which has a 
cost of no more than $\alpha n^\alpha$. The number of $\alpha$-terms is
\begin{align}
  \binom{D}{\alpha} \sim \mathcal{O}\!\left( \frac{D^\alpha}{\alpha!} \right)
\end{align}
so the overall complexity of computing the sum is
\begin{align}
  \mathcal{O}\!\left( \alpha \frac{D^\alpha n^\alpha}{\alpha!} \right) =
  \mathcal{O} \big( \alpha \widehat{N}  \big),
\end{align}
which is also the complexity of the subsequent \ac{mvp} by $\hatbold{L}$ 
(see Appendix~\ref{appendix:complexity_Lv_Uv}). We conclude that a single column of $\hatbold{K}$
can be computed at a cost of $\mathcal{O} \big( \alpha \widehat{N} \big)$.


\newpage
\section{Computing the diagonal of \texorpdfstring{$\hatbold{K}$}{\textbf{K}}} \label{appendix:K_diagonal}
\renewcommand{\theequation}{\thesection\arabic{equation}}

The pivoted Cholesky algorithm (see Appendix~\ref{appendix:cg}) requires the main diagonal
of the kernel matrix.
We start by recalling the complete-grid
kernel as written in \eqref{eq:sop_kernel_complete_grid}:
\begin{align}
  \mathbf{K}
    = \sum_{\mathbf{m} \in \kernelmcr} \sigma_{|\mathbf{m}|}^2 \mathbf{K}^{[\mathbf{m}]} \mathbf{J}^{[\neg \mathbf{m}]}.
\end{align}
The diagonal of $\hatbold{K}$ consists of the 
elements $\widehat{K}_{\mathbf{i} \mathbf{i}} = K_{\mathbf{i} \mathbf{i}}$ 
for $\mathbf{i} \in \hatcal{I}$.
It is convenient to rewrite each multi-index as $\mathbf{i} \cong (\mathbf{m}, \mathbf{a})$
(see Definition~\ref{def:incomplete_grid}), in which case
\begin{align} \label{eq:Kii_start}
  K_{\mathbf{i} \mathbf{i}} 
  &= \left[ \sum_{\mathbf{m}'} \sigma_{|\mathbf{m}'|}^2
  \mathbf{K}^{[\mathbf{m}']} \mathbf{J}^{[\neg \mathbf{m}']} \right]_{\mathbf{i} \mathbf{i}} \nonumber \\
  &= \sum_{\mathbf{m}'} \sigma_{|\mathbf{m}'|}^2
  \Bigg( \prod_{m \in    \mathbf{m}' \cap      \mathbf{m}} K^{(m)}_{a^m a^m} \Bigg)
  \Bigg( \prod_{m \in    \mathbf{m}' \setminus \mathbf{m}} K^{(m)}_{00}      \Bigg) \nonumber \\
  &= \sum_{\mathbf{m}'} \sigma_{|\mathbf{m}'|}^2
  \Bigg( \prod_{m \in    \mathbf{m}' \cap      \mathbf{m}} \frac{K^{(m)}_{a^m a^m}}{K^{(m)}_{00}} \Bigg)
  \Bigg( \prod_{m \in    \mathbf{m}'} K^{(m)}_{00} \Bigg).
\end{align}
We have assumed $K^{(m)}_{00} \neq 0$. 
The intersection $\mathbf{s} = \mathbf{m}' \cap \mathbf{m}$ of 
$\mathbf{m}' \in \kernelmcr$ and
$\mathbf{m}  \in \gridmcr$ plays a central role in the following.
The crucial idea is to group the terms in \eqref{eq:Kii_start} according to $\mathbf{s}$.
We start by noting that the possible intersections are exactly the subsets of $\mathbf{m}$,
i.e. $\mathbf{s} \subseteq \mathbf{m}$. Next, we define
the set of kernel \acp{mc} with a prescribed intersection as
\begin{align}
  \mathcal{S(\mathbf{m}, \mathbf{s})} 
  = \{ \mathbf{m}' \in \kernelmcr \;|\; \mathbf{m}' \cap \mathbf{m} = \mathbf{s} \}.
\end{align}
With this notation,
\begin{align} \label{eq:Kii_rewrite}
  K_{\mathbf{i} \mathbf{i}} 
  &=
  \sum_{\mathbf{s} \subseteq \mathbf{m}}
  \sum_{\mathbf{m}' \in \mathcal{S}(\mathbf{m}, \mathbf{s})}
  \sigma_{|\mathbf{m}'|}^2
  \Bigg( \prod_{m \in \mathbf{s}} \frac{K^{(m)}_{a^m a^m}}{K^{(m)}_{00}} \Bigg)
  \Bigg( \prod_{m \in    \mathbf{m}'} K^{(m)}_{00} \Bigg) \nonumber \\ 
  &=
  \sum_{\mathbf{s} \subseteq \mathbf{m}}
  \Bigg( \prod_{m \in \mathbf{s}} \frac{K^{(m)}_{a^m a^m}}{K^{(m)}_{00}} \Bigg)
  \sum_{\mathbf{m}' \in \mathcal{S}(\mathbf{m}, \mathbf{s})}
  \sigma_{|\mathbf{m}'|}^2
  \Bigg( \prod_{m \in    \mathbf{m}'} K^{(m)}_{00} \Bigg).
\end{align}
The outer sum is manageable given that $|\mathbf{m}|$ is small, while the inner sum involves only products of reference base kernel elements, $K^{(m)}_{00}$.
As it stands, however, the inner sum must be computed for every combination of $\mathbf{m}$ and $\mathbf{s}$, which would be costly.

To get started, we define some notation for the inner sum:
\begin{gather}
  X^{(\mathbf{m}, \mathbf{s})} = \sum_{\mathbf{m}' \in \mathcal{S}(\mathbf{m}, \mathbf{s})} P^{(\mathbf{m}')}, \\
  P^{(\mathbf{m}')} = \sigma_{|\mathbf{m}'|}^2
  \Bigg( \prod_{m \in \mathbf{m}'} K^{(m)}_{00} \Bigg).
\end{gather}
The computational problem is the simultaneous dependency on $\mathbf{m}$ and $\mathbf{s}$.
In order to eventually remove this problem, we let
\begin{align}
  \mathcal{S(\mathbf{s})} 
  = \{ \mathbf{m}' \in \kernelmcr \;|\; \mathbf{s} \subseteq \mathbf{m}' \},
\end{align}
which is the set of all kernel \acp{mc} containing $\mathbf{s}$. 
The idea is now to group the elements of $\mathcal{S(\mathbf{s})}$ in a convenient way.
To that end, choose a fixed $\mathbf{m} \in \mathcal{M}_{\textrm{grid}}$ such that $\mathbf{s} \subseteq \mathbf{m}$
and observe that
\begin{align}
  \mathbf{s} \subseteq \mathbf{m}, \, \mathbf{s} \subseteq \mathbf{m}' \;\Rightarrow\; \mathbf{s} \subseteq \mathbf{m}' \cap \mathbf{m}.
\end{align}
On the other hand, $\mathbf{m}' \cap \mathbf{m} \subseteq \mathbf{m}$ so
\begin{align}
  \mathbf{s} \subseteq (\mathbf{m}' \cap \mathbf{m}) \subseteq \mathbf{m}.
\end{align}
We conclude that the intersection $\mathbf{s}' = \mathbf{m}' \cap \mathbf{m}$ is at least $\mathbf{s}$ and at most $\mathbf{m}$.
This implies the following partitioning of $\mathcal{S}(\mathbf{s})$ for $\mathbf{s} \subseteq \mathbf{m}$:
\begin{align}
  \mathcal{S}(\mathbf{s}) 
  = \{ \mathbf{m}' \in \kernelmcr \;|\; \mathbf{s} \subseteq \mathbf{m}' \}
  = \bigcup_{\mathbf{s} \subseteq \mathbf{s}' \subseteq \mathbf{m}}  \{ \mathbf{m}' \in \kernelmcr \;|\; \mathbf{m}' \cap \mathbf{m} = \mathbf{s}' \}
  = \bigcup_{\mathbf{s} \subseteq \mathbf{s}' \subseteq \mathbf{m}} \mathcal{S}(\mathbf{m}, \mathbf{s}').
\end{align}
In turn, we find that
\begin{align} \label{eq:Ys_from_Xms}
  Y^{(\mathbf{s})} 
  \equiv \sum_{\mathbf{m}' \in \mathcal{S}(\mathbf{s})} P^{(\mathbf{m}')}
  = \sum_{\mathbf{s} \subseteq \mathbf{s}' \subseteq \mathbf{m}} \left( \sum_{\mathbf{m}' \in \mathcal{S}(\mathbf{m}, \mathbf{s}')} P^{(\mathbf{m}')} \right)
  = \sum_{\mathbf{s} \subseteq \mathbf{s}' \subseteq \mathbf{m}} X^{(\mathbf{m}, \mathbf{s}')},
\end{align}
which is in fact a triangular system of equations. Importantly, $Y^{(\mathbf{s})}$ does not depend on $\mathbf{m}$
and can thus be precomputed once before computing the entire diagonal.

To see the triangular structure, we examine a concrete example: 
Let $\mathbf{m} = (1,2,3)$ and consider \eqref{eq:Ys_from_Xms} for all $\mathbf{s} \subseteq \mathbf{m}$.
Omitting commas between integers and writing $X^{(\mathbf{s}')}$ for $X^{(\mathbf{m}, \mathbf{s}')}$, we find that
\begin{align}
  \begin{bmatrix}
    Y^{(123)} \\
    Y^{(23)}  \\
    Y^{(13)}  \\
    Y^{(12)}  \\
    Y^{(3)}   \\
    Y^{(2)}   \\
    Y^{(1)}   \\
    Y^{()}
  \end{bmatrix}
  =
  \begin{bmatrix}
    1 & 0 & 0 & 0 & 0 & 0 & 0 & 0 \\
    1 & 1 & 0 & 0 & 0 & 0 & 0 & 0 \\
    1 & 0 & 1 & 0 & 0 & 0 & 0 & 0 \\
    1 & 0 & 0 & 1 & 0 & 0 & 0 & 0 \\
    1 & 1 & 1 & 0 & 1 & 0 & 0 & 0 \\
    1 & 1 & 0 & 1 & 0 & 1 & 0 & 0 \\
    1 & 0 & 1 & 1 & 0 & 0 & 1 & 0 \\
    1 & 1 & 1 & 1 & 1 & 1 & 1 & 1 
  \end{bmatrix}
  \begin{bmatrix}
    X^{(123)} \\
    X^{(23)}  \\
    X^{(13)}  \\
    X^{(12)}  \\
    X^{(3)}   \\
    X^{(2)}   \\
    X^{(1)}   \\
    X^{()}
  \end{bmatrix}.
\end{align}
The dimension of the system is $2^{|\mathbf{m}|}$, so it can be solved using ordinary forward substitution with complexity $\mathcal{O}(4^{|\mathbf{m}|})$.
It is, however, possible to derive an $\mathcal{O}(3^{|\mathbf{m}|})$ algorithm by taking advantage of the special structure. We start by slightly rearranging \eqref{eq:Ys_from_Xms}:
\begin{align} \label{eq:Xms_from_Ys}
  X^{(\mathbf{m}, \mathbf{s})} =  Y^{(\mathbf{s})} - \sum_{\mathbf{s} \subset \mathbf{s}' \subseteq \mathbf{m}} X^{(\mathbf{m}, \mathbf{s}')}.
\end{align}
This suggests the following algorithm:
\begin{center}
  \begin{minipage}{0.7\linewidth}
    \begin{algorithm}[H]
      \caption{Given $\mathbf{m} \in \gridmcr$, compute $X^{(\mathbf{m}, \mathbf{s})}$ for all $\mathbf{s} \subseteq \mathbf{m}$.}\label{alg:Xms}
      \begin{algorithmic}
      \For{$\mathbf{s} \subseteq \mathbf{m}$}
        \State $X^{(\mathbf{m}, \mathbf{s})} = Y^{(\mathbf{s})}$
      \EndFor
      \For{$\mathbf{s} \subseteq \mathbf{m}$} \Comment{Loop in descending order}
        \For{$\mathbf{s}' \subset \mathbf{s}$}
          \State $X^{(\mathbf{m}, \mathbf{s}')} \mathrel{{-}{=}} X^{(\mathbf{s})}$
        \EndFor
      \EndFor
      \end{algorithmic}
    \end{algorithm}
    \end{minipage}
\end{center}
Using the binomial theorem, the number of loop iterations in the nested (dominating) loop is
\begin{align}
  \sum_{k=0}^{|\mathbf{m}|} \binom{|\mathbf{m}|}{k} \sum_{l=0}^{k - 1} \binom{k}{l}
  = \sum_{k=0}^{|\mathbf{m}|} \binom{|\mathbf{m}|}{k} \left( -1 + \sum_{l=0}^{k} \binom{k}{l} \right)
  = 3^{|\mathbf{m}|} - 2^{|\mathbf{m}|},
\end{align}
leading to a complexity of $\mathcal{O}(3^{|\mathbf{m}|})$.

The following algorithm describes how to precompute $Y^{(\mathbf{s})}$ for all $\mathbf{s} \in \gridmcr$:
\begin{center}
  \begin{minipage}{0.7\linewidth}
    \begin{algorithm}[H]
      \caption{Compute $Y^{(\mathbf{s})}$ for all $\mathbf{s} \in \gridmcr$. Assumes $\kernelmcr \subseteq \gridmcr$.} \label{alg:Ys}
      \begin{algorithmic}
      \For{$\mathbf{s} \in \gridmcr$}
        \State $Y^{(\mathbf{s})} = 0$
      \EndFor
      \For{$\mathbf{s} \in \kernelmcr$}
        \State $P = \sigma^2_{|\mathbf{s}|}$
        \For{$m \in \mathbf{s}$}
          \State $P \mathrel{{*}{=}} K^{(m)}_{00}$
        \EndFor
        \For{$\mathbf{s}' \subseteq \mathbf{s}$}
          \State $Y^{(\mathbf{s}')} \mathrel{{+}{=}} P$
        \EndFor
      \EndFor
      \end{algorithmic}
    \end{algorithm}
    \end{minipage}
\end{center}
Assuming simple kernel and grid \acp{mcr} 
($\kernelmcr = \gridmcr$)
with a maximum interaction order or cut level, $\alpha$, we see that
the nested loop over $\mathbf{s}$ and $\mathbf{s}'$ dominates with a
complexity of
\begin{align}
  \mathcal{O}\!\left( \frac{D^\alpha 2^\alpha}{\alpha!} \right).
\end{align}
We now have the tools to compute the entire diagonal:
\begin{center}
  \begin{minipage}{0.7\linewidth}
    \begin{algorithm}[H]
      \caption{Compute $K_{\mathbf{i} \mathbf{i}}$ for all $\mathbf{i} \in \hatcal{I}$.
      Equivalently, compute $K_{(\mathbf{m}, \mathbf{a})(\mathbf{m}, \mathbf{a})}$ for all $\mathbf{m} \in \gridmcr$ 
      and $\mathbf{a} \in \mathcal{A}^{(\mathbf{m})}$ (see Definition~\ref{def:subgrid}).
      Assumes $\kernelmcr \subseteq \gridmcr$.} \label{alg:Kdiag}
      \begin{algorithmic}
      \State Compute $Y^{(\mathbf{s})}$ for all $\mathbf{s} \in \gridmcr$ using Algorithm~\ref{alg:Ys}.
      \For{$\mathbf{m} \in \gridmcr$}
        \State Compute $X^{(\mathbf{m}, \mathbf{s})}$ for all $\mathbf{s} \subseteq \mathbf{m}$ using Algorithm~\ref{alg:Xms}.
        \For{$\mathbf{s} \subseteq \mathbf{m}$}
          \For{$\mathbf{a} \in \mathcal{A}^{(\mathbf{m})}$}
            \State $K_{(\mathbf{m}, \mathbf{a})(\mathbf{m}, \mathbf{a})} \mathrel{{+}{=}} \left( \displaystyle \prod_{m \in \mathbf{s}} \frac{K^{m}_{a^m a^m}}{K^{(m)}_{00}} \right) X^{(\mathbf{m}, \mathbf{s})}$
          \EndFor
        \EndFor
      \EndFor
      \end{algorithmic}
    \end{algorithm}
    \end{minipage}
\end{center}
Taking each loop into account (and assuming $n_1 = \cdots = n_d = n$ for simplicity),
we find a complexity of
\begin{align}
  \mathcal{O}\!\left( \alpha 2^\alpha \frac{D^\alpha n^\alpha}{\alpha!} \right)
  = \mathcal{O}\big( \alpha 2^\alpha \widehat{N} \big).
\end{align}
This should be compared to the \ac{mvp} by $\hatbold{K}$, which has a cost of $\mathcal{O}\big( n \alpha \widehat{N} \big)$.
The $\alpha$-scaling related to the diagonal is clearly more severe, but at the same time we expect $\alpha$ to be
relatively small. In practice, we thus anticipate that $2^\alpha$ is comparable to $n$, leading to roughly similar costs for the kernel diagonal and the kernel \ac{mvp}.
It should also be recalled that the diagonal is computed only once in each optimization cycle (when the preconditioner is constructed),
whereas the kernel \ac{mvp} is computed a large number times.


\newpage
\section{The preconditioner trace term} \label{appendix:precond_trace_term}
\renewcommand{\theequation}{\thesection\arabic{equation}}

In order to reduce the variance of the trace term estimate \citep{wengerPreconditioningScalableGaussian2022}
we must be able to compute the preconditioner trace term, $\trace ( \mathbf{P}^{-1} \partial \mathbf{P} / \partial \theta )$,
given that $\mathbf{P}$ is a pivoted Cholesky preconditioner of rank $k$:
\begin{align}
  \mathbf{P} = 
  \mathbf{Z} \mathbf{Z}^{\trans} + \sigma^2 \mathbf{I}, \quad \mathbf{Z} \in \mathbb{R}^{\widehat{N} \times k}.
\end{align}
As a working example we will consider the length scale trace terms, i.e. $\trace ( \mathbf{P}^{-1} \partial \mathbf{P} / \partial \ell^{(m)} )$
for $m = 1, \ldots, D$. In practice, we may not use an independent length scale for each dimension, so this is a worst-case example
that we should nevertheless be able to handle.

The simplest option requires an explicit representation of the derivative of the pivoted Cholesky decomposition.
Considering any (non-noise) hyperparameter $\theta$, we have
\begin{align}
  \frac{\partial \mathbf{P}}{\partial \theta} 
  = \frac{\partial \mathbf{Z}}{\partial \theta} \mathbf{Z}^{\trans} + \mathbf{Z} \frac{\partial \mathbf{Z}^{\trans}}{\partial \theta}
  = \mathbf{X} \mathbf{Z}^{\trans} + \mathbf{Z} \mathbf{X}^{\trans} , \quad \mathbf{X} \in \mathbb{R}^{\widehat{N} \times k}.
\end{align}
The derivative Cholesky vectors $\mathbf{X}$ can be obtained by manual \citep{aquilanteAnalyticDerivativesCholesky2008, fengImplementationAnalyticGradients2019} 
or automatic \citep{smithDifferentiationCholeskyAlgorithm1995} differentiation of the pivoted Cholesky algorithm, but storage alone costs $\mathcal{O}( k p \widehat{N} )$ where $p$
is the number of hyperparameters. For the length scales in our working example, this translates to $\mathcal{O}( k D \widehat{N} )$ or 
$\mathcal{O}( k D^{\alpha + 1} n^\alpha / \alpha! )$ if the kernel and grid \acp{mcr} are simple.
To avoid the $D$-scaling of $\mathcal{O}(D^{\alpha + 1})$, which does not occur elsewhere in our code,
we will use the connection between the pivoted Cholesky factorization and the column Nyström approximation \citep[see, e.g.,][Property 2.1]{chenRandomlyPivotedCholesky2025}.
Using MATLAB notation we have
\begin{align} \label{eq:nystrom}
  \mathbf{Z} \mathbf{Z}^{\trans} = \hatbold{K}(:,\mathcal{P}) \hatbold{K}(\mathcal{P},\mathcal{P})^{-1} \hatbold{K}(:,\mathcal{P})^{\trans}.
\end{align}
where $\mathcal{P}$ denotes the set of pivot indices as determined by the pivoted Cholesky algorithm.
The pivot columns and the pivot submatrix can be written as
\begin{alignat}{5}
  \mathbf{A} &= \hatbold{K}(\;:        &&,\mathcal{P}) &&= &                              \hatbold{K} \mathbf{\Delta} &\in \mathbb{R}^{\widehat{N} \times k}, \\
  \mathbf{B} &= \hatbold{K}(\mathcal{P}&&,\mathcal{P}) &&= &\; \mathbf{\Delta}^{\!\trans} \hatbold{K} \mathbf{\Delta} &\in \mathbb{R}^{k           \times k},
\end{alignat}
where $\mathbf{\Delta} \in \mathbb{R}^{\widehat{N} \times k}$ is a chopping matrix that selects the pivot indices,
i.e. the columns of $\mathbf{\Delta}$ are simply standard basis vectors $\hatbold{e}_{\mathbf{i}}$ with $\mathbf{i} \in \mathcal{P} \subseteq \hatcal{I}$.
The expressions involving chopping will be useful later on, since they allow the derivatives of 
$\mathbf{A}$ and $\mathbf{B}$ to be computed via fast \acp{mvp}.
The pivot columns, $\mathbf{A}$, are already computed by the pivoted Cholesky algorithm, and the pivot submatrix
$\mathbf{B}$ is easily obtained from $\mathbf{A}$ by selecting the appropriate rows. 
Taking the derivative of \eqref{eq:nystrom} with respect to a non-noise hyperparameter $\theta$, we obtain
\begin{align}
  \pdv{\mathbf{P}}{\theta}
  = \pdv{\mathbf{Z} \mathbf{Z}^{\trans}}{\theta}
  = \pdv{\mathbf{A} \mathbf{B}^{-1} \mathbf{A}^{\trans}}{\theta}
  &= \pdv{\mathbf{A}}{\theta} \mathbf{B}^{-1} \mathbf{A}^{\trans}
  +  \mathbf{A} \pdv{\mathbf{B}^{-1}}{\theta} \mathbf{A}^{\trans}
  +  \mathbf{A} \mathbf{B}^{-1} \pdv{\mathbf{A}^{\trans}}{\theta} \nonumber \\
  &= \pdv{\mathbf{A}}{\theta} \mathbf{B}^{-1} \mathbf{A}^{\trans}
  -  \mathbf{A} \mathbf{B}^{-1} \pdv{\mathbf{B}}{\theta} \mathbf{B}^{-1} \mathbf{A}^{\trans}
  +  \mathbf{A} \mathbf{B}^{-1} \pdv{\mathbf{A}^{\trans}}{\theta}. \label{eq:dPdt_nystrom}
\end{align}
The Nyström format \eqref{eq:nystrom} can also be used to express $\mathbf{P}^{-1}$ via the matrix inverse lemma,
which would remove the need to store the Cholesky vectors, $\mathbf{Z}$.
We have found, however, that the Cholesky format produces more accurate solves, and
we will thus use Cholesky for $\mathbf{P}^{-1}$ and Nyström for $\partial \mathbf{P} / \partial \theta$
when computing the trace terms.
Before proceeding we recall \eqref{eq:Pinv_woodbury}:
\begin{align}
  \mathbf{P}^{-1} = \frac{1}{\sigma^2}
  \left( \mathbf{I}_{\widehat{N}} - \mathbf{Z} \mathbf{Q}^{-1} \mathbf{Z}^{\trans} \right). \label{eq:Pinv_cholesky}
\end{align}
It is now a simple matter to combine \eqref{eq:dPdt_nystrom} and \eqref{eq:Pinv_cholesky}.
Using the invariance of the trace under transposition and cyclic shifts, one finds the expression
\begin{align} \label{eq:precond_trace_term_rewritten}
  \sigma^2 \trace\left( \mathbf{P}^{-1} \pdv{\mathbf{P}}{\theta} \right) 
  = 2 \trace\left( \mathbf{X}^{\trans} \pdv{\mathbf{A}}{\theta} \right)
  +   \trace\left( \mathbf{Y}^{\trans} \pdv{\mathbf{B}}{\theta} \right)
\end{align}
with the definitions
\begin{subequations}
  \begin{gather}
    \mathbf{X}^{\trans} = \big( \mathbf{B}^{-1} \mathbf{A}^{\trans} \big) 
    - \Big( \big(\mathbf{B}^{-1} \mathbf{A}^{\trans}\big) \mathbf{Z} \Big) \big(\mathbf{Q}^{-1} \mathbf{Z}^{\trans}\big)
    \in \mathbf{R}^{k \times \widehat{N}}, \\
    \mathbf{Y}^{\trans} = \Big( \big(\mathbf{B}^{-1} \mathbf{A}^{\trans}\big) \mathbf{Z} \Big) \Big( \big(\mathbf{Q}^{-1} \mathbf{Z}^{\trans}\big) \big(\mathbf{B}^{-1} \mathbf{A}^{\trans}\big)^{\!\trans} \Big) 
    - \big(\mathbf{B}^{-1} \mathbf{A}^{\trans}\big) \big(\mathbf{B}^{-1} \mathbf{A}^{\trans}\big)^{\!\trans}
    \in \mathbf{R}^{k \times k}.
  \end{gather}
\end{subequations}
Parentheses indicate the order of evaluation used in our code (expressions that occur multiple times should of course be computed once and then reused).
Computing $\mathbf{X}$ and $\mathbf{Y}$ from $\mathbf{B}$, $\mathbf{Q}$, $\mathbf{A}$ and $\mathbf{Z}$
has a complexity of $\mathcal{O}(k^2 \widehat{N})$.

We still need to handle the derivatives. 
Using $\mathbf{A} = \hatbold{K} \mathbf{\Delta} = \hatbold{L} \hatbold{M} \hatbold{U} \mathbf{\Delta}$
and $\mathbf{B} = \mathbf{\Delta}^{\!\trans} \hatbold{K} \mathbf{\Delta} = \mathbf{\Delta}^{\!\trans} \hatbold{L} \hatbold{M} \hatbold{U} \mathbf{\Delta}$,
we can write the matrices on the right-hand side of \eqref{eq:precond_trace_term_rewritten} as
\begin{subequations} \label{eq:precond_trace_term_via_mvp}
  \begin{gather}
    \mathbf{X}^{\trans} \pdv{\mathbf{A}}{\theta}
     = \mathbf{X}^{\trans} \hatbold{L} \pdv{\hatbold{M}}{\theta} \hatbold{U} \mathbf{\Delta} 
     = \big( \hatbold{U} \mathbf{X} \big)^{\!\trans} \bigg( \pdv{ \hatbold{M}}{\theta} \hatbold{U} \mathbf{\Delta} \bigg), \\
    \mathbf{Y}^{\trans} \pdv{\mathbf{B}}{\theta}
     = \mathbf{Y}^{\trans} \mathbf{\Delta}^{\!\trans} \hatbold{L} \pdv{\hatbold{M}}{\theta} \hatbold{U} \mathbf{\Delta} 
     = \big( \hatbold{U} (\mathbf{\Delta} \mathbf{Y}) \big)^{\!\trans} \bigg( \pdv{ \hatbold{M}}{\theta} \hatbold{U} \mathbf{\Delta} \bigg).
  \end{gather}
\end{subequations}
The cost of $\mathbf{\Delta} \mathbf{Y}$ is only $\mathcal{O}(k\widehat{N})$ due to the sparsity of $\mathbf{\Delta}$.
The matrices $\hatbold{U} \mathbf{X}$ and $\hatbold{U} (\mathbf{\Delta} \mathbf{Y})$ can each be obtained
at a cost of $\mathcal{O}(k \alpha \widehat{N})$ by computing fast \acp{mvp} with $\hatbold{U}$ (see Appendix~\ref{appendix:complexity_Lv_Uv}).
The columns of $\mathbf{\Delta}$ are simply standard basis vectors,
so we can use the techniques of Appendices~\ref{appendix:K_column} and \ref{appendix:complexity_dMdl_v} to compute the sparse columns of 
$(\partial \hatbold{M} / \partial \theta) \hatbold{U} \mathbf{\Delta} \in \mathbb{R}^{\widehat{N} \times k}$ directly.
The diagonals of $\mathbf{X}^{\trans} (\partial \mathbf{A} / \partial \theta)$ and $\mathbf{Y}^{\trans} (\partial \mathbf{B} / \partial \theta)$,
which are sufficient for computing the traces, require only $k$ sparse dot products each.

We have now established a precomputation phase where $\mathbf{X}$, $\mathbf{Y}$, $\hatbold{U} \mathbf{X}$ and $\hatbold{U} (\mathbf{\Delta} \mathbf{Y})$
are computed with leading-order complexity $\mathcal{O}(k^2 \widehat{N})$ (assuming $k > \alpha$, which is almost always the case in practice).
This phase is followed by computing, for each $\theta$, the $k$ columns of $(\partial \hatbold{M} / \partial \theta) \hatbold{U} \mathbf{\Delta}$
as well as $k$ sparse dot products.

Returning now to our working example, we use the fact that each column of $(\partial \hatbold{M} / \partial \ell^{(m)}) \hatbold{U} \mathbf{\Delta}$
has a cost of $\mathcal{O}(\alpha^2 \widehat{N} / D)$, while each sparse dot product scales as $\mathcal{O}(\alpha \widehat{N} / D)$ (see Appendix~\ref{appendix:complexity_dMdl_v}).
Computing the traces for all $m = 1, \ldots, D$ thus has a total cost of $\mathcal{O}(k \alpha^2 \widehat{N})$.


\newpage
\section{Overall structure of the code} \label{appendix:overal_structure}
\renewcommand{\theequation}{\thesection\arabic{equation}}

Having covered all components necessary to do \ac{mvp}-based \ac{gpr},
a summary is in place.
The computation proceeds in two
main phases: Hyperparameter optimization ($N_{\mathrm{opt}}$ cycles) and prediction ($\widehat{N}_{*}$ points).
At the beginning of each optimization cycle, the preconditioner is constructed and a number of
preconditioner-related quantities are precomputed. This is followed by estimation of the \ac{mll},
\begin{align} \label{gridgpr:eq:mll}
  \mathcal{L} 
  = - \frac{1}{2} \left( \hatbold{y}^{\trans} \widehat{\weights} + \log{|\hatbold{C}|} + \widehat{N} \log{2 \pi} \right),
\end{align}
and the \ac{mll} gradient,
\begin{align} \label{gridgpr:eq:mll_gradient}
  \pdv{\mathcal{L}}{\theta} 
  = \frac{1}{2} \widehat{\weights}^{\trans} \pdv{\hatbold{C}}{\theta} \widehat{\weights}
  - \frac{1}{2} \trace\left( \hatbold{C}^{-1} \pdv{\hatbold{C}}{\theta} \right). 
\end{align}
The dominating part of Eqs.~\eqref{gridgpr:eq:mll} and \eqref{gridgpr:eq:mll_gradient}
is the modified \ac{cg} algorithm needed for estimating the log-determinant and the trace terms.
Each \ac{cg} iteration spends a kernel \ac{mvp}, an inverse preconditioner \ac{mvp} 
and a few dot products (which we will ignore when analyzing the cost).

The predictive mean and variance for a single test point is
\begin{gather}
  \mu = \hatbold{k}_{*}^{\trans} \widehat{\weights} \label{gridgpr:eq:mean} \\
  \Sigma = K_{**} - \hatbold{k}_{*}^{\trans} \hatbold{C}^{-1} \hatbold{k}_{*} \label{gridgpr:eq:variance}
\end{gather}
where $\hatbold{k}_{*} \in \mathbb{R}^{\widehat{N}}$ is a column 
of the training--test cross-covariance matrix
and the scalar $K_{**}$ is the test--test covariance. 
Again, the dominating
part is the \ac{cg} algorithm required for the matrix inverse.
The training--test column can be computed by a slightly modified version of
the algorithm in Appendix~\ref{appendix:K_column} at a cost of $\mathcal{O}(\alpha \widehat{N})$.

We use the symbols $N_{\mathrm{opt}}$, $N_{\mathrm{probe}}$ and $N_{\mathrm{CG}}$
for the number of optimization cycles, probe vectors and \ac{cg} iterations.
The preconditioner
rank is $k$, and it is assumed that $k < \widehat{N}$, $\alpha < k$ and $N_{\mathrm{CG}} < \widehat{N}$.
In addition, we only consider the \ac{mll} gradient with respect to the length scales (the order variance gradients are not
dominating).
With these assumptions in place, we are ready to summarize the various steps in more detail. 
The summary is not fully exhaustive -- it is only intended to cover the parts that
are conceptually or computationally significant.
\begin{itemize}[label=$\bullet$]
  \item Phase 1: Hyperparameter optimization ($N_{\mathrm{opt}}$ cycles)
  \begin{itemize}[label=$\circ$]
    \item Preconditioner
    \begin{itemize}
      \item Construct $\mathbf{P}$ and precompute preconditioner-related quantities: $\mathcal{O}(k^2 \widehat{N})$.
      \item Compute the log-determinant of $\mathbf{P}$: $\mathcal{O}(k)$.
      \item Prepare the preconditioner trace terms: $\mathcal{O}(k^2 \widehat{N})$
      \item Compute the preconditioner trace terms: $\mathcal{O}(k \alpha^2 \widehat{N})$
    \end{itemize}
    \item Estimates of \ac{mll} and \ac{mll} gradient
    \begin{itemize}
      \item Quadratic terms: $\mathcal{O}(n \alpha^2 \widehat{N})$.
      \item Stochastic trace estimation: $\mathcal{O}\big(N_{\mathrm{probe}} N_{\mathrm{CG}} (n \alpha \widehat{N} + k \widehat{N} ) \big)$
    \end{itemize}
  \end{itemize}
  \item Phase 2: Prediction ($\widehat{N}_{*}$ points)
  \begin{itemize}[label=$\circ$]
    \item Compute training--test column: $\mathcal{O}(\alpha \widehat{N})$.
    \item Compute predictive mean: $\mathcal{O}(\widehat{N})$.
    \item Compute predictive variance: $\mathcal{O}\big( N_{\mathrm{CG}} (n \alpha \widehat{N} + k \widehat{N} ) \big)$.
  \end{itemize}
\end{itemize}
Looking at the summary, we see that the most expensive parts are
the stochastic trace estimation and the predictive variances
with complexities
\begin{align}
  \mathcal{O}\big(N_{\mathrm{opt}} N_{\mathrm{probe}} N_{\mathrm{CG}} (n \alpha \widehat{N} + k \widehat{N} ) \big) \quad\text{and}\quad
  \mathcal{O}\big( \widehat{N}_{*} N_{\mathrm{CG}} (n \alpha \widehat{N} + k \widehat{N} ) \big), \nonumber
\end{align}
respectively.
Ideally, the parameters $N_{\mathrm{opt}}$, $N_{\mathrm{probe}}$, $N_{\mathrm{CG}}$ and $k$ do not depend on $\widehat{N}$,
in which case the complexities just mentioned essentially reduce to the complexity of the kernel \ac{mvp}, i.e.
$\mathcal{O}(n \alpha \widehat{N})$. Whether this happens in practice depends chiefly 
on the conditioning of the kernel, which does not depend on $\widehat{N}$ as such but rather
on the nature of the dataset and the value of the hyperparameters. Noise plays a particularly
important role in this regard, since larger noise improves the conditioning of the kernel.

Irrespective of the exact scaling, the \ac{mvp}-based framework has the great advantage
that it is easily parallelized: Trace estimation can be parallelized over probe vectors
and predictions over test points.


\newpage
\section{Empirical complexity of the kernel matrix-vector product} \label{appendix:mvp_scaling}
\renewcommand{\theequation}{\thesection\arabic{equation}}
\renewcommand{\thefigure}{\thesection\arabic{figure}}
\renewcommand{\thetable}{\thesection\arabic{table}}

All benchmarks were run using standard double precision on a single core of an 
Intel Xeon Platinum 8358 processor.
The benchmarks were run using Google Benchmark,\footnote{Version 1.9.4, Apache License 2.0, available at \url{https://github.com/google/benchmark}.}
which dynamically determines an appropriate number of iterations to ensure stable timings.
The minimum warm-up time and minimum running time were set to \SI{2}{\second}.

We mention in Section~\ref{sec:computational_complexity} that the
empirical $D$-complexities are slightly too large, in particular for $\alpha = 4$,
which we take as a sign that the 
asymptotic region is not yet fully reached.
This hypothesis has been investigated for each combination of $\alpha$ and $n$ by including only the lower or upper half 
of the data during fitting. In all cases, the latter fit has a smaller slope (closer to the theoretical slope).
The deviations are anyway rather small, so we take 
Figure~\ref{fig:gridgpr_benchmark_alpha24}
as confirmation that the $D$-complexity is correct.

\begin{figure}[H]
  \centering
  \includegraphics[width=0.7\linewidth]{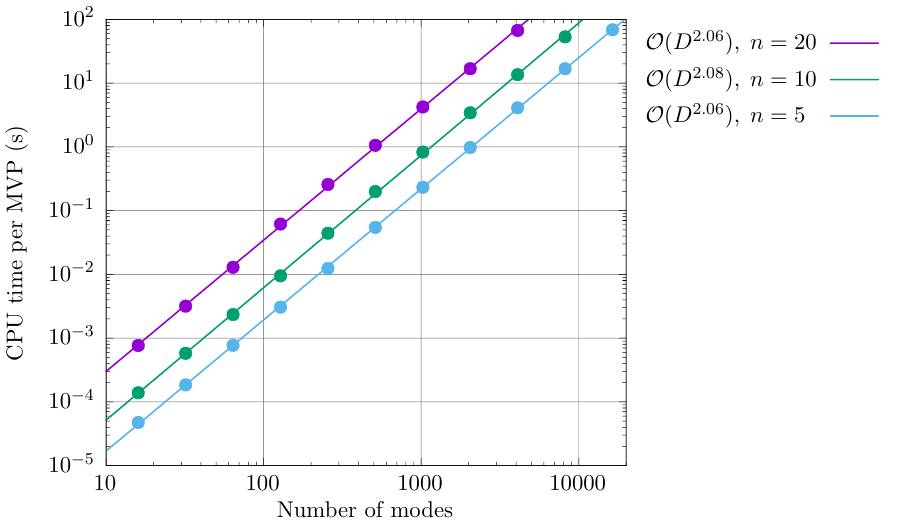}
  \caption{CPU times for the kernel \ac{mvp} with $\alpha = 2$ and $n = 5,10,20$.}
  \label{fig:gridgpr_benchmark_alpha2}
\end{figure}

\begin{figure}[H]
  \centering
  \includegraphics[width=0.7\linewidth]{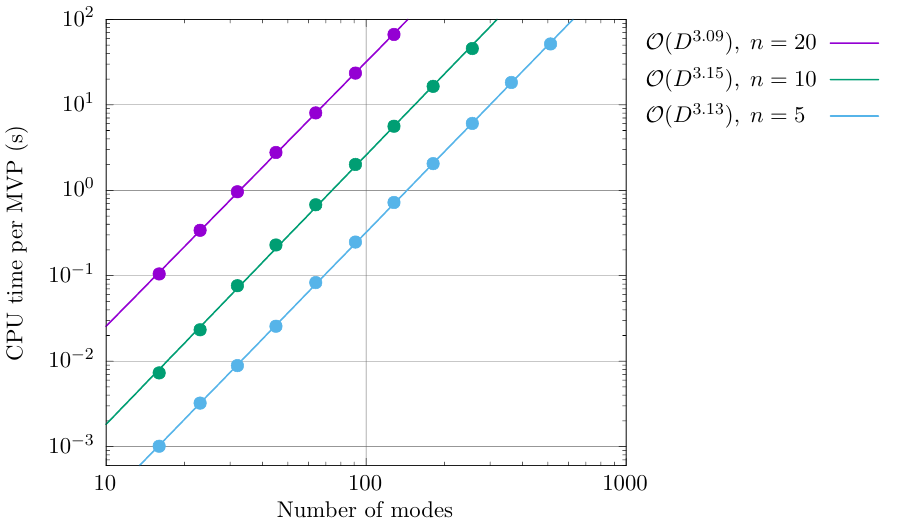}
  \caption{CPU times for the kernel \ac{mvp} with $\alpha = 3$ and $n = 5,10,20$.}
  \label{fig:gridgpr_benchmark_alpha3}
\end{figure}

\begin{figure}[H]
  \centering
  \includegraphics[width=0.7\linewidth]{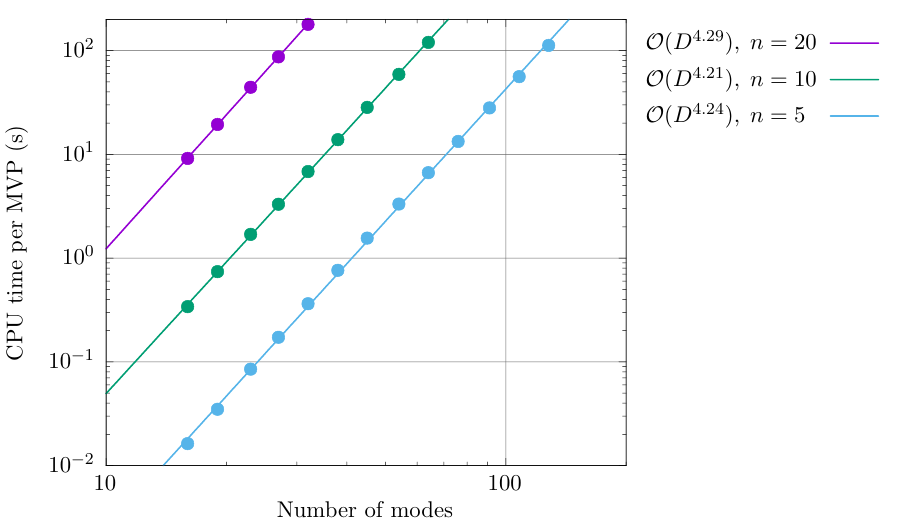}
  \caption{CPU times for the kernel \ac{mvp} with $\alpha = 4$ and $n = 5,10,20$.}
  \label{fig:gridgpr_benchmark_alpha4}
\end{figure}

\begin{table}[H]
  \centering
  \caption{Benchmark data for $\alpha = 2$ and $n = 5,10,20$.
  Column five indicates the memory consumption corresponding to a single vector of size $\widehat{N}$ (assuming standard double precision).}
  \medskip
  \begin{tabular}{
  S[table-format=1.0  , table-text-alignment=right] 
  S[table-format=2.0  , table-text-alignment=right] 
  S[table-format=5.0  , table-text-alignment=right] 
  S[table-format=1.2e2, table-text-alignment=center, round-mode=places, round-precision=2] 
  r 
  S[table-format=1.2e1, table-text-alignment=center, round-mode=places, round-precision=2] 
  }
\toprule
{$\alpha$} & {$n$} & {$D$} & {$\widehat{N}$} & {Memory} & {CPU time (s)}   \\
\midrule
2        & 5     &    16  &  1.985000e+03  &  {\printmem{1}{1.588000e+04}} & 4.7439000000e-05 \\
         &       &    32  &  8.065000e+03  &  {\printmem{1}{6.452000e+04}} & 1.8467500000e-04 \\
         &       &    64  &  3.251300e+04  &  {\printmem{1}{2.601040e+05}} & 7.6993000000e-04 \\
         &       &   128  &  1.305610e+05  &  {\printmem{1}{1.044488e+06}} & 3.0601300000e-03 \\
         &       &   256  &  5.232650e+05  &  {\printmem{1}{4.186120e+06}} & 1.2411244000e-02 \\
         &       &   512  &  2.095105e+06  &  {\printmem{1}{1.676084e+07}} & 5.4491284000e-02 \\
         &       &  1024  &  8.384513e+06  &  {\printmem{1}{6.707610e+07}} & 2.3268220500e-01 \\
         &       &  2048  &  3.354624e+07  &  {\printmem{1}{2.683699e+08}} & 9.7750235100e-01 \\
         &       &  4096  &  1.342013e+08  &  {\printmem{1}{1.073611e+09}} & 4.1058150380e+00 \\
         &       &  8192  &  5.368381e+08  &  {\printmem{1}{4.294705e+09}} & 1.6802000000e+01 \\
         &       & 16384  &  2.147418e+09  &  {\printmem{1}{1.717934e+10}} & 6.8679000000e+01 \\
\midrule
2        & 10    &    16  &  9.865000e+03  &  {\printmem{1}{7.892000e+04}} & 1.3876600000e-04 \\
         &       &    32  &  4.046500e+04  &  {\printmem{1}{3.237200e+05}} & 5.7681600000e-04 \\
         &       &    64  &  1.638730e+05  &  {\printmem{1}{1.310984e+06}} & 2.3433010000e-03 \\
         &       &   128  &  6.595210e+05  &  {\printmem{1}{5.276168e+06}} & 9.4915620000e-03 \\
         &       &   256  &  2.646145e+06  &  {\printmem{1}{2.116916e+07}} & 4.4200680000e-02 \\
         &       &   512  &  1.060070e+07  &  {\printmem{1}{8.480564e+07}} & 1.9902654400e-01 \\
         &       &  1024  &  4.243507e+07  &  {\printmem{1}{3.394806e+08}} & 8.2786242500e-01 \\
         &       &  2048  &  1.698048e+08  &  {\printmem{1}{1.358438e+09}} & 3.4249155280e+00 \\
         &       &  4096  &  6.793482e+08  &  {\printmem{1}{5.434786e+09}} & 1.3572000000e+01 \\
         &       &  8192  &  2.717651e+09  &  {\printmem{1}{2.174121e+10}} & 5.3079000000e+01 \\
\midrule
2        & 20    &    16  &  4.362500e+04  &  {\printmem{1}{3.490000e+05}} & 7.6686500000e-04 \\
         &       &    32  &  1.796650e+05  &  {\printmem{1}{1.437320e+06}} & 3.1765050000e-03 \\
         &       &    64  &  7.289930e+05  &  {\printmem{1}{5.831944e+06}} & 1.2933457000e-02 \\
         &       &   128  &  2.936641e+06  &  {\printmem{1}{2.349313e+07}} & 6.1556135000e-02 \\
         &       &   256  &  1.178790e+07  &  {\printmem{1}{9.430324e+07}} & 2.5739680500e-01 \\
         &       &   512  &  4.723430e+07  &  {\printmem{1}{3.778744e+08}} & 1.0564553280e+00 \\
         &       &  1024  &  1.891026e+08  &  {\printmem{1}{1.512821e+09}} & 4.2271102590e+00 \\
         &       &  2048  &  7.567411e+08  &  {\printmem{1}{6.053929e+09}} & 1.6829000000e+01 \\
         &       &  4096  &  3.027626e+09  &  {\printmem{1}{2.422101e+10}} & 6.7062000000e+01 \\
\bottomrule
\end{tabular}
\label{tab:gridgpr_benchmark_alpha2}%
\end{table}%

\afterpage{\clearpage}
\begin{table}[p]
\centering
\caption{Benchmark data for $\alpha = 3$ and $n = 5,10,20$.
Column five indicates the memory consumption corresponding to a single vector of size $\widehat{N}$ (assuming standard double precision).}
\medskip
\begin{tabular}{
S[table-format=1.0  , table-text-alignment=right] 
S[table-format=2.0  , table-text-alignment=right] 
S[table-format=4.0  , table-text-alignment=right] 
S[table-format=1.2e2, table-text-alignment=center, round-mode=places, round-precision=2] 
r
S[table-format=1.2e1, table-text-alignment=center, round-mode=places, round-precision=2] 
}
\toprule
{$\alpha$} & {$n$} & {$D$} & {$\widehat{N}$} & {Memory} & {CPU time (s)}   \\
\midrule
3      & 5     &   16  &  3.782500e+04  &  {\printmem{1}{3.026000e+05}} & 1.0114260000e-03 \\
       &       &   23  &  1.174850e+05  &  {\printmem{1}{9.398800e+05}} & 3.2355280000e-03 \\
       &       &   32  &  3.255050e+05  &  {\printmem{1}{2.604040e+06}} & 8.8771580000e-03 \\
       &       &   45  &  9.241810e+05  &  {\printmem{1}{7.393448e+06}} & 2.5648118000e-02 \\
       &       &   64  &  2.699009e+06  &  {\printmem{1}{2.159207e+07}} & 8.3170692000e-02 \\
       &       &   91  &  7.840925e+06  &  {\printmem{1}{6.272740e+07}} & 2.4786300400e-01 \\
       &       &  128  &  2.197862e+07  &  {\printmem{1}{1.758290e+08}} & 7.1886346600e-01 \\
       &       &  181  &  6.246744e+07  &  {\printmem{1}{4.997396e+08}} & 2.0480175660e+00 \\
       &       &  256  &  1.773885e+08  &  {\printmem{1}{1.419108e+09}} & 6.0638160120e+00 \\
       &       &  362  &  5.028658e+08  &  {\printmem{1}{4.022926e+09}} & 1.8252000000e+01 \\
       &       &  512  &  1.425373e+09  &  {\printmem{1}{1.140299e+10}} & 5.1540000000e+01 \\
\midrule
3      & 10    &   16  &  4.181050e+05  &  {\printmem{1}{3.344840e+06}} & 7.2877260000e-03 \\
       &       &   23  &  1.311760e+06  &  {\printmem{1}{1.049408e+07}} & 2.3299029000e-02 \\
       &       &   32  &  3.656305e+06  &  {\printmem{1}{2.925044e+07}} & 7.6501638000e-02 \\
       &       &   45  &  1.042511e+07  &  {\printmem{1}{8.340085e+07}} & 2.2928475800e-01 \\
       &       &   64  &  3.053693e+07  &  {\printmem{1}{2.442954e+08}} & 6.7717552900e-01 \\
       &       &   91  &  8.889508e+07  &  {\printmem{1}{7.111606e+08}} & 2.0032991450e+00 \\
       &       &  128  &  2.495226e+08  &  {\printmem{1}{1.996181e+09}} & 5.6226114350e+00 \\
       &       &  181  &  7.098872e+08  &  {\printmem{1}{5.679098e+09}} & 1.6474000000e+01 \\
       &       &  256  &  2.017252e+09  &  {\printmem{1}{1.613802e+10}} & 4.5549000000e+01 \\
\midrule
3      & 20    &   16  &  3.884665e+06  &  {\printmem{1}{3.107732e+07}} & 1.0492140900e-01 \\
       &       &   23  &  1.223906e+07  &  {\printmem{1}{9.791248e+07}} & 3.4037239500e-01 \\
       &       &   32  &  3.420030e+07  &  {\printmem{1}{2.736024e+08}} & 9.5973221600e-01 \\
       &       &   45  &  9.768746e+07  &  {\printmem{1}{7.814996e+08}} & 2.7660210560e+00 \\
       &       &   64  &  2.865024e+08  &  {\printmem{1}{2.292019e+09}} & 8.0436067910e+00 \\
       &       &   91  &  8.347456e+08  &  {\printmem{1}{6.677965e+09}} & 2.3509000000e+01 \\
       &       &  128  &  2.344435e+09  &  {\printmem{1}{1.875548e+10}} & 6.6587000000e+01 \\
\bottomrule
\end{tabular}
\label{tab:gridgpr_benchmark_alpha3}%
\end{table}%

\afterpage{\clearpage}
\begin{table}[p]
\centering
\caption{Benchmark data for $\alpha = 4$ and $n = 5,10,20$.
Column five indicates the memory consumption corresponding to a single vector of size $\widehat{N}$ (assuming standard double precision).}
\medskip
\begin{tabular}{
S[table-format=1.0  , table-text-alignment=right] 
S[table-format=2.0  , table-text-alignment=right] 
S[table-format=4.0  , table-text-alignment=right] 
S[table-format=1.2e2, table-text-alignment=center, round-mode=places, round-precision=2] 
r
S[table-format=1.2e1, table-text-alignment=center, round-mode=places, round-precision=2] 
}
\toprule
{$\alpha$} & {$n$} & {$D$} & {$\widehat{N}$} & {Memory} & {CPU time (s)}   \\
\midrule
4      & 5     &  16  &  5.037450e+05  &  {\printmem{1}{4.029960e+06}} & 1.6304290000e-02 \\
       &       &  19  &  1.057085e+06  &  {\printmem{1}{8.456680e+06}} & 3.4892387000e-02 \\
       &       &  23  &  2.384365e+06  &  {\printmem{1}{1.907492e+07}} & 8.4956992000e-02 \\
       &       &  27  &  4.685725e+06  &  {\printmem{1}{3.748580e+07}} & 1.7273184200e-01 \\
       &       &  32  &  9.531265e+06  &  {\printmem{1}{7.625012e+07}} & 3.6317884500e-01 \\
       &       &  38  &  1.944794e+07  &  {\printmem{1}{1.555836e+08}} & 7.6284041600e-01 \\
       &       &  45  &  3.906690e+07  &  {\printmem{1}{3.125352e+08}} & 1.5587475620e+00 \\
       &       &  54  &  8.257082e+07  &  {\printmem{1}{6.605666e+08}} & 3.3214431540e+00 \\
       &       &  64  &  1.653553e+08  &  {\printmem{1}{1.322842e+09}} & 6.6624243250e+00 \\
       &       &  76  &  3.329867e+08  &  {\printmem{1}{2.663894e+09}} & 1.3305000000e+01 \\
       &       &  91  &  6.920444e+08  &  {\printmem{1}{5.536356e+09}} & 2.8020000000e+01 \\
       &       & 108  &  1.385087e+09  &  {\printmem{1}{1.108070e+10}} & 5.6021000000e+01 \\
       &       & 128  &  2.752987e+09  &  {\printmem{1}{2.202389e+10}} & 1.1224000000e+02 \\
\midrule
4      & 10    &  16  &  1.235912e+07  &  {\printmem{1}{9.887300e+07}} & 3.4041760600e-01 \\
       &       &  19  &  2.615086e+07  &  {\printmem{1}{2.092069e+08}} & 7.4013787200e-01 \\
       &       &  23  &  5.940942e+07  &  {\printmem{1}{4.752753e+08}} & 1.6954236990e+00 \\
       &       &  27  &  1.173066e+08  &  {\printmem{1}{9.384524e+08}} & 3.3044752560e+00 \\
       &       &  32  &  2.395899e+08  &  {\printmem{1}{1.916719e+09}} & 6.8263947770e+00 \\
       &       &  38  &  4.905073e+08  &  {\printmem{1}{3.924059e+09}} & 1.3839000000e+01 \\
       &       &  45  &  9.879813e+08  &  {\printmem{1}{7.903850e+09}} & 2.8385000000e+01 \\
       &       &  54  &  2.093121e+09  &  {\printmem{1}{1.674497e+10}} & 5.9008000000e+01 \\
       &       &  64  &  4.199239e+09  &  {\printmem{1}{3.359391e+10}} & 1.2008000000e+02 \\
\midrule
4      & 20    &  16  &  2.410689e+08  &  {\printmem{1}{1.928551e+09}} & 9.1353680980e+00 \\
       &       &  19  &  5.118327e+08  &  {\printmem{1}{4.094661e+09}} & 1.9393000000e+01 \\
       &       &  23  &  1.166232e+09  &  {\printmem{1}{9.329852e+09}} & 4.4275000000e+01 \\
       &       &  27  &  2.307323e+09  &  {\printmem{1}{1.845859e+10}} & 8.7063000000e+01 \\
       &       &  32  &  4.720543e+09  &  {\printmem{1}{3.776435e+10}} & 1.7914000000e+02 \\
\bottomrule
\end{tabular}
\label{tab:gridgpr_benchmark_alpha4}%
\end{table}%


\newpage
\section{Additional computational details and results} \label{appendix:additional_computational_details}

\setcounter{equation}{0}
\renewcommand{\theequation}{\thesection\arabic{equation}}

\setcounter{figure}{0}
\renewcommand{\thefigure}{\thesection\arabic{figure}}

\setcounter{table}{0}
\renewcommand{\thetable}{\thesection\arabic{table}}

\subsection{Generating the datasets} \label{appendix:generating_datasets}

All \ac{pes} values (electronic energies)
have been computed with the semi-empirical 
GFN2-xTB method \citep{bannwarthGFN2xTBAnAccurateBroadly2019}
as implemented the xTB program\footnote{Version 6.3.3, license CC BY-SA 4.0, available at \url{https://github.com/grimme-lab/xtb}.}  \citep{bannwarthExtendedTightbindingQuantum2021}.
First, the molecular geometries were optimized using GFN2-xTB
with the `extreme' convergence threshold (the energy is converged to $\SI{5e-8}{}$ and the gradient norm to $\SI{5e-5}{\hartree/a_0}$).
Then, the Hessian was computed at the same level of theory.
The Hessian was subsequently diagonalized by the MidasCpp program\footnote{Version 2025.10.0, license LGPL 2.1, available at \url{https://source.coderefinery.org/midascpp/midascpp}.} \citep{christiansenMidasCppMolecularInteractions2025}
to obtain mass-scaled normal coordinates. Finally, MidasCpp was used to generate the set of grid 
points and gather the electronic energies via an interface to the xTB program.

\begin{table}[H]
  \centering
  \caption{Ten organic molecules with ten atoms ($D = 24$).}
  \medskip
  \begin{tabular}{ll}
  \toprule
  Name                      & Chemical formula  \\
  \midrule
  Butadiene                 & $\mathrm{C_4 H_6}$        \\
  DMSO                      & $\mathrm{(CH_{3})_2 SO}$  \\
  Ethylamine                & $\mathrm{CH_3 CH_2 NH_2}$ \\
  Ethylene glycol           & $\mathrm{(CH_2 OH)_2}$    \\
  Nitroethane               & $\mathrm{CH_3 CH_2 NO_2}$ \\
  Propanal                  & $\mathrm{CH_3 CH_2 CHO}$  \\
  Pyrazine                  & $\mathrm{C_4 H_4 N_2}$    \\
  Pyrrole                   & $\mathrm{C_4 H_4 NH}$     \\
  Thioacetone               & $\mathrm{(CH_{3})_2 CS}$  \\
  Vinylformamide            & $\mathrm{C_3 H_5 NO}$     \\
  \bottomrule
  \end{tabular}
  \label{tab:test_molecules}%
  \end{table}%

\begin{table}[H]
  \centering
  \caption{1D grids used for sampling the \acp{pes}. 
  The grid points are given as fractions of the 1D grid bounds, which
  are chosen as the classical turning points of the 1D harmonic oscillator state with $v = 10$.
  In mass-scaled coordinated, the turnings points are given by $x_{\mathrm{TP}}(v) = \pm \sqrt{2 \hbar (v + 1/2) / \omega}$.
  With a cut level of $\alpha = 3$, the
  training and test sets consist of $\widehat{N} = \num{447 265}$ and $\widehat{N}_* = \num{1 604 576}$
  points, respectively.}
  \medskip
  \begin{tabular}{llllllllllll}
\toprule
\multicolumn{12}{c}{1D grid points} \\
\midrule
Fine     & -1.0 & -0.8 & -0.6 & -0.4 & -0.2 & 0.0 & +0.2 & +0.4 & +0.6 & +0.8 & +1.0 \\
Coarse   & -1.0 &      & -0.6 &      & -0.2 & 0.0 & +0.2 &      & +0.6 &      & +1.0 \\
\bottomrule
\end{tabular}
\label{tab:1D_grids}%
\end{table}%

\subsection{Kernel centering} \label{appendix:kernel_centering}

Generally speaking, the base kernel centering depends on the distribution of input locations in
the training set.\autocite{luAdditiveGaussianProcesses2022,ishidaHierarchicalAdditiveInteraction2025}
Using the 
empirical distribution, which is discrete, one finds 
that centering amounts to the following inexpensive modification of the training--training,
training--test and test--test base matrices:
\begin{subequations}
  \begin{align}
    \mathbf{K}^{(m)} &\leftarrow \mathbf{K}^{(m)} 
    - \frac{\mathbf{K}^{(m)} \mathbf{w}^{(m)} (\mathbf{w}^{(m)})^{\trans} \mathbf{K}^{(m)} }{(\mathbf{w}^{(m)})^{\trans} \mathbf{K}^{(m)} \mathbf{w}^{(m)}}, \\
    \mathbf{K}^{(m)}_{*} &\leftarrow \mathbf{K}^{(m)}_{*} 
    - \frac{\mathbf{K}^{(m)} \mathbf{w}^{(m)} (\mathbf{w}^{(m)})^{\trans} \mathbf{K}^{(m)}_{*} }{(\mathbf{w}^{(m)})^{\trans} \mathbf{K}^{(m)} \mathbf{w}^{(m)}}, \\
    \mathbf{K}^{(m)}_{**} &\leftarrow \mathbf{K}^{(m)}_{**}
    - \frac{(\mathbf{K}^{(m)}_{*})^{\trans} \mathbf{w}^{(m)} (\mathbf{w}^{(m)})^{\trans} \mathbf{K}^{(m)}_{*} }{(\mathbf{w}^{(m)})^{\trans} \mathbf{K}^{(m)} \mathbf{w}^{(m)}}.
  \end{align}
\end{subequations}
The vector $\mathbf{w}^{(m)}$ contains the frequencies with which 
the 1D grid points for mode $m$ occur in the training set.
The first element of $\mathbf{w}^{(m)}$ is largest,
since the reference is the most frequent point for every mode.

\subsection{Initial values} \label{appendix:initial_values}

\begin{table}[H]
  \centering
  \caption{Initial values for the hyperparameter optimization. For each mode, the initial length scale is 
  computed as twice the average distance between neighbouring 1D grid points. This distance is
  computed after the input data is standardized, so identical values are obtained for all modes
  and all molecules.}
  \medskip
  \begin{tabular}{
    c
    S[table-format=1.2e-1, table-text-alignment=center, round-mode=places, round-precision=2]
    }
\toprule
Hyperparameter & {Initial value} \\
\midrule
$\sigma_0^2$   & 1.0000000000000000e-04 \\[0.3em]
$\sigma_1^2$   & 1.2500000000000000e-01 \\[0.3em]
$\sigma_2^2$   & 2.5000000000000000e-01 \\[0.3em]
$\sigma_3^2$   & 5.0000000000000000e-01 \\[0.3em]
$\ell^{(m)}$   & 2.7708419471673569e+00 \\
\bottomrule
\end{tabular}
\label{tab:initial_guess}%
\end{table}%

\subsection{Priors over hyperparameters} \label{appendix:priors}

Initial experiments showed that the order variances, $\sigma^2_{k}$, tend to become very 
large if the optimization is run without constraints.
This leads to deterioration of the
condition number of the kernel, which in turn reduces the overall computational efficiency.
At the same time, we observed no significant gain in accuracy when the order variances were allowed to grow freely.
To avoid this situation, we
place $\mathrm{Gamma}(1.0, 0.1)$ priors on the order variances, similar to \citet{luAdditiveGaussianProcesses2022}.

To avoid overfitting, we place a log-normal prior on each length scale
as described by \citet{hvarfnerVanillaBayesianOptimization2024}.
Since the average distance between two randomly sampled points in a
$D$-dimensional hypercube scales as $\sqrt{D}$, they recommend a 
prior of the form 
\begin{align}
  \ell^{(m)} \sim \mathrm{Lognormal}\left( \mu_0 + \log(\sqrt{D}), \sigma_0 \right)
\end{align}
whose mean and mode also scale as $\sqrt{D}$.
In our setup, the points are confined to low-dimensional subgrids/cuts rather
than being distributed over the entire hypercube. 
Assuming a cut level of $\alpha$, the effective dimensionality turns out to be
$2 \alpha < D$ (because two points on two non-overlapping $\alpha$-cuts differ along
exactly $2 \alpha$ dimensions). We therefore use
\begin{align}
  \ell^{(m)} \sim \mathrm{Lognormal}\left( \mu_0 + \log(\sqrt{2 \alpha}), \sigma_0 \right)
\end{align}
with $\mu_0 = \sqrt{2}$ and $\sigma_0 = \sqrt{3}$.

To maintain the regularizing effect as the 
number of training points increases, we normalize the \ac{mll} 
before adding the prior terms. The objective function
we optimize is thus
\begin{align}
  \mathcal{L}' = \frac{\mathcal{L}}{\widehat{N}} + \text{prior terms}
\end{align}

\subsection{SVGP settings} \label{appendix:svgp_settings}
All calculations were run with the same priors and noise as CUTS-GPR, though inputs were standardized to the range $[0,1]$.
Hyperparameters were initialized with default values and optimized using Adam with a learning rate of $0.1$ ($\omega = 2$) or $0.01$ ($\omega = 3$).
The optimization was run until the change in objective function was less that $0.1$ over $10$ iterations.
All other settings were left as default.

\newpage
\subsection{Trace estimation}

\begin{table}[H]
  \centering
  \caption{Stochastic estimates of the trace terms, $\mathrm{tr}\big( \hatbold{C}^{-1} \partial \hatbold{C} / \partial \theta \big)$, for
  iteration 50 of the butadiene calculation. The \acf{sem} is shown as a measure of
  uncertainty. Note that the SEM is approximate since it assumes a \ac{cg} error of zero.}
  \medskip
  \begin{tabular}{
    c
    S[table-format=-1.2e1, table-text-alignment=center, round-mode=places, round-precision=2, retain-explicit-plus]
    S[table-format=3.2   , table-text-alignment=center, round-mode=places, round-precision=2, scientific-notation=fixed, fixed-exponent=0]
    }
\toprule
Hyperparameter   &  {Estimate}               &  {SEM (\%)}          \\
\midrule
$\sigma_0^2$     &  +3.9927321174331018e3  &  5.18e+02  \\[0.3em]
$\sigma_1^2$     &  +3.4509942767147063e3  &  3.76e+01  \\[0.3em]
$\sigma_2^2$     &  +7.4704353646337171e4  &  7.20e-01  \\[0.3em]
$\sigma_3^2$     &  +5.0290990015020146e5  &  4.76e-02  \\[0.3em]
$\ell^{(1)}$     &  -8.3800570836307943e3  &  4.66e-01  \\[0.3em]
$\ell^{(2)}$     &  -7.4616076727452128e3  &  5.39e-01  \\[0.3em]
$\ell^{(3)}$     &  -1.1003005160047693e4  &  4.21e-01  \\[0.3em]
$\ell^{(4)}$     &  -5.8628199693748311e3  &  5.08e-01  \\[0.3em]
$\ell^{(5)}$     &  -8.0168981828620290e3  &  4.29e-01  \\[0.3em]
$\ell^{(6)}$     &  -9.6120353155569701e3  &  4.22e-01  \\[0.3em]
$\ell^{(7)}$     &  -9.9642368548629802e3  &  3.86e-01  \\[0.3em]
$\ell^{(8)}$     &  -7.5485471675108147e3  &  3.81e-01  \\[0.3em]
$\ell^{(9)}$     &  -8.5855899791369102e3  &  4.06e-01  \\[0.3em]
$\ell^{(10)}$    &  -1.0464668380424895e4  &  4.32e-01  \\[0.3em]
$\ell^{(11)}$    &  -1.0615818718587509e4  &  3.96e-01  \\[0.3em]
$\ell^{(12)}$    &  -5.9717280301740720e3  &  5.17e-01  \\[0.3em]
$\ell^{(13)}$    &  -1.0099587557413146e4  &  3.87e-01  \\[0.3em]
$\ell^{(14)}$    &  -9.2812740621386747e3  &  4.43e-01  \\[0.3em]
$\ell^{(15)}$    &  -7.0949185880162031e3  &  5.70e-01  \\[0.3em]
$\ell^{(16)}$    &  -7.3068146378728925e3  &  4.86e-01  \\[0.3em]
$\ell^{(17)}$    &  -5.8248745015511777e3  &  5.26e-01  \\[0.3em]
$\ell^{(18)}$    &  -6.0113928912824376e3  &  5.01e-01  \\[0.3em]
$\ell^{(19)}$    &  -2.1884328076762322e4  &  2.27e-01  \\[0.3em]
$\ell^{(20)}$    &  -2.2068315008327296e4  &  2.12e-01  \\[0.3em]
$\ell^{(21)}$    &  -2.0090469101475021e4  &  2.29e-01  \\[0.3em]
$\ell^{(22)}$    &  -1.9869700474420486e4  &  2.58e-01  \\[0.3em]
$\ell^{(23)}$    &  -2.0291427375509480e4  &  2.58e-01  \\[0.3em]
$\ell^{(24)}$    &  -2.0407090879189764e4  &  1.86e-01  \\
\bottomrule
\end{tabular}
\label{tab:trace_estimates}%
\end{table}%


\newpage
\subsection{CUTS-GPR timings}

\begin{table}[H]
  \centering
  \caption{Timings for hyperparameter optimization and calculation of the predictive mean in CUTS-GPR. The
  speedup factor ($t_{\mathrm{CPU}} / t_{\mathrm{wall}}$) is 30--34 for hyperparameter optimization and 34--35 for predictions
  (using 36 cores for each calculation).}
  \medskip
  \begin{tabular}{
    l
    S[table-format=3.1, table-text-alignment=center, round-mode=places, round-precision=1, scientific-notation=fixed, fixed-exponent=0]
    S[table-format=1.2, table-text-alignment=center, round-mode=places, round-precision=2, scientific-notation=fixed, fixed-exponent=0]
    S[table-format=2.1, table-text-alignment=center, round-mode=places, round-precision=1, scientific-notation=fixed, fixed-exponent=0]
    S[table-format=1.2, table-text-alignment=center, round-mode=places, round-precision=2, scientific-notation=fixed, fixed-exponent=0]
    }
\toprule
                   & \multicolumn{2}{c}{Optimization}            & \multicolumn{2}{c}{Predictive mean}          \\
\cmidrule(lr{2pt}){2-3} \cmidrule(l{2pt}){4-5}
Molecule           &  {CPU (h)}           &  {Wall (h)}          &  {CPU (min)}         &  {Wall (min)}         \\
\midrule
Butadiene          &  6.357767277778e+01  &  1.950290622222e+00  &  6.477271733333e+01  &  1.831024350000e+00   \\
DMSO               &  6.864719055556e+01  &  2.151009277778e+00  &  5.529412200000e+01  &  1.600880648333e+00   \\
Ethylamine         &  5.961469472222e+01  &  1.943133611111e+00  &  6.087157950000e+01  &  1.726027233333e+00   \\
Ethylene glycol    &  7.365132250000e+01  &  2.250172466667e+00  &  5.642641350000e+01  &  1.654655243333e+00   \\
Nitroethane        &  6.421824694444e+01  &  1.907263355556e+00  &  6.583685866667e+01  &  1.876073816667e+00   \\
Propanal           &  7.217412638889e+01  &  2.163175372222e+00  &  5.631967050000e+01  &  1.602794611667e+00   \\
Pyrazine           &  6.560716583333e+01  &  1.956880238889e+00  &  6.515071966667e+01  &  1.840748666667e+00   \\
Pyrrole            &  6.766454138889e+01  &  2.067753933333e+00  &  6.420818566667e+01  &  1.848814216667e+00   \\
Thioacetone        &  1.091027925000e+02  &  3.578578861111e+00  &  5.498183083333e+01  &  1.583184993333e+00   \\
Vinylformamide     &  5.007012972222e+01  &  1.624287236111e+00  &  5.780270433333e+01  &  1.628728315000e+00   \\
\midrule
Mean               &  6.943278833333e+01  &  2.159254497500e+00  &  6.016648020000e+01  &  1.719293209500e+00   \\
\bottomrule
\end{tabular}
\label{tab:timings}%
\end{table}%


\newpage
\subsection{SVGP timings} \label{appendix:svgp_timings}

\begin{table}[H]
  \centering
  \caption{Wall time for hyperparameter optimization and predictions with SVGP. The calculations were
  run on 36 CPU cores. The third and fourth columns indicate the interaction order, $\omega$, and
  the number of inducing points, $m$.}
  \medskip
  \begin{tabular}{
    l
    l
    S[table-format=1.0, table-text-alignment=center]
    S[table-format=4.0, table-text-alignment=center]
    S[table-format=2.2, table-text-alignment=center, round-mode=places, round-precision=2, scientific-notation=fixed, fixed-exponent=0]
    S[table-format=1.2, table-text-alignment=center, round-mode=places, round-precision=2, scientific-notation=fixed, fixed-exponent=0]
    }
\toprule
Molecule         & Method   &  {$\omega$} & {$m$}   &  {Optimization (h)}     &  {Predictions (min)}  \\ 
\midrule
Butadiene        & SVGP     & 2           &	 256  &  2.364461e+00  &  1.975500e+00 \\[0.2em]
                 &          &             &	 512  &  8.954556e-01  &  1.063333e+00 \\[0.2em]
                 &          &             &	1024  &  2.227656e+00  &  1.690500e+00 \\[0.2em]
                 &          &             &	2048  &  2.728069e+00  &  2.770500e+00 \\[0.2em]
                 & SVGP     & 3           &	 256  &  1.090876e+01  &  1.438000e+00 \\[0.2em]
                 &          &             &	 512  &  8.755569e+00  &  2.002500e+00 \\[0.2em]
                 &          &             &	1024  &  1.634063e+01  &  4.007167e+00 \\[0.2em]\midrule
DMSO             & SVGP     & 2           &	 256  &  1.503006e+00  &  8.420000e-01 \\[0.2em]
                 &          &             &	 512  &  1.027975e+00  &  1.050833e+00 \\[0.2em]
                 &          &             &	1024  &  1.152408e+00  &  1.394333e+00 \\[0.2em]
                 &          &             &	2048  &  1.743503e+00  &  2.775333e+00 \\[0.2em]
                 & SVGP     & 3           &	 256  &  8.727792e+00  &  1.511000e+00 \\[0.2em]
                 &          &             &	 512  &  7.361028e+00  &  2.270667e+00 \\[0.2em]
                 &          &             &	1024  &  1.090976e+01  &  3.232167e+00 \\[0.2em]\midrule
Ethylamine       & SVGP     & 2           &	 256  &  1.406900e+00  &  9.598333e-01 \\[0.2em]
                 &          &             &	 512  &  8.846917e-01  &  1.070167e+00 \\[0.2em]
                 &          &             &	1024  &  1.231864e+00  &  1.645833e+00 \\[0.2em]
                 &          &             &	2048  &  2.971447e+00  &  2.641667e+00 \\[0.2em]
                 & SVGP     & 3           &	 256  &  8.873189e+00  &  1.454667e+00 \\[0.2em]
                 &          &             &	 512  &  8.707964e+00  &  2.353000e+00 \\[0.2em]
                 &          &             &	1024  &  1.380176e+01  &  3.376833e+00 \\[0.2em]\midrule
Ethylene glycol  & SVGP     & 2           &	 256  &  1.878450e+00  &  8.291667e-01 \\[0.2em]
                 &          &             &	 512  &  1.187072e+00  &  1.067333e+00 \\[0.2em]
                 &          &             &	1024  &  1.442281e+00  &  1.396333e+00 \\[0.2em]
                 &          &             &	2048  &  2.573900e+00  &  2.762833e+00 \\[0.2em]
                 & SVGP     & 3           &	 256  &  8.707106e+00  &  1.446833e+00 \\[0.2em]
                 &          &             &	 512  &  8.292178e+00  &  2.370333e+00 \\[0.2em]
                 &          &             &	1024  &  1.539309e+01  &  6.028333e+00 \\[0.2em]\midrule
Nitroethane      & SVGP     & 2           &	 256  &  1.296419e+00  &  8.431667e-01 \\[0.2em]
                 &          &             &	 512  &  7.956528e-01  &  1.067500e+00 \\[0.2em]
                 &          &             &	1024  &  1.226294e+00  &  1.399500e+00 \\[0.2em]
                 &          &             &	2048  &  2.012711e+00  &  2.747833e+00 \\[0.2em]
                 & SVGP     & 3           &	 256  &  5.603331e+00  &  1.453833e+00 \\[0.2em]
                 &          &             &	 512  &  9.366806e+00  &  2.319833e+00 \\[0.2em]
                 &          &             &	1024  &  1.030940e+01  &  3.803333e+00 \\ 
\bottomrule
\end{tabular}
\label{tab:timings_all_a}%
\end{table}%

\begin{table}[p]
  \centering
  \caption{Wall time for hyperparameter optimization and predictions with SVGP. The calculations were
  run on 36 CPU cores. The third and fourth columns indicate the interaction order, $\omega$, and
  the number of inducing points, $m$.}
  \medskip
  \begin{tabular}{
    l
    l
    S[table-format=1.0, table-text-alignment=center]
    S[table-format=4.0, table-text-alignment=center]
    S[table-format=2.2, table-text-alignment=center, round-mode=places, round-precision=2, scientific-notation=fixed, fixed-exponent=0]
    S[table-format=1.2, table-text-alignment=center, round-mode=places, round-precision=2, scientific-notation=fixed, fixed-exponent=0]
    }
\toprule
Molecule         & Method   &  {$\omega$} & {$m$}   &  {Optimization (h)}     &  {Predictions (min)}  \\ 
\midrule
Propanal         & SVGP     & 2           &	 256  &  1.490911e+00  &  8.398333e-01 \\[0.2em] 
                 &          &             &	 512  &  8.207917e-01  &  1.029333e+00 \\[0.2em] 
                 &          &             &	1024  &  1.152261e+00  &  1.553667e+00 \\[0.2em] 
                 &          &             &	2048  &  2.852283e+00  &  2.716833e+00 \\[0.2em] 
                 & SVGP     & 3           &	 256  &  9.603306e+00  &  1.193000e+00 \\[0.2em] 
                 &          &             &	 512  &  1.007233e+01  &  2.340833e+00 \\[0.2em] 
                 &          &             &	1024  &  1.697707e+01  &  4.719500e+00 \\[0.2em]\midrule
Pyrazine         & SVGP     & 2           &	 256  &  8.494306e-01  &  8.271667e-01 \\[0.2em] 
                 &          &             &	 512  &  6.834194e-01  &  9.593333e-01 \\[0.2em] 
                 &          &             &	1024  &  1.010617e+00  &  1.525667e+00 \\[0.2em] 
                 &          &             &	2048  &  2.019653e+00  &  2.783833e+00 \\[0.2em] 
                 & SVGP     & 3           &	 256  &  6.557061e+00  &  1.238000e+00 \\[0.2em] 
                 &          &             &	 512  &  8.428042e+00  &  2.316333e+00 \\[0.2em] 
                 &          &             &	1024  &  8.334081e+00  &  4.350167e+00 \\[0.2em]\midrule
Pyrrole          & SVGP     & 2           &	 256  &  1.647667e+00  &  9.523333e-01 \\[0.2em] 
                 &          &             &	 512  &  8.345917e-01  &  1.275833e+00 \\[0.2em] 
                 &          &             &	1024  &  1.038656e+00  &  1.644167e+00 \\[0.2em] 
                 &          &             &	2048  &  2.153444e+00  &  2.757500e+00 \\[0.2em] 
                 & SVGP     & 3           &	 256  &  5.682097e+00  &  1.464667e+00 \\[0.2em] 
                 &          &             &	 512  &  8.006769e+00  &  2.356833e+00 \\[0.2em] 
                 &          &             &	1024  &  1.449670e+01  &  3.983333e+00 \\[0.2em]\midrule
Thioacetone      & SVGP     & 2           &	 256  &  8.957083e-01  &  1.081000e+00 \\[0.2em] 
                 &          &             &	 512  &  6.294611e-01  &  1.063333e+00 \\[0.2em] 
                 &          &             &	1024  &  1.185750e+00  &  1.822833e+00 \\[0.2em] 
                 &          &             &	2048  &  2.054469e+00  &  2.712500e+00 \\[0.2em] 
                 & SVGP     & 3           &	 256  &  5.261031e+00  &  1.919167e+00 \\[0.2em] 
                 &          &             &	 512  &  4.975372e+00  &  2.360500e+00 \\[0.2em] 
                 &          &             &	1024  &  9.349964e+00  &  4.890833e+00 \\[0.2em]\midrule
Vinylformamide   & SVGP     & 2           &	 256  &  8.364250e-01  &  7.493333e-01 \\[0.2em] 
                 &          &             &	 512  &  8.817556e-01  &  1.080500e+00 \\[0.2em] 
                 &          &             &	1024  &  1.254203e+00  &  1.595833e+00 \\[0.2em] 
                 &          &             &	2048  &  2.566886e+00  &  2.782167e+00 \\[0.2em] 
                 & SVGP     & 3           &	 256  &  8.329306e+00  &  1.350500e+00 \\[0.2em] 
                 &          &             &	 512  &  8.225433e+00  &  2.305500e+00 \\[0.2em] 
                 &          &             &	1024  &  1.052823e+01  &  4.439000e+00 \\ 
\bottomrule
\end{tabular}
\label{tab:timings_all_b}%
\end{table}%


\newpage
\subsection{Comparison of average errors} \label{appendix:average_errors}

\begin{table}[H]
  \centering
  \caption{Comparison of range normalized prediction errors (dimensionless).
  The second and third columns indicate the interaction order, $\omega$, and
  the number of inducing points, $m$, which is only applicable for the SVGP models.
  For each model, the average over the ten molecules in Table~\ref{tab:test_molecules} 
  is shown (with the standard deviation in parentheses).}
  \medskip
  \begin{tabular}{
    l
    S[table-format=1.0, table-text-alignment=center]
    S[table-format=4.0, table-text-alignment=center]
    S[table-format=1.3, table-text-alignment=center, round-mode=places, round-precision=3, table-space-text-pre={(}]@{\hskip 0.3em}
    S[table-format=1.3, table-text-alignment=center, round-mode=places, round-precision=3, table-space-text-pre={(}, table-space-text-post={))}]
    S[table-format=1.2, table-text-alignment=center, round-mode=places, round-precision=2, table-space-text-pre={(}]@{\hskip 0.3em}
    S[table-format=1.2, table-text-alignment=center, round-mode=places, round-precision=2, table-space-text-pre={(}, table-space-text-post={))}]
    S[table-format=2.2, table-text-alignment=center, round-mode=places, round-precision=2, table-space-text-pre={(}]@{\hskip 0.3em}
    S[table-format=1.2, table-text-alignment=center, round-mode=places, round-precision=2, table-space-text-pre={(}, table-space-text-post={))}]
    }
\toprule
Method           & {$\omega$} & {$m$}    & \multicolumn{2}{c}{MAX}       & \multicolumn{2}{c}{MAE / $10^{-3}$}  & \multicolumn{2}{c}{RMSE / $10^{-3}$} \\
\midrule
SVGP             & 2          &  256     &  0.698158  & {(}0.101780{)}   &    7.549200  & {(}3.232505{)}        &    15.840400  & {(}8.044579{)}  \\[0.2em]
                 &            &  512     &  0.631483  & {(}0.122719{)}   &    6.683900  & {(}2.793475{)}        &    13.256400  & {(}6.235967{)}  \\[0.2em]
                 &            & 1024     &  0.553700  & {(}0.088437{)}   &    5.027600  & {(}2.214120{)}        &     9.690600  & {(}3.999037{)}  \\[0.2em]
                 &            & 2048     &  0.532509  & {(}0.075870{)}   &    4.126800  & {(}1.764552{)}        &     8.078100  & {(}3.192539{)}  \\[0.2em]
SVGP             & 3          &  256     &  0.604597  & {(}0.136683{)}   &    5.801800  & {(}2.370357{)}        &    10.830200  & {(}4.263367{)}  \\[0.2em]
                 &            &  512     &  0.590044  & {(}0.111541{)}   &    4.897800  & {(}2.090785{)}        &     9.386200  & {(}3.487461{)}  \\[0.2em]
                 &            & 1024     &  0.523446  & {(}0.107602{)}   &    3.845400  & {(}1.930247{)}        &     8.020000  & {(}3.321820{)}  \\[0.2em]
CUTS-GPR         & 3          & {---}    &  0.168000  & {(}0.104815{)}   &    1.576339  & {(}0.843056{)}        &     4.307758  & {(}2.257304{)}  \\
\bottomrule
\end{tabular}
\label{tab:errors_compare}%
\end{table}%

\begin{figure}[H]
\centering
\includegraphics[width=0.7\linewidth]{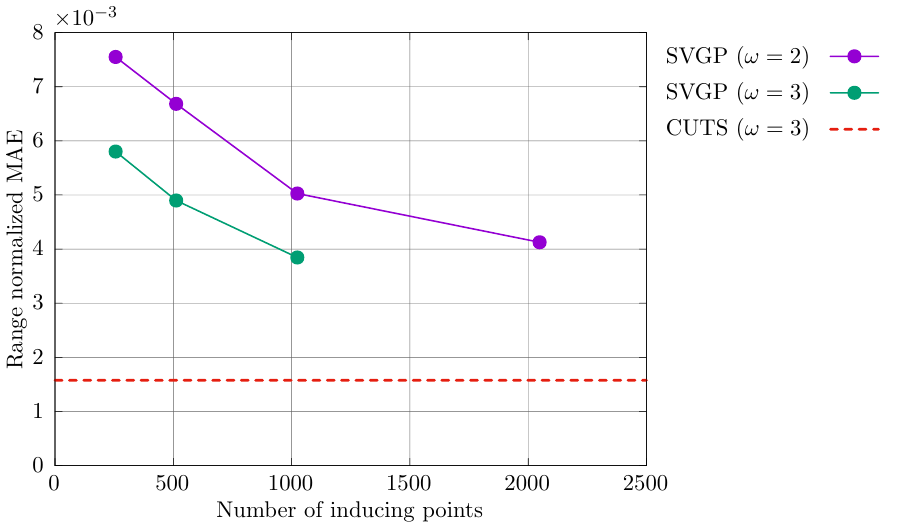}
\caption{Range normalized MAE as a function of the number of inducing points. The errors are averaged over the
ten molecules in Table~\ref{tab:test_molecules}.}
\label{fig:svgp_mae_errors}
\end{figure}


\newpage
\subsection{Errors for all models and molecules} \label{appendix:all_errors}

\begin{table}[H]
  \centering
  \caption{Range normalized prediction errors (dimensionless).}
  \medskip
  \begin{tabular}{
    l
    l
    S[table-format=1.0, table-text-alignment=center]
    S[table-format=4.0, table-text-alignment=center]
    S[table-format=1.3, table-text-alignment=center, scientific-notation = fixed, fixed-exponent = 0, round-mode=places, round-precision=3]
    S[table-format=1.2e-1, scientific-notation = true, table-text-alignment=center, round-mode=places, round-precision=2]
    S[table-format=1.2e-1, scientific-notation = true, table-text-alignment=center, round-mode=places, round-precision=2]
    }
\toprule
Molecule         & Method   &  {$\omega$} & $m$       &  {MAX}     &  {MAE}     &  {RMSE}   \\ 
\midrule
Butadiene        & SVGP     & 2         &   256     &  0.834662  &  0.005181  &  0.012759 \\[0.2em]
                 &          &           &   512     &  0.746738  &  0.004485  &  0.008536 \\[0.2em]
                 &          &           &  1024     &  0.665430  &  0.003872  &  0.007391 \\[0.2em]
                 &          &           &  2048     &  0.594171  &  0.003121  &  0.006155 \\[0.2em]
                 & SVGP     & 3         &   256     &  0.679326  &  0.004737  &  0.008421 \\[0.2em]
                 &          &           &   512     &  0.631631  &  0.003611  &  0.007289 \\[0.2em]
                 &          &           &  1024     &  0.497993  &  0.002756  &  0.005634 \\[0.2em]
                 & CUTS-GPR & 3         & {---}     &  0.123374  &  0.000988  &  0.002907 \\[0.2em]\midrule
DMSO             & SVGP     & 2         &   256     &  0.660122  &  0.004798  &  0.008239 \\[0.2em]
                 &          &           &   512     &  0.630319  &  0.004705  &  0.008037 \\[0.2em]
                 &          &           &  1024     &  0.627468  &  0.003678  &  0.006526 \\[0.2em]
                 &          &           &  2048     &  0.549857  &  0.003431  &  0.006126 \\[0.2em]
                 & SVGP     & 3         &   256     &  0.755548  &  0.003898  &  0.007360 \\[0.2em]
                 &          &           &   512     &  0.717227  &  0.003081  &  0.006168 \\[0.2em]
                 &          &           &  1024     &  0.699370  &  0.002519  &  0.005459 \\[0.2em]
                 & CUTS-GPR & 3         & {---}     &  0.120158  &  0.000745  &  0.001760 \\[0.2em]\midrule
Ethylamine       & SVGP     & 2         &   256     &  0.728159  &  0.007546  &  0.018483 \\[0.2em]
                 &          &           &   512     &  0.423442  &  0.006689  &  0.012115 \\[0.2em]
                 &          &           &  1024     &  0.506886  &  0.005182  &  0.010024 \\[0.2em]
                 &          &           &  2048     &  0.520825  &  0.003915  &  0.008238 \\[0.2em]
                 & SVGP     & 3         &   256     &  0.471281  &  0.006590  &  0.011813 \\[0.2em]
                 &          &           &   512     &  0.492435  &  0.005277  &  0.010317 \\[0.2em]
                 &          &           &  1024     &  0.420512  &  0.003793  &  0.008723 \\[0.2em]
                 & CUTS-GPR & 3         & {---}     &  0.129442  &  0.001511  &  0.004179 \\[0.2em]\midrule
Ethylene glycol  & SVGP     & 2         &   256     &  0.475053  &  0.012458  &  0.022192 \\[0.2em]
                 &          &           &   512     &  0.492588  &  0.012601  &  0.021784 \\[0.2em]
                 &          &           &  1024     &  0.533809  &  0.009912  &  0.017946 \\[0.2em]
                 &          &           &  2048     &  0.541636  &  0.008463  &  0.016006 \\[0.2em]
                 & SVGP     & 3         &   256     &  0.389813  &  0.010979  &  0.019765 \\[0.2em]
                 &          &           &   512     &  0.499218  &  0.009527  &  0.016578 \\[0.2em]
                 &          &           &  1024     &  0.429030  &  0.008394  &  0.015161 \\[0.2em]
                 & CUTS-GPR & 3         & {---}     &  0.151369  &  0.003755  &  0.010184 \\[0.2em]\midrule
Nitroethane      & SVGP     & 2         &   256     &  0.592681  &  0.006264  &  0.010127 \\[0.2em]
                 &          &           &   512     &  0.498242  &  0.005015  &  0.008858 \\[0.2em]
                 &          &           &  1024     &  0.401159  &  0.004390  &  0.008493 \\[0.2em]
                 &          &           &  2048     &  0.445859  &  0.003636  &  0.006926 \\[0.2em]
                 & SVGP     & 3         &   256     &  0.496288  &  0.004715  &  0.009133 \\[0.2em]
                 &          &           &   512     &  0.501293  &  0.003900  &  0.007792 \\[0.2em]
                 &          &           &  1024     &  0.447606  &  0.003134  &  0.006717 \\[0.2em]
                 & CUTS-GPR & 3         & {---}     &  0.138868  &  0.001493  &  0.003802 \\
\bottomrule
\end{tabular}
\label{tab:errors_all_a}%
\end{table}%

\begin{table}[p]
  \centering
  \caption{Range normalized prediction errors (dimensionless).}
  \medskip
  \begin{tabular}{
    l
    l
    S[table-format=1.0, table-text-alignment=center]
    S[table-format=4.0, table-text-alignment=center]
    S[table-format=1.3, table-text-alignment=center, scientific-notation = fixed, fixed-exponent = 0, round-mode=places, round-precision=3]
    S[table-format=1.2e-1, scientific-notation = true, table-text-alignment=center, round-mode=places, round-precision=2]
    S[table-format=1.2e-1, scientific-notation = true, table-text-alignment=center, round-mode=places, round-precision=2]
    }
\toprule
Molecule         & Method   &  {$\omega$} & $m$       &  {MAX}     &  {MAE}     &  {RMSE}   \\ 
\midrule
Propanal         & SVGP     & 2         &   256     &  0.754139  &  0.008180  &  0.018783 \\[0.2em]
                 &          &           &   512     &  0.775023  &  0.007087  &  0.015710 \\[0.2em]
                 &          &           &  1024     &  0.452177  &  0.005331  &  0.010378 \\[0.2em]
                 &          &           &  2048     &  0.399859  &  0.004300  &  0.008604 \\[0.2em]
                 & SVGP     & 3         &   256     &  0.479111  &  0.006232  &  0.011793 \\[0.2em]
                 &          &           &   512     &  0.408982  &  0.005522  &  0.010540 \\[0.2em]
                 &          &           &  1024     &  0.379881  &  0.004103  &  0.008606 \\[0.2em]
                 & CUTS-GPR & 3         & {---}     &  0.115197  &  0.001805  &  0.004614 \\[0.2em]\midrule
Pyrazine         & SVGP     & 2         &   256     &  0.692439  &  0.007551  &  0.013856 \\[0.2em]
                 &          &           &   512     &  0.684833  &  0.004700  &  0.009731 \\[0.2em]
                 &          &           &  1024     &  0.586680  &  0.003320  &  0.007104 \\[0.2em]
                 &          &           &  2048     &  0.596862  &  0.002637  &  0.005907 \\[0.2em]
                 & SVGP     & 3         &   256     &  0.702511  &  0.004000  &  0.007835 \\[0.2em]
                 &          &           &   512     &  0.674974  &  0.003516  &  0.007369 \\[0.2em]
                 &          &           &  1024     &  0.614117  &  0.002416  &  0.005964 \\[0.2em]
                 & CUTS-GPR & 3         & {---}     &  0.161909  &  0.001026  &  0.003004 \\[0.2em]\midrule
Pyrrole          & SVGP     & 2         &   256     &  0.758495  &  0.004561  &  0.009153 \\[0.2em]
                 &          &           &   512     &  0.690325  &  0.005259  &  0.010512 \\[0.2em]
                 &          &           &  1024     &  0.647765  &  0.002767  &  0.005847 \\[0.2em]
                 &          &           &  2048     &  0.631650  &  0.002579  &  0.005372 \\[0.2em]
                 & SVGP     & 3         &   256     &  0.782859  &  0.003485  &  0.006866 \\[0.2em]
                 &          &           &   512     &  0.726179  &  0.002946  &  0.006089 \\[0.2em]
                 &          &           &  1024     &  0.607510  &  0.002272  &  0.005194 \\[0.2em]
                 & CUTS-GPR & 3         & {---}     &  0.135887  &  0.001100  &  0.003514 \\[0.2em]\midrule
Thioacetone      & SVGP     & 2         &   256     &  0.759320  &  0.013856  &  0.034482 \\[0.2em]
                 &          &           &   512     &  0.753473  &  0.010724  &  0.026487 \\[0.2em]
                 &          &           &  1024     &  0.614083  &  0.007782  &  0.015453 \\[0.2em]
                 &          &           &  2048     &  0.586203  &  0.005679  &  0.010577 \\[0.2em]
                 & SVGP     & 3         &   256     &  0.688316  &  0.008538  &  0.016516 \\[0.2em]
                 &          &           &   512     &  0.685669  &  0.007228  &  0.013939 \\[0.2em]
                 &          &           &  1024     &  0.629449  &  0.005922  &  0.012343 \\[0.2em]
                 & CUTS-GPR & 3         & {---}     &  0.463559  &  0.001716  &  0.004722 \\[0.2em]\midrule
Vinylformamide   & SVGP     & 2         &   256     &  0.726508  &  0.005097  &  0.010330 \\[0.2em]
                 &          &           &   512     &  0.619848  &  0.005574  &  0.010794 \\[0.2em]
                 &          &           &  1024     &  0.501545  &  0.004042  &  0.007744 \\[0.2em]
                 &          &           &  2048     &  0.458173  &  0.003507  &  0.006870 \\[0.2em]
                 & SVGP     & 3         &   256     &  0.600918  &  0.004844  &  0.008800 \\[0.2em]
                 &          &           &   512     &  0.562830  &  0.004370  &  0.007781 \\[0.2em]
                 &          &           &  1024     &  0.508995  &  0.003145  &  0.006399 \\[0.2em]
                 & CUTS-GPR & 3         & {---}     &  0.140235  &  0.001624  &  0.004392 \\
\bottomrule
\end{tabular}
\label{tab:errors_all_b}%
\end{table}%

\end{document}

%% file: abbreviations.tex
\usepackage{acro}

\DeclareAcronym{rbf}{
   short = RBF ,
   long = radial basis function ,
}

\DeclareAcronym{svgp}{
   short = SVGP ,
   long = stochastic variational Gaussian process ,
}

\DeclareAcronym{gpr}{
   short = GPR ,
   long = Gaussian process regression ,
}

\DeclareAcronym{cud}{
   short = CUD ,
   long = closed under decrements ,
}

\DeclareAcronym{cuts}{
   short = CUTS ,
   long = closed under taking subsets ,
}

\DeclareAcronym{mc}{
   short = MC ,
   long = mode combination ,
}

\DeclareAcronym{mcr}{
   short = MCR ,
   long = mode combination range ,
}

\DeclareAcronym{mll}{
   short = MLL ,
   long = marginal log-likelihood ,
}

\DeclareAcronym{map}{
   short = MAP ,
   long = maximum \textit{a posteriori} ,
}

\DeclareAcronym{sop}{
   short = SOP ,
   long = sum-of-products ,
}

\DeclareAcronym{oak}{
   short = OAK ,
   long = orthogonal additive kernel ,
}

\DeclareAcronym{mvp}{
   short = MVP ,
   long = matrix-vector product ,
}

\DeclareAcronym{cg}{
   short = CG ,
   long = conjugate gradient ,
}

\DeclareAcronym{spd}{
   short = SPD ,
   long = symmetric positive definite ,
}

\DeclareAcronym{pes}{
   short = PES ,
   long = potential energy surface ,
}

\DeclareAcronym{pdf}{
   short = PDF ,
   long = probability density function ,
}

\DeclareAcronym{sem}{
   short = SEM ,
   long = standard error of the mean ,
}

\DeclareAcronym{elbo}{
   short = ELBO,
   long = evidence lower bound ,
}

%% file: mads.bib
@article{aquilanteAnalyticDerivativesCholesky2008,
  title = {Analytic Derivatives for the {{Cholesky}} Representation of the Two-Electron Integrals},
  author = {Aquilante, Francesco and Lindh, Roland and Pedersen, Thomas Bondo},
  year = 2008,
  month = jul,
  journal = {The Journal of Chemical Physics},
  volume = {129},
  number = {3},
  pages = {034106},
  issn = {0021-9606, 1089-7690},
  doi = {10.1063/1.2955755},
  urldate = {2026-02-09},
  langid = {english}
}

@inproceedings{balandatBoTorchFrameworkEfficient2020,
  title = {{{BoTorch}}: {{A Framework}} for {{Efficient Monte-Carlo Bayesian Optimization}}},
  booktitle = {Advances in {{Neural Information Processing Systems}}},
  author = {Balandat, Maximilian and Karrer, Brian and Jiang, Daniel R and Daulton, Samuel and Letham, Benjamin and Wilson, Andrew Gordon and Bakshy, Eytan},
  year = 2020,
  volume = {33},
  langid = {english}
}

@article{bannwarthExtendedTightbindingQuantum2021,
  title = {Extended Tight-Binding Quantum Chemistry Methods},
  shorttitle = {Extended},
  author = {Bannwarth, Christoph and Caldeweyher, Eike and Ehlert, Sebastian and Hansen, Andreas and Pracht, Philipp and Seibert, Jakob and Spicher, Sebastian and Grimme, Stefan},
  year = 2021,
  month = mar,
  journal = {WIREs Comput. Mol. Sci.},
  volume = {11},
  number = {2},
  pages = {e1493},
  issn = {1759-0876, 1759-0884},
  doi = {10.1002/wcms.1493},
  urldate = {2026-04-20},
  langid = {english}
}

@article{bannwarthGFN2xTBAnAccurateBroadly2019,
  title = {{{GFN2-xTB}}---{{An Accurate}} and {{Broadly Parametrized Self-Consistent Tight-Binding Quantum Chemical Method}} with {{Multipole Electrostatics}} and {{Density-Dependent Dispersion Contributions}}},
  author = {Bannwarth, Christoph and Ehlert, Sebastian and Grimme, Stefan},
  year = 2019,
  month = mar,
  journal = {J. Chem. Theory Comput.},
  volume = {15},
  number = {3},
  pages = {1652--1671},
  issn = {1549-9618, 1549-9626},
  doi = {10.1021/acs.jctc.8b01176},
  urldate = {2026-04-20},
  copyright = {http://pubs.acs.org/page/policy/authorchoice\_termsofuse.html},
  langid = {english}
}

@article{beebeSimplificationsGenerationTransformation1977,
  title = {Simplifications in the Generation and Transformation of Two-electron Integrals in Molecular Calculations},
  author = {Beebe, Nelson H. F. and Linderberg, Jan},
  year = 1977,
  month = oct,
  journal = {Int. J. Quantum Chem.},
  volume = {12},
  number = {4},
  pages = {683--705},
  issn = {0020-7608, 1097-461X},
  doi = {10.1002/qua.560120408},
  urldate = {2026-01-21},
  copyright = {http://onlinelibrary.wiley.com/termsAndConditions\#vor},
  langid = {english}
}

@article{bungartzSparseGrids2004,
  title = {Sparse Grids},
  author = {Bungartz, Hans-Joachim and Griebel, Michael},
  year = 2004,
  journal = {Acta Numer.},
  volume = {13},
  pages = {147--269},
  issn = {0962-4929, 1474-0508},
  doi = {10.1017/S0962492904000182},
  urldate = {2026-02-23},
  copyright = {https://www.cambridge.org/core/terms},
  langid = {english}
}

@book{chenLanczosAlgorithmMatrix2024,
  title = {The {{Lanczos}} Algorithm for Matrix Functions: A Handbook for Scientists},
  author = {Chen, Tyler},
  year = 2024,
  doi = {10.48550/arXiv.2410.11090},
  langid = {english}
}

@article{chenRandomlyPivotedCholesky2025,
  title = {Randomly Pivoted {{Cholesky}}: {{Practical}} Approximation of a Kernel Matrix with Few Entry Evaluations},
  shorttitle = {Randomly Pivoted {{Cholesky}}},
  author = {Chen, Yifan and Epperly, Ethan N. and Tropp, Joel A. and Webber, Robert J.},
  year = 2025,
  journal = {Communications on Pure and Applied Mathematics},
  volume = {78},
  number = {5},
  pages = {995--1041},
  issn = {1097-0312},
  doi = {10.1002/cpa.22234},
  urldate = {2025-08-15},
  langid = {english}
}

@misc{christiansenMidasCppMolecularInteractions2025,
  title = {{{MidasCpp}} ({{Molecular Interactions}}, {{Dynamics}} and {{Simulation Chemistry Program Package}})},
  author = {Christiansen, Ove and Artiukhin, Denis G. and Bader, Frederik and Godtliebsen, Ian Heide and Gras, Eduard Matito and Gy{\H o}rffy, Werner and Hansen, Mikkel Bo and Hansen, Mads B{\o}ttger and H{\o}jlund, Mads Greisen and H{\o}yer, Nicolai Machholdt and Jensen, Rasmus Berg and Jensen, Andreas Buchgraitz and Klinting, Emil Lund and Kongsted, Jacob and K{\"o}nig, Carolin and Losilla, Sergio Alberto and Madsen, Diana and Madsen, Niels Kristian and Majland, Marco and Monrad, Kasper and {Lykke-M{\o}ller}, August Smart and Schmitz, Gunnar and Seidler, Peter and Sneskov, Kristian and Sparta, Manuel and Thomsen, Bo and Thomsen, Sebastian Riis and Toffoli, Daniele and Zoccante, Alberto},
  year = 2025,
  month = oct,
  address = {Aarhus University}
}

@article{christiansenSecondQuantizationFormulation2004,
  title = {A Second Quantization Formulation of Multimode Dynamics},
  author = {Christiansen, Ove},
  year = 2004,
  month = feb,
  journal = {J. Chem. Phys.},
  volume = {120},
  number = {5},
  pages = {2140--2148},
  issn = {0021-9606, 1089-7690},
  doi = {10.1063/1.1637578},
  urldate = {2022-05-17},
  langid = {english}
}

@article{durrandeAdditiveCovarianceKernels2012,
  title = {Additive Covariance Kernels for High-Dimensional Gaussian Process Modeling},
  author = {Durrande, Nicolas and Ginsbourger, David and Carraro, Laurent},
  year = 2012,
  month = apr,
  journal = {Annales de la facult\'e des sciences de Toulouse Math\'ematiques},
  volume = {21},
  number = {3},
  pages = {481--499},
  publisher = {Universit\'e Paul Sabatier, Toulouse},
  doi = {10.48550/arXiv.1111.6233},
  langid = {english}
}

@inproceedings{duvenaudAdditiveGaussianProcesses2011,
  title = {Additive {{Gaussian Processes}}},
  booktitle = {Advances in {{Neural Information Processing Systems}}},
  author = {Duvenaud, David K and Nickisch, Hannes and Rasmussen, Carl E},
  year = 2011,
  volume = {24},
  langid = {english}
}

@article{fengImplementationAnalyticGradients2019,
  title = {Implementation of Analytic Gradients for {{CCSD}} and {{EOM-CCSD}} Using {{Cholesky}} Decomposition of the Electron-Repulsion Integrals and Their Derivatives: {{Theory}} and Benchmarks},
  shorttitle = {Implementation of Analytic Gradients for {{CCSD}} and {{EOM-CCSD}} Using {{Cholesky}} Decomposition of the Electron-Repulsion Integrals and Their Derivatives},
  author = {Feng, Xintian and Epifanovsky, Evgeny and Gauss, J{\"u}rgen and Krylov, Anna I.},
  year = 2019,
  month = jul,
  journal = {The Journal of Chemical Physics},
  volume = {151},
  number = {1},
  pages = {014110},
  issn = {0021-9606, 1089-7690},
  doi = {10.1063/1.5100022},
  urldate = {2026-02-09},
  langid = {english}
}

@inproceedings{flaxmanFastKroneckerInference2015,
  title = {Fast {{Kronecker Inference}} in {{Gaussian Processes}} with Non-{{Gaussian Likelihoods}}},
  booktitle = {Proceedings of the 32nd {{International Conference}} on {{Machine Learning}}},
  author = {Flaxman, Seth and Wilson, Andrew Gordon and Neill, Daniel B and Nickisch, Hannes and Smola, Alexander J},
  year = 2015,
  langid = {english}
}

@inproceedings{gardnerGPyTorchBlackboxMatrixMatrix2018,
  title = {{{GPyTorch}}: {{Blackbox Matrix-Matrix Gaussian Process Inference}} with {{GPU Acceleration}}},
  booktitle = {Advances in {{Neural Information Processing Systems}} 31},
  author = {Gardner, Jacob R. and Pleiss, Geoff and Bindel, David and Weinberger, Kilian Q. and Wilson, Andrew Gordon},
  year = 2018,
  eprint = {1809.11165},
  primaryclass = {cs, stat},
  urldate = {2023-02-03},
  archiveprefix = {arXiv}
}

@inproceedings{gardnerProductKernelInterpolation2018,
  title = {Product {{Kernel Interpolation}} for {{Scalable Gaussian Processes}}},
  booktitle = {Proceedings of the 21st {{International Conference}} on {{Artificial Intelligence}} and {{Statistics}}},
  author = {Gardner, Jacob and Pleiss, Geoff and Wu, Ruihan and Weinberger, Kilian and Wilson, Andrew},
  year = 2018,
  month = mar,
  pages = {1407--1416},
  publisher = {PMLR},
  issn = {2640-3498},
  urldate = {2023-02-03},
  langid = {english}
}

@article{gilboaScalingMultidimensionalInference2015,
  title = {Scaling {{Multidimensional Inference}} for {{Structured Gaussian Processes}}},
  author = {Gilboa, Elad and Saatci, Yunus and Cunningham, John P.},
  year = 2015,
  month = feb,
  journal = {IEEE Trans. Pattern Anal. Mach. Intell.},
  volume = {37},
  number = {2},
  pages = {424--436},
  issn = {0162-8828, 2160-9292},
  doi = {10.1109/TPAMI.2013.192},
  urldate = {2026-01-19},
  copyright = {https://ieeexplore.ieee.org/Xplorehelp/downloads/license-information/IEEE.html}
}

@article{girardFastMonteCarloCrossvalidation1989,
  title = {A Fast `{{Monte-Carlo}} Cross-Validation' Procedure for Large Least Squares Problems with Noisy Data},
  shorttitle = {A Fast ?},
  author = {Girard, A.},
  year = 1989,
  month = jan,
  journal = {Numer. Math.},
  volume = {56},
  number = {1},
  pages = {1--23},
  issn = {0029-599X, 0945-3245},
  doi = {10.1007/BF01395775},
  urldate = {2026-03-31},
  copyright = {http://www.springer.com/tdm},
  langid = {english}
}

@article{harbrechtLowrankApproximationPivoted2012,
  title = {On the Low-Rank Approximation by the Pivoted {{Cholesky}} Decomposition},
  author = {Harbrecht, Helmut and Peters, Michael and Schneider, Reinhold},
  year = 2012,
  month = apr,
  journal = {Appl. Numer. Math.},
  volume = {62},
  number = {4},
  pages = {428--440},
  issn = {01689274},
  doi = {10.1016/j.apnum.2011.10.001},
  urldate = {2023-02-06},
  langid = {english}
}

@inproceedings{hensmanGaussianProcessesBig2013,
  title = {Gaussian {{Processes}} for {{Big Data}}},
  booktitle = {Proceedings of the {{Twenty-Ninth Conference}} on {{Uncertainty}} in {{Artificial Intelligence}}},
  author = {Hensman, James and Fusi, Nicolo and Lawrence, Neil D.},
  year = 2013,
  pages = {282--290},
  langid = {english}
}

@article{hestenesMethodsConjugateGradients1952,
  title = {Methods of Conjugate Gradients for Solving Linear Systems},
  author = {Hestenes, M.R. and Stiefel, E.},
  year = 1952,
  month = dec,
  journal = {J. Res. Natl. Bur. Stan.},
  volume = {49},
  number = {6},
  pages = {409},
  issn = {0091-0635},
  doi = {10.6028/jres.049.044},
  urldate = {2026-03-31},
  langid = {english}
}

@inproceedings{holzmullerFastSparseGrid2021,
  title = {Fast {{Sparse Grid Operations Using}} the {{Unidirectional Principle}}: {{A Generalized}} and {{Unified Framework}}},
  shorttitle = {Fast {{Sparse Grid Operations Using}} the {{Unidirectional Principle}}},
  booktitle = {Sparse {{Grids}} and {{Applications}} -- {{Munich}} 2018},
  author = {Holzm{\"u}ller, David and Pfl{\"u}ger, Dirk},
  editor = {Bungartz, Hans-Joachim and Garcke, Jochen and Pfl{\"u}ger, Dirk},
  year = 2021,
  pages = {69--100},
  publisher = {Springer International Publishing},
  doi = {10.1007/978-3-030-81362-8_4},
  isbn = {978-3-030-81362-8},
  langid = {english}
}

@article{hutchinsonStochasticEstimatorTrace1989,
  title = {A {{Stochastic Estimator}} of the {{Trace}} of the {{Influence Matrix}} for {{Laplacian Smoothing Splines}}},
  author = {Hutchinson, M.F.},
  year = 1989,
  month = jan,
  journal = {Communications in Statistics - Simulation and Computation},
  volume = {18},
  number = {3},
  pages = {1059--1076},
  issn = {0361-0918, 1532-4141},
  doi = {10.1080/03610918908812806},
  urldate = {2026-03-31},
  langid = {english}
}

@inproceedings{hvarfnerVanillaBayesianOptimization2024,
  title = {Vanilla {{Bayesian Optimization Performs Great}} in {{High Dimensions}}},
  booktitle = {Proceedings of the 41st {{International Conference}} on {{Machine Learning}}},
  author = {Hvarfner, Carl and Hellsten, Erik Orm and Nardi, Luigi},
  year = 2024,
  month = dec,
  eprint = {2402.02229},
  primaryclass = {cs},
  publisher = {arXiv},
  doi = {10.48550/arXiv.2402.02229},
  urldate = {2026-04-22},
  archiveprefix = {arXiv}
}

@misc{ishidaHierarchicalAdditiveInteraction2025,
  title = {Hierarchical Additive Interaction Modelling with {{Gaussian}} Process Prior and Its Efficient Implementation for Multidimensional Grid Data},
  author = {Ishida, Sahoko and Panero, Francesca and Bergsma, Wicher},
  year = 2025,
  month = oct,
  number = {arXiv:2305.07073},
  eprint = {2305.07073},
  primaryclass = {stat},
  publisher = {arXiv},
  doi = {10.48550/arXiv.2305.07073},
  urldate = {2025-12-03},
  archiveprefix = {arXiv},
  langid = {english}
}

@inproceedings{kingmaAdamMethodStochastic2017,
  title = {Adam: {{A Method}} for {{Stochastic Optimization}}},
  shorttitle = {Adam},
  booktitle = {Proceedings of the 3rd {{International Conference}} on {{Learning Representations}}},
  author = {Kingma, Diederik P. and Ba, Jimmy},
  year = 2015,
  eprint = {1412.6980},
  primaryclass = {cs},
  doi = {10.48550/arXiv.1412.6980},
  urldate = {2026-04-20},
  archiveprefix = {arXiv}
}

@article{koldaTensorDecompositionsApplications2009,
  title = {Tensor {{Decompositions}} and {{Applications}}},
  author = {Kolda, Tamara G. and Bader, Brett W.},
  year = 2009,
  month = aug,
  journal = {SIAM Rev.},
  volume = {51},
  number = {3},
  pages = {455--500},
  issn = {0036-1445, 1095-7200},
  doi = {10.1137/07070111X},
  urldate = {2023-12-18},
  langid = {english}
}

@article{lanczosIterationMethodSolution1950,
  title = {An Iteration Method for the Solution of the Eigenvalue Problem of Linear Differential and Integral Operators},
  author = {Lanczos, C.},
  year = 1950,
  month = oct,
  journal = {J. Res. Natl. Bur. Stan.},
  volume = {45},
  number = {4},
  pages = {255},
  issn = {0091-0635},
  doi = {10.6028/jres.045.026},
  urldate = {2026-03-05},
  langid = {english}
}

@inproceedings{luAdditiveGaussianProcesses2022,
  title = {Additive {{Gaussian Processes Revisited}}},
  booktitle = {Proceedings of the 39th {{International Conference}} on {{Machine Learning}}},
  author = {Lu, Xiaoyu and Boukouvalas, Alexis and Hensman, James},
  year = 2022,
  langid = {english}
}

@article{meurantLanczosConjugateGradient2006,
  title = {The {{Lanczos}} and Conjugate Gradient Algorithms in Finite Precision Arithmetic},
  author = {Meurant, G{\'e}rard and Strako{\v s}, Zden{\v e}k},
  year = 2006,
  month = may,
  journal = {Acta Numerica},
  volume = {15},
  pages = {471--542},
  issn = {0962-4929, 1474-0508},
  doi = {10.1017/S096249290626001X},
  urldate = {2025-11-27},
  copyright = {https://www.cambridge.org/core/terms},
  langid = {english}
}

@misc{muscoStabilityLanczosMethod2024,
  title = {Stability of the {{Lanczos Method}} for {{Matrix Function Approximation}}},
  author = {Musco, Cameron and Musco, Christopher and Sidford, Aaron},
  year = 2024,
  month = nov,
  number = {arXiv:1708.07788},
  eprint = {1708.07788},
  primaryclass = {cs},
  publisher = {arXiv},
  doi = {10.48550/arXiv.1708.07788},
  urldate = {2025-11-28},
  archiveprefix = {arXiv},
  langid = {english}
}

@inproceedings{pleissConstantTimePredictiveDistributions2018a,
  title = {Constant-{{Time Predictive Distributions}} for {{Gaussian Processes}}},
  booktitle = {Proceedings of the 35th {{International Conference}} on {{Machine Learning}}},
  author = {Pleiss, Geoff and Gardner, Jacob R and Weinberger, Kilian Q and Wilson, Andrew Gordon},
  year = 2018,
  langid = {english}
}

@book{saadIterativeMethodsSparse2007,
  title = {Iterative Methods for Sparse Linear Systems},
  author = {Saad, Youcef},
  year = 2007,
  edition = {2. ed.},
  publisher = {{SIAM, Society for Industrial and Applied Mathematics}},
  address = {Philadelphia, PA},
  isbn = {978-0-89871-534-7},
  langid = {english}
}

@phdthesis{saatciScalableInferenceStructured2012,
  title = {Scalable {{Inference}} for {{Structured Gaussian Process Models}}},
  author = {Saatci, Yunus},
  year = 2011,
  langid = {english},
  school = {University of Cambridge}
}

@article{seidlerAutomaticDerivationEvaluation2009,
  title = {Automatic Derivation and Evaluation of Vibrational Coupled Cluster Theory Equations},
  author = {Seidler, Peter and Christiansen, Ove},
  year = 2009,
  month = dec,
  journal = {J. Chem. Phys.},
  volume = {131},
  number = {23},
  pages = {234109},
  issn = {0021-9606, 1089-7690},
  doi = {10.1063/1.3272796},
  urldate = {2023-05-04},
  langid = {english}
}

@article{seidlerFastComputationsCorrelated2008,
  title = {Towards Fast Computations of Correlated Vibrational Wave Functions: {{Vibrational}} Coupled Cluster Response Excitation Energies at the Two-Mode Coupling Level},
  shorttitle = {Towards Fast Computations of Correlated Vibrational Wave Functions},
  author = {Seidler, Peter and Hansen, Mikkel Bo and Christiansen, Ove},
  year = 2008,
  month = apr,
  journal = {J. Chem. Phys.},
  volume = {128},
  number = {15},
  pages = {154113},
  issn = {0021-9606, 1089-7690},
  doi = {10.1063/1.2907860},
  urldate = {2024-06-10},
  langid = {english}
}

@article{simmonsComputingVibrationalSpectra2023,
  title = {Computing Vibrational Spectra Using a New Collocation Method with a Pruned Basis and More Points than Basis Functions: {{Avoiding}} Quadrature},
  shorttitle = {Computing Vibrational Spectra Using a New Collocation Method with a Pruned Basis and More Points than Basis Functions},
  author = {Simmons, Jesse and Carrington, Tucker},
  year = 2023,
  month = apr,
  journal = {J. Chem. Phys.},
  volume = {158},
  number = {14},
  pages = {144115},
  issn = {0021-9606, 1089-7690},
  doi = {10.1063/5.0146703},
  urldate = {2026-02-23},
  langid = {english}
}

@article{smithDifferentiationCholeskyAlgorithm1995,
  title = {Differentiation of the {{Cholesky Algorithm}}},
  author = {Smith, S. P.},
  year = 1995,
  month = jun,
  journal = {Journal of Computational and Graphical Statistics},
  volume = {4},
  number = {2},
  pages = {134--147},
  issn = {1061-8600, 1537-2715},
  doi = {10.1080/10618600.1995.10474671},
  urldate = {2026-03-31},
  langid = {english}
}

@inproceedings{snelsonSparseGaussianProcesses2005,
  title = {Sparse {{Gaussian Processes}} Using {{Pseudo-inputs}}},
  booktitle = {Advances in {{Neural Information Processing Systems}}},
  author = {Snelson, Edward and Ghahramani, Zoubin},
  year = 2005,
  volume = {18},
  langid = {english}
}

@inproceedings{titsiasVariationalLearningInducing2009,
  title = {Variational Learning of Inducing Variables in Sparse Gaussian Processes},
  booktitle = {Proceedings of the Twelfth International Conference on Artificial Intelligence and Statistics},
  author = {Titsias, Michalis},
  year = 2009,
  series = {Proceedings of Machine Learning Research},
  volume = {5},
  publisher = {PMLR}
}

@inproceedings{wengerPreconditioningScalableGaussian2022,
  title = {Preconditioning for {{Scalable Gaussian Process Hyperparameter Optimization}}},
  booktitle = {Proceedings of the 39th {{International Conference}} on {{Machine Learning}}},
  author = {Wenger, Jonathan and Pleiss, Geoff and Hennig, Philipp and Cunningham, John P and Gardner, Jacob R},
  year = 2022,
  langid = {english}
}

@inproceedings{wilsonFastKernelLearning2014,
  title = {Fast {{Kernel Learning}} for {{Multidimensional Pattern Extrapolation}}},
  booktitle = {Advances in {{Neural Information Processing Systems}}},
  author = {Wilson, Andrew Gordon and Nehorai, Arye and Gilboa, Elad and Cunningham, John P},
  year = 2014,
  volume = {27},
  langid = {english}
}

@inproceedings{wilsonKernelInterpolationScalable2015,
  title = {Kernel {{Interpolation}} for {{Scalable Structured Gaussian Processes}} ({{KISS-GP}})},
  booktitle = {Proceedings of the 32nd {{International Conference}} on {{Machine Learning}}},
  author = {Wilson, Andrew Gordon and Nickisch, Hannes},
  year = 2015,
  volume = {37},
  langid = {english}
}

@article{wodraszkaPrunedCollocationbasedMulticonfiguration2019,
  title = {A Pruned Collocation-Based Multiconfiguration Time-Dependent {{Hartree}} Approach Using a {{Smolyak}} Grid for Solving the {{Schr\"odinger}} Equation with a General Potential Energy Surface},
  author = {Wodraszka, Robert and Carrington, Tucker},
  year = 2019,
  month = apr,
  journal = {J. Chem. Phys.},
  volume = {150},
  number = {15},
  pages = {154108},
  issn = {0021-9606, 1089-7690},
  doi = {10.1063/1.5093317},
  urldate = {2024-05-30},
  langid = {english}
}

@article{zakUsingCollocationHierarchical2019,
  title = {Using Collocation and a Hierarchical Basis to Solve the Vibrational {{Schr\"odinger}} Equation},
  author = {Zak, Emil J. and Carrington, Tucker},
  year = 2019,
  month = may,
  journal = {The Journal of Chemical Physics},
  volume = {150},
  number = {20},
  pages = {204108},
  issn = {0021-9606, 1089-7690},
  doi = {10.1063/1.5096169},
  urldate = {2026-01-20},
  langid = {english}
}
